\documentclass[a4paper,11pt]{article}
\usepackage[margin=1in]{geometry}
\usepackage[utf8]{inputenc}
\usepackage{dsfont}
\usepackage{natbib}
\usepackage{adjustbox}

\usepackage{microtype}
\usepackage{graphicx}
\usepackage{subcaption}
\usepackage{booktabs} %
\usepackage{pgfplots}
\pgfplotsset{compat=1.18}
\usepackage{subcaption}
\usepackage{tikz}
\usepackage{pgfplots}
\pgfplotsset{compat=1.18}
\usepackage{subcaption}
\usepackage{enumitem}

\pgfplotsset{table/search path={Experiments Data/Verify_bound}}

\usepackage{hyperref}

\usepackage{mathtools}

\usepackage[utf8]{inputenc} %
\usepackage[T1]{fontenc}    %
\usepackage{hyperref}       %
\usepackage{url,soul}            %
\usepackage{booktabs}       %
\usepackage{amsfonts}       %
\usepackage{nicefrac}       %
\usepackage{microtype}      %
\usepackage{xcolor}         %
\usepackage{amsmath, amssymb, amsthm}
\usepackage{algorithm}
\usepackage{algorithmic}
\usepackage{graphicx}
\usepackage{dsfont}
\usepackage{bm}
\usepackage{csquotes}
\usepackage{multirow}

\newcommand{\Ltrain}{\mathcal{L}_\text{T}}
\newcommand{\Lval}{\mathcal{L}_\text{V}}

\usepackage[capitalize,noabbrev]{cleveref}

\theoremstyle{plain}
\newtheorem{theorem}{Theorem}[section]
\newtheorem{proposition}[theorem]{Proposition}
\newtheorem{lemma}[theorem]{Lemma}

\newtheorem{fact}[theorem]{Fact}
\theoremstyle{definition}

\newtheorem{assumption}[theorem]{Assumption}
\theoremstyle{remark}
\newtheorem{remark}[theorem]{Remark}

\title{{\bf Less is More: Convergence Benefits of Fewer Data Weight Updates over Longer Horizon}}
\author{
  Rudrajit Das\textsuperscript{1}, Neel Patel\textsuperscript{2}, Meisam Razaviyayn\textsuperscript{1,3}, and Vahab Mirrokni\textsuperscript{1} \\
  \textsuperscript{1}Google Research, \textsuperscript{2}EPFL, 
  \textsuperscript{3}University of Southern California \\
  \texttt{\{dasrudrajit, razaviyayn, mirrokni\}@google.com, neel.patel@epfl.ch}
}

\date{}

\begin{document}

\maketitle 

\vspace{0.3cm}

\begin{abstract}
\noindent Data mixing—the strategic reweighting of training domains—is a critical component in training robust machine learning models. This problem is naturally formulated as a bilevel optimization task, where the outer loop optimizes domain weights to minimize validation loss, and the inner loop optimizes model parameters to minimize the weighted training loss. Classical bilevel optimization relies on hypergradients, which theoretically require the inner optimization to reach convergence. However, due to computational constraints, state-of-the-art methods use a finite, often small, number of inner update steps before updating the weights. The theoretical implications of this approximation are not well understood. In this work, we rigorously analyze the convergence behavior of data mixing with a finite number of inner steps $T$. We prove that the ``greedy'' practical approach of using $T=1$ can fail even in a simple quadratic example. Under a fixed parameter update budget $N$ and assuming the per-domain losses are strongly convex, we show that the optimal $T$ scales as $\Theta(\log N)$ (resp., $\Theta(\sqrt{N \log N})$) for the data mixing problem with access to full (resp., stochastic) gradients. We complement our theoretical results with proof-of-concept experiments. 
\end{abstract}

\section{Introduction}
The performance of modern machine learning models depends heavily on the composition of their (pre-)training data \cite{sambasivan2021everyone, whang2023data}. In particular, as modern foundation models are trained on mixtures of diverse domains (ranging from high-quality code and textbooks to noisy web scrapes) determining the optimal contribution of each domain, known as \textit{data mixing}, has become a crucial challenge \cite{brown2020language, touvron2023llama, raffel2020exploring, gao2020pile}. Historically, data mixtures were determined via manual heuristics. For instance, GPT-3 and PaLM utilized manual upsampling of \enquote{high-quality} sources \cite{chowdhery2023palm,brown2020language}. More recently, this has been formalized into data mixing laws \cite{ye2024data,liu2025regmix}.

The data mixing problem is naturally cast as a \ul{bilevel optimization} problem \cite{sinha2017review}. The outer loop seeks to find a weighting vector $\bm{w}$ for the training domains that minimizes a validation loss $\mathcal{L}_\text{V}$ (representing the downstream %
performance measured using a high-quality reference set), while the inner loop minimizes the weighted training loss $\mathcal{L}_\text{T}$ with respect to the model parameters $\bm{\theta}$ 
\cite{franceschi2018bilevel, fan2023doge}.

While mathematically elegant, the application of bilevel optimization to data mixing for LLMs faces a severe computational bottleneck: calculating the exact gradient with respect to $\bm{w}$---the \textit{hypergradient}---requires differentiating through the entire training trajectory of $\bm{\theta}$ (which evolves as a function of $\bm{w}$) {until convergence}, but this is intractable for large models. To make this feasible, %
several online mixing methods such as DoGE \cite{fan2023doge} and PIKE \cite{li2025pike}, as well as earlier meta-learning approaches \citep{ren2018learning, shu2019meta}, adopt a ``greedy" heuristic. They perform a single step or very few steps of model parameter updates before updating the data weights, %
prioritizing computational efficiency over hypergradient accuracy. We refer to the number of parameter updates per weight update as the \textit{lookahead horizon}, and denote it by $T$.

This reliance on greedy approximations ignites a critical theoretical controversy centered on \textit{short-horizon bias} \citep{wu2018understanding}. Seminal works in meta-learning suggest that truncating the inner optimization loop to a negligible horizon introduces systematic errors, often leading the meta-learner to adopt myopic strategies—such as artificially suppressing learning rates—to minimize immediate volatility rather than maximizing long-term generalization \citep{shaban2019truncated}. Despite the empirical adoption of greedy mixing methods, the theoretical implications of this bias in the simplex-constrained domain of data mixing remain underexplored. Does the prevalent $T=\Theta(1)$ heuristic %
find the optimal mixture, or does it succumb to truncation bias?

In this work, we rigorously analyze the convergence behavior of data mixing %
as a function of the lookahead horizon $T$. We challenge the prevailing ``greedy'' heuristic, demonstrating that the strategy of frequent weight updates with short horizons is systematically biased. We establish a ``\textit{less is more}'' principle: under a fixed total budget of parameter updates, it is provably better to update data weights \textit{less frequently} but with a \textit{longer lookahead horizon}.%
\\
\\
Our main \textbf{contributions} are:
\\
\noindent  \ul{``Greedy'' Approach Fails}: We theoretically show that the greedy approach of using $T=1$ in data mixing is not merely noisy but systematically biased. In a simple quadratic example, it can cause data weights to converge to sub-optimal values, failing to identify the correct mixture (\Cref{lin:reg-example}). 
\\
\\
\noindent\ul{Optimal Horizon Scaling}: We derive the optimal scaling laws for the lookahead horizon $T$ (i.e., number of parameter updates per weight update) under a fixed parameter update budget $N = K T$, where $K$ is the total number of weight updates. Specifically, in the strongly convex case, we prove that the optimal $T$ scales as $\Theta(\log N)$ and $\Theta(\sqrt{N \log N})$ in the deterministic and stochastic settings, respectively, contrasting sharply with the $\Theta(1)$ horizon commonly used in practice (\Cref{convg-results}). {We corroborate our theoretical findings with proof-of-concept experiments which show that moderate values of $T$ (sublinear in $N$) lead to the best performance in practice (\Cref{sec:experiment}).}

\section{Related Work}
\label{sec:related_work}

\textbf{Static Heuristics and Scaling Laws.}
Historically, data mixtures were determined via manual heuristics, such as upsampling high-quality domains such as Wikipedia \citep{chowdhery2023palm}. Recent work has formalized this into data mixing laws \citep{ye2024data}, which fit power-law regression models on small-scale training runs to predict the loss of larger models. Methods like UniMax \citep{chung2024unimax} propose maximizing diversity under epoching constraints. While principled, these methods are static; they freeze the mixture weights prior to training and cannot adapt to the model's evolving curriculum needs \citep{bengio2009curriculum}.
\\
\\
\textbf{Offline Proxy-Based Optimization.}
To introduce data-driven adaptability without the cost of online updates, several methods utilize proxy models \citep{ren2018learning, shu2019meta}. DoReMi \citep{xie2023doremi} formulates data mixing as a distributionally robust optimization (DRO) problem \citep{sagawa2020distributionally, kuhn2025distributionally}, training a small proxy model to minimize worst-case excess loss relative to a reference model. RegMix \citep{liu2024regmix} treats mixture selection as a regression problem, fitting high-degree polynomials to map mixture weights to validation loss via extensive random sampling. While effective, these methods incur significant pre-computation overhead and suffer from a gap between proxy and target model dynamics.
\\
\\
\textbf{Online Bilevel Optimization.} The most adaptive paradigm integrates weight optimization directly into the training loop. Early ``learning to weight'' approaches used auxiliary networks to predict sample weights \citep{jiang2018mentornet, shu2019meta}, but these struggle to scale to LLMs. Current state-of-the-art methods focus on efficient domain-level reweighting. DoGE \citep{fan2023doge} estimates domain weights by computing a ``generalization gain'' via the dot product of validation and training gradients. PIKE \citep{li2025pike} extends this to multi-task learning by minimizing gradient conflicts. TIKMIX \citep{wang2025tikmix} and TANDEM \citep{wang2025tandem} represent the newest wave of dynamic mixing. Crucially, to maintain efficiency, these methods %
rely on the ``greedy'' approach ($T=1$), updating weights based on immediate gradient feedback.
\\
\\
\textbf{Theoretical Foundations: Truncation Bias vs. Unrolling.}
The theoretical validity of truncating the inner optimization loop is a subject of intense scrutiny. \citet{wu2018understanding} identified that short-horizon unrolling introduces systematic bias, while \citet{shaban2019truncated} proved that exact convergence requires the horizon to scale logarithmically with precision. Conversely, single-loop methods \citep{ji2021bilevel} argue convergence is possible under strict time-scale separation, though this %
condition is rarely met in standard pre-training.
\\
\\
Alternative approaches include Implicit Differentiation \citep{lorraine2020optimizing, blondel2022efficient}, which approximates the inverse Hessian via Neumann series \citep{grazzi2020iteration}. However, recent analysis on the ``curse of unrolling'' \citep{scieur2022curse} suggests that unrolled differentiation can suffer from numerical instability over long horizons. Our work unifies these perspectives for data mixing, providing a rigorous analysis of the ``sweet spot'' for $T$ that minimizes both optimization error and truncation bias.

\section{Notation and Preliminaries}
We denote vectors and matrices by bold font symbols. 
For a natural number $n \in \mathbb{N}$, we denote the set $\{1,\ldots,n\}$ by $[n]$. For a vector $\bm{v}$, we denote its $i^\text{th}$ coordinate by $v^{(i)}$ and its $\ell_p$ norm by $\|\bm{v}\|_p$. A vector $\bm{v} = [v^{(1)}, \ldots, v^{(n)}]$ is said to belong to the $n$-dimensional simplex $\bm{\Delta}^n$ if $\sum_{i=1}^n v^{(i)} = 1$ and $v^{(i)} \geq 0$ $\forall$ $i \in [n]$. For a matrix $\bm{M}$, we denote its operator norm by $\|\bm{M}\|_\text{op}$. A function $f(\bm{x}): \mathbb{R}^d \xrightarrow{} \mathbb{R}$ is said to be $G$-Lipschitz if $\sup_{\bm{x} \in \mathbb{R}^d}\|\nabla f(\bm{x})\|_2 \leq G$, $L$-smooth if $\|\nabla f(\bm{y}) - \nabla f(\bm{x})\|_2 \leq L \|\bm{y} - \bm{x}\|_2$ for all $\bm{x}, \bm{y} \in \mathbb{R}^d$, and $\mu$-strongly-convex if $f(\bm{y}) \geq f(\bm{x}) + \langle \nabla f(\bm{x}), \bm{y} - \bm{x} \rangle + \frac{\mu}{2}\|\bm{y} - \bm{x}\|_2^2$ for all $\bm{x}, \bm{y} \in \mathbb{R}^d$.

\section{Problem Formulation}
\label{sec:prob-form}
In pre-training large foundation models, we are typically given access to $m$ diverse training domains (e.g., Wikipedia, GitHub, CommonCrawl), each associated with a loss function $\ell_i$ ($i \in [m]$). The goal is not simply to minimize the average loss on these domains, but to minimize the loss on a %
\textbf{validation set} (or ``reference'' distribution) $\mathcal{L}_\text{V}$, which serves as a proxy for downstream capability. To that end, a weighted loss over the training domains can be minimized, where the weights should be chosen so as to minimize the validation loss as much as possible.

This creates a hierarchical dependency: we want to find the mixture weights $\bm{w} \in \bm{\Delta}^m$ that produces a model $\bm{\theta}^*(\bm{w})$ which performs the best on $\mathcal{L}_V$. This is the canonical \textbf{bilevel optimization} formulation \citep{franceschi2018bilevel, grazzi2020iteration}:
\begin{equation}
\label{eq:def}
\begin{split}
    \min_{\bm{w} \in \bm{\Delta}^m}\;  & F(\bm{w}) := \mathcal{L}_\text{V}(\bm{\theta}^*(\bm{w})) \\
     \text{s.t.} \quad &\bm{\theta}^*(\bm{w}) \in \arg\min_{\bm{\theta}} \mathcal{L}_\text{T}(\bm{\theta}, \bm{w}) := \sum_{i=1}^m w^{(i)} \ell_i(\bm{\theta}).
\end{split}
\end{equation}
{Note that for $F(\bm{w})$ to be well-defined, we need to uniquely specify $\bm{\theta}^*(\bm{w})$ as one particular optimum of $\mathcal{L}_\text{T}(\bm{\theta}, \bm{w})$ if it has multiple optima. For example, this could be %
the optimum closest to the initialization. %
}

\noindent \textbf{The hypergradient challenge.} To optimize $\bm{w}$ via first-order methods, we need the \textit{hypergradient} $\nabla F(\bm{w})$. Using the Implicit Function Theorem (IFT), the $j^\text{th}$ ($j \in [m]$) coordinate of this is given by:
\begin{equation}
    \label{eq:ift_hypergradient}
    \nabla F(\bm{w})^{(j)} = - \Big \langle \nabla_{\bm{\theta}} \mathcal{L}_\text{V} \big(\bm{\theta}^{\ast}\big),  \nabla_{\bm{\theta}}^2 \mathcal{L}_\text{T} \big(\bm{\theta}^{\ast}, \bm{w}\big)^{-1} \nabla \ell_j\big(\bm{\theta}^{\ast}\big) \Big \rangle,
\end{equation}
where for brevity $\bm{\theta}^{\ast}$ denotes $\bm{\theta}^{\ast}(\bm{w})$.\footnote{See \Cref{lem2} for a quick proof of this result.}
Unfortunately, computing the exact quantity in Eq.~\ref{eq:ift_hypergradient} is intractable for two reasons: (1) for every single update of $\bm{w}$, it requires training $\bm{\theta}$ to full convergence ($\bm{\theta}^*$), and (2) it requires inverting the massive Hessian matrix at $\bm{\theta}^*$. 
So we focus on a more practical \textit{unrolled algorithm}, where the inner optimization w.r.t $\bm{\theta}$ is approximated by a few steps, say $T$, of gradient descent, which is then used to approximate the hypergradient.\footnote{Note that $T \to \infty$ would lead to convergence to $\bm{\theta}^{\ast}$.} We formally describe this {unrolled algorithm} in \Cref{alg:1-main}. It can be viewed as a coupled system where each ``round'' consists of $T$ inner parameter updates followed by one outer weight update. In \Cref{alg:1-main}, $\bm{\theta}_{k,t}$ denotes the parameter at step $t$ of round $k$, while $\bm{w}_k$ denotes the weight vector in round $k$. The inner parameter update rule
\begin{equation}
    \label{eq:inner_update}
    \bm{\theta}_{k, t+1} = \bm{\theta}_{k, t} - \eta \nabla_\theta \mathcal{L}_\text{T}(\bm{\theta}_{k, t}, \bm{w}_k),
\end{equation}
is standard gradient descent with constant step-size $\eta$. After $T$ parameter update steps, we compute the gradient of the validation loss at $\bm{\theta}_{k,T}$ with respect to the current weights $\bm{w}_k$ (i.e., $\frac{\partial \mathcal{L}_\text{V}(\bm{\theta}_{k,T})}{\partial \bm{w}_k}$) to update them. Specifically, for updating the weights, we perform mirror-descent \cite{bubeck2015convex} with the negative entropy function as the Bregman divergence and constant step-size $\alpha$. 

\begin{algorithm}[t]
	\caption{Data Mixing}
	\label{alg:1-main}
	\begin{algorithmic}
	    \STATE {\bfseries Input:} Initialization for model parameters $\bm{\theta}_0$, initialization for mixture weights $\bm{w}_0$, parameter update step-size $\eta$, weight update step-size $\alpha$, number of rounds $K$, number of parameter updates per round / \enquote{horizon} $T$. 
        \vspace{0.1 cm}
        \STATE Set $\bm{\theta}_{0, 0} = \bm{\theta}_0$.
        \vspace{0.1 cm}
        \FOR{$k \in \{0,\ldots,K-1\}$}
        \vspace{0.1 cm}
        \FOR{$t \in \{0,\ldots,T-1\}$}
        \vspace{0.1 cm}
        \STATE Update $\bm{\theta}_{k, t+1} = \bm{\theta}_{k, t} - \eta \nabla_\theta \mathcal{L}_\text{T}(\bm{\theta}_{k, t}, \bm{w}_k)$. 
        \\
        // \texttt{Inner parameter update}
        \vspace{0.1 cm}
        \ENDFOR
        \vspace{0.1 cm}
        \STATE For $j \in [m]$, update $w_{k+1}^{(j)} = \frac{w_{k}^{(j)} \exp\big(-\alpha \bar{g}^{(j)}_{k, T}\big)}{\sum_{s=1}^m w_{k}^{(s)} \exp\big(-\alpha \bar{g}^{(s)}_{k, T}\big)}$, with  $\bar{g}^{(j)}_{k, T} := \frac{\partial \mathcal{L}_\text{V}(\bm{\theta}_{k,T})}{\partial {w}^{(j)}_k}$. 
        \\
        // \texttt{Outer weight update}
        \vspace{0.1 cm}
        \STATE Set $\bm{\theta}_{k+1, 0} = \bm{\theta}_{k, T}$.
        \vspace{0.1 cm}
        \ENDFOR 
        \vspace{0.1 cm}
		\STATE \textbf{Return} $\bm{\theta}_{K, 0}$ and $\bm{w}_K$.
	\end{algorithmic}
\end{algorithm}

Note that if $T \to \infty$, then we will get the exact hypergradient in \cref{eq:ift_hypergradient}. Unfortunately, as $T$ increases, keeping track of $\frac{\partial \mathcal{L}_\text{V}(\bm{\theta}_{k,T})}{\partial \bm{w}_k}$ exactly becomes challenging. To see this, note that the $j^\text{th}$ coordinate of $\frac{\partial \mathcal{L}_\text{V}(\bm{\theta}_{k,T})}{\partial \bm{w}_k}$, i.e., 
\begin{equation}
    \label{eq:22-dec30}
    \frac{\partial \mathcal{L}_\text{V}(\bm{\theta}_{k,T})}{\partial w_k^{(j)}} = \Bigg\langle \nabla_{\bm{\theta}} \mathcal{L}_\text{V}(\bm{\theta}_{k,T}), \frac{\partial \bm{\theta}_{k, T}}{\partial w_k^{(j)}}\Bigg \rangle,
\end{equation}
using the chain rule. Now, $\frac{\partial \bm{\theta}_{k, T}}{\partial w_k^{(j)}}$ is not straightforward to compute. The following proposition describes how this quantity can be computed recursively.  
\begin{proposition}
\label{prop:recursion}
For $t \ge 0$, $\frac{\partial \bm{\theta}_{k, t}}{\partial w_k^{(j)}}$ evolves as:
\begin{equation}
    \label{eq:recursion}
    \frac{\partial \bm{\theta}_{k, t}}{\partial w_k^{(j)}} = \Big(\textbf{\textup{I}} - \eta \nabla^2_{\bm{\theta}} \mathcal{L}_\text{T}\big(\bm{\theta}_{k,t-1}, \bm{w}_k\big)\Big)\frac{\partial \bm{\theta}_{k, t-1}}{\partial w_k^{(j)}} - \eta \nabla \ell_j\big(\bm{\theta}_{k, t-1}\big),
\end{equation}
with initial condition $\frac{\partial \bm{\theta}_{k, 0}}{\partial w_k^{(j)}} = \vec{0}$. 
\end{proposition}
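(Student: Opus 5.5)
The plan is to differentiate the inner update rule in \cref{eq:inner_update} with respect to $w_k^{(j)}$, treating $\bm{w}_k$ as the independent variable and $\bm{\theta}_{k,0} = \bm{\theta}_{k-1,T}$ as fixed. Since $\bm{\theta}_{k,0}$ was produced in earlier rounds using $\bm{w}_0,\ldots,\bm{w}_{k-1}$ and is set before $\bm{w}_k$ enters the computation, it does not depend on $\bm{w}_k$ in the computational graph of round $k$. This gives the initial condition $\frac{\partial \bm{\theta}_{k, 0}}{\partial w_k^{(j)}} = \vec{0}$. I will state explicitly that the derivative is this partial derivative within round $k$, i.e., the one used in the outer update of \Cref{alg:1-main}.

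For $t \ge 1$, write the update as $\bm{\theta}_{k,t} = \bm{\theta}_{k,t-1} - \eta \nabla_{\bm{\theta}} \mathcal{L}_\text{T}(\bm{\theta}_{k,t-1}, \bm{w}_k)$. The function $\mathcal{L}_\text{T}$ depends on $w_k^{(j)}$ in two ways: directly through the weight, and indirectly through $\bm{\theta}_{k,t-1}$. Applying the multivariable chain rule gives
\begin{equation*}
\frac{\partial}{\partial w_k^{(j)}} \nabla_{\bm{\theta}} \mathcal{L}_\text{T}(\bm{\theta}_{k,t-1}, \bm{w}_k) = \nabla^2_{\bm{\theta}} \mathcal{L}_\text{T}(\bm{\theta}_{k,t-1}, \bm{w}_k)\frac{\partial \bm{\theta}_{k,t-1}}{\partial w_k^{(j)}} + \frac{\partial \nabla_{\bm{\theta}} \mathcal{L}_\text{T}}{\partial w^{(j)}}(\bm{\theta}_{k,t-1}, \bm{w}_k).
\end{equation*}
Because $\mathcal{L}_\text{T}(\bm{\theta},\bm{w}) = \sum_i w^{(i)} \ell_i(\bm{\theta})$ is linear in $\bm{w}$, the last term is exactly $\nabla \ell_j(\bm{\theta}_{k,t-1})$. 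Substituting this into the differentiated update, and using $\partial \bm{\theta}_{k,t-1}/\partial w_k^{(j)}$ for the identity part, yields \cref{eq:recursion}.

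I would then close with an induction on $t$. It shows that each $\bm{\theta}_{k,t}$ is a differentiable function of $\bm{w}_k$, which justifies every chain-rule step. The only thing this needs is that the $\ell_i$ are twice differentiable, so that the Hessian exists and the gradient map is differentiable.

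The main subtlety is not computational. It is being precise that $\bm{w}_k$ is treated as an unconstrained variable in $\mathbb{R}^m$ when differentiating, rather than as a point on the simplex, and that earlier rounds are frozen. Once these conventions are stated, the rest is routine.
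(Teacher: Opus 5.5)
Your proposal is correct and follows essentially the same route as the paper's proof: differentiate the inner update $\bm{\theta}_{k,t} = \bm{\theta}_{k,t-1} - \eta \nabla_{\bm{\theta}} \mathcal{L}_\text{T}(\bm{\theta}_{k,t-1}, \bm{w}_k)$ with respect to $w_k^{(j)}$, use the chain rule to get the Hessian term, and use linearity of $\mathcal{L}_\text{T}$ in $\bm{w}$ to get the $\nabla \ell_j(\bm{\theta}_{k,t-1})$ term. Your extra remarks make explicit conventions the paper leaves implicit: the initial condition holds because $\bm{\theta}_{k,0}$ is fixed before $\bm{w}_k$ enters, you need twice differentiability, and $\bm{w}_k$ is treated as unconstrained coordinates.
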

The proof of \Cref{prop:recursion} is in \Cref{pf-prop:recursion}. This recursion reveals that the influence of a weight $w_k^{(j)}$ accumulates over the trajectory, decayed by a matrix depending on the Hessian of the weighted training loss over the trajectory. Thus, the exact computation of $\frac{\partial \mathcal{L}_\text{V}(\bm{\theta}_{k,T})}{\partial w_k^{(j)}}$ becomes infeasible as $T$ increases. 

As a result, an extreme version of Alg. \ref{alg:1-main} is used in many practical algorithms. For example,  DoGE \cite{fan2023doge} and PIKE \cite{li2025pike} propose a ``greedy'' approach by setting $T=1$. However, using $T=1$ or a small value of $T$ might yield a poor approximation of the hypergradient, which might be detrimental to convergence. Indeed, in \Cref{lin:reg-example}, we show that $T=1$ leads to suboptimal performance in a simple example involving quadratic losses. 
\\
\\
\textbf{Our main focus.} Motivated by this conundrum, we seek to rigorously analyze the impact of the horizon $T$ on convergence. 
More importantly, given a fixed parameter update budget $N = K T$, we wish to quantify the optimal value of $T$ as a function of $N$ that results in the best convergence bound. We present this analysis in \Cref{convg-results} for a more general and practical version of \Cref{alg:1-main}.
\\
\\
\textbf{Our key insight.} We show that $T = \Theta(1)$ (here, $\Theta(\cdot)$ is with respect to the total parameter update budget $N$) is suboptimal and a larger value of $T$ leads to the best convergence bound; see \Cref{tab:summary} for a summary of our results. In other words, we show that making fewer weight updates with a good approximation of the hypergradient (obtained with a large $T$) leads to better performance than making more weight updates with a poor approximation of the hypergradient (obtained with $T = \Theta(1)$), i.e., \textit{less is more}.

\begin{table}[!htb]
\caption{Optimal horizon $T$ (leading to best convergence bound) as a function of the total parameter update budget $N$,  when the per-domain losses are strongly convex. In the deterministic (resp., stochastic) case, we have access to full (resp., stochastic) gradients. 
}
\label{tab:summary}
\begin{center}
\begin{small}
\begin{sc}
\begin{tabular}{lc}
\toprule
Setting & Optimal Horizon $T$ \\
\midrule
Deterministic (Sec. \ref{convg-res-sec}) & $\Theta\big(\log N\big)$ \\
Stochastic (Sec. \ref{stoc-main}) & $\Theta\big(\sqrt{N \log N}\big)$ \\
\bottomrule
\end{tabular}
\end{sc}
\end{small}
\end{center}
\end{table}

{It is worth clarifying here that we are not pitching \Cref{alg:1-main} as a novel algorithm; rather, our main contribution is theoretically deriving the optimal value of $T$ in it.}

\section{Motivating Example and Initial Insights}
\label{lin:reg-example}
Here we analyze a 1-dimensional quadratic setting where the \enquote{greedy} approach of using $T=1$ fails. 

\noindent\textbf{Quadratic example.} Consider a scalar parameter $\theta$ and weight $w \in [0,1]$ controlling two training domains:
\begin{align*}
    \nonumber
    \Ltrain(\theta, w) & = w\left(\frac{\theta^2}{2}\right) + (1-w)\left(\frac{(\theta-1)^2}{2}\right), \text{ } 
    \Lval(\theta) = \frac{\theta^2}{2}.
\end{align*}
Here, the validation domain is identical to the first training domain. Thus, the optimal weight is clearly $w^* = 1$, which yields $\theta^*=0$. 
For this example, the theorem below shows that the greedy approach of setting $T=1$ completely fails when starting far away from the optimum $\theta^*=0$, while a larger value of $T$ does not suffer from this problem.
\begin{theorem}[Informal: Failure of Greedy Approach]
\label{informal-quad-thm}
Suppose we run \Cref{alg:1-main} starting from $w_0 = 0.5$ and $\theta_0 = -R$, with $R>0$ being sufficiently large. Let the total parameter update budget be $N$ ($= K T$).
\begin{itemize}
    \item \textbf{Case 1 ($T=1$):} The final weight iterate $w_K$ converges to $0$, while the optimal weight is $1$.
    \item \textbf{Case 2 ($T \gg 1$):} If $T$ scales as $\Theta(\log R)$, the final weight iterate $w_K$ can be made to converge to a value arbitrarily close to the optimal value $1$.
\end{itemize}
\end{theorem}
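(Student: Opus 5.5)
The plan is to exploit the fact that in this example everything is explicit and essentially scalar. First compute $\nabla_\theta \Ltrain(\theta,w) = w\theta + (1-w)(\theta-1) = \theta - (1-w)$ and $\nabla^2_\theta \Ltrain \equiv 1$. The inner update is therefore the affine contraction $\theta_{k,t+1} = (1-\eta)\theta_{k,t} + \eta(1-w_k)$, with fixed point $\theta^*(w)=1-w$, which gives
\[
\theta_{k,T} = (1-\eta)^T\theta_{k,0} + c_T(1-w_k), \qquad c_T := 1-(1-\eta)^T .
\]
We view $w$ as the simplex vector $(w^{(1)},w^{(2)})=(w,1-w)$, so that $\ell_1=\theta^2/2$ and $\ell_2=(\theta-1)^2/2$.

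Next apply \Cref{prop:recursion}, writing $d^{(j)}_t := \partial\theta_{k,t}/\partial w_k^{(j)}$. Since the Hessian is the constant $1$, the recursion becomes $d^{(j)}_t=(1-\eta)d^{(j)}_{t-1}-\eta\nabla\ell_j(\theta_{k,t-1})$. The difference $e_t:=d^{(1)}_t-d^{(2)}_t$ satisfies $e_t=(1-\eta)e_{t-1}-\eta$ with $e_0=0$, because $\nabla\ell_1-\nabla\ell_2\equiv 1$. Hence $e_T=-c_T$, independently of the trajectory. By \cref{eq:22-dec30}, $\bar g^{(j)}_{k,T}=\theta_{k,T}\,d^{(j)}_T$, so $\bar g^{(1)}-\bar g^{(2)}=-c_T\theta_{k,T}$. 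The mirror-descent step therefore reduces exactly to a log-odds recursion:
\[
x_{k+1}=x_k+\alpha c_T\,\theta_{k,T}, \qquad x_k:=\log\frac{w_k}{1-w_k}, \qquad x_0=0 .
\]
The whole question becomes the sign and size of $\theta_{k,T}$: $w$ moves toward $1$ exactly when $\theta_{k,T}>0$.

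\textbf{Case $T=1$.} Here $\theta_{k+1,0}=(1-\eta)\theta_{k,0}+\eta(1-w_k)$, starting from $-R$. Since $1-w_k\le 1$, $\theta$ stays below $-(1-\eta)^kR+1$. Summing the increments, during the roughly $\eta^{-1}\log R$ rounds in which $\theta$ is negative, $x$ decreases by about $\alpha\eta\sum_k(1-\eta)^kR\approx\alpha R$, up to $O(\alpha\log R)$ corrections. Afterwards $\theta_{k,1}\le 1$, so $x$ can increase by at most $\alpha\eta$ per round. This yields $x_K\le -\alpha R+O(\alpha\log R)+\alpha\eta K$, and hence $w_K\le \exp(-\alpha R+O(\alpha\log R)+\alpha\eta N)\to0$ in the regime $R\gg N$ (equivalently, $R$ sufficiently large relative to the budget). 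I would state the formal version with this explicit quantifier, since for fixed $R$ and $N\to\infty$ the weight eventually recovers.

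\textbf{Case $T=\Theta(\log R)$.} Choose $T\ge \eta^{-1}\log(R/\epsilon)$ so that $(1-\eta)^TR\le\epsilon$. In round $0$, $\theta_{0,T}\ge -\epsilon$, so $x_1\ge -\alpha\epsilon$. Then prove by induction that $\theta_{k,0}\in[-\epsilon,1]$ and $\theta_{k,T}\ge c_T(1-w_k)-(1-\eta)^T\epsilon$. This gives $x_{k+1}\ge x_k+\alpha c_T^2(1-\sigma(x_k))-\alpha\epsilon'$, where $\sigma$ is the sigmoid and $\epsilon'$ is negligible. Because $1-\sigma(x)\approx e^{-x}$ for large $x$, this recursion drives $x_k\gtrsim\log(1+\alpha c_T^2k)$ to infinity as long as the drift dominates $\epsilon'$. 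Consequently $w_K\to1$ as $K=N/T\to\infty$, with any target accuracy attained for $\epsilon$ small and $N$ large. The main obstacle is this last step: the residual negative term $\epsilon'$ competes with the drift $\approx e^{-x}$, which shrinks as $w\to1$. I would handle it by noting that the error decays geometrically across rounds ($\theta_{k,0}$ converges to the positive value $1-w_{k-1}$), so $\sum_k(1-\eta)^{kT}\epsilon$ is summable. The net effect is then only an $O(\alpha\epsilon)$ shift of $x$, which is harmless.
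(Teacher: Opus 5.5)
Your proposal is correct. The log-odds recursion $x_{k+1}=x_k+\alpha c_T\theta_{k,T}$ is exactly the paper's $\phi_k$-recursion. The paper gets it by differentiating the closed form of $\theta_{k,T}$ in the scalar $w$, and your route through \Cref{prop:recursion} with $e_T=-c_T$ is the same computation.

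Your Case 1 is the paper's argument: bound $\theta_k\le 1-(R+1)(1-\eta)^k$ and sum. Summing over all $k\le N$ at once gives $x_N\le\alpha\eta N-\alpha(R+1)(1-\eta)\big(1-(1-\eta)^N\big)$. This is cleaner than your two-phase estimate, which tacitly assumes $N$ outlasts the roughly $\eta^{-1}\log R$ negative rounds. The direct sum also covers fixed $N$ with $R\to\infty$.

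Case 2 is where you genuinely differ from the paper. The paper takes $(1-\eta)^T\le(2R+1)^{-(c+1)}$ and shows $\theta_{1,0}\ge\frac12\big(1-(2R+1)^{-c}\big)>0$. It then propagates positivity and keeps only $\theta_{k,0}\ge(1-\eta)^{(k-1)T}\theta_{1,0}$, throwing away the restoring term $c_T(1-w_k)$. Its bound $w_K\ge e^{\beta\alpha/2}/(1+e^{\beta\alpha/2})$ holds for every $K\ge1$, but it is close to $1$ only when the outer step size $\alpha$ is large.

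You do the reverse: you drop the first-round overshoot $c_T/2$ and keep the drift $\alpha c_T^2(1-w_k)$. This gives $w_K\to1$ for any fixed $\alpha$, with $1-w_K$ decaying roughly like $1/(\alpha c_T^2K)$. The price is needing $K=N/T\to\infty$, i.e.\ $N\gg\log R$. That is compatible with Case 1: for example, $R=CN$ with $C>\eta/(1-\eta)$ makes both conclusions hold as $N\to\infty$.

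The obstacle you flag is avoidable. With $\epsilon<1/3$ and $R\ge1$, you get $\theta_{1,0}\ge c_T/2-\epsilon>0$ as in the paper. Positivity then propagates, and $x_{k+1}\ge x_k+\alpha c_T^2(1-w_k)$ holds exactly for $k\ge1$ with no error term. Your summability fix also works: $\theta_{k+1,0}\ge(1-\eta)^T\theta_{k,0}$ makes the total negative contribution to $x$ at most $\alpha\epsilon$.
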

The formal statement of \Cref{informal-quad-thm} and its proof are relegated to \Cref{lin:reg-example-app}. This result shows that $T=1$ is problematic even in a simple quadratic example, and converging to the optimal weight requires making use of a larger horizon $T$. %
At a high level, in the $T=1$ case, the algorithm updates weights based on immediate but \enquote{misleading} validation loss improvement. To see this, notice that at $\theta_0 = -R$, the gradient of the second domain's loss is steeper and locally reduces validation loss faster, incorrectly signaling that $w$ should decrease. In contrast, when using a larger value of $T$, the algorithm does not suffer from this short-term bias and receives the correct signal that $w$ should increase to align with the validation loss.

Note that under a constant budget $N = K T$, as $T$ increases, the number of %
weight updates $K$ decreases. Given that we showed $T \gg 1$ is better than $T = 1$, we also showed that making fewer weight updates with a relatively precise hypergradient is better than greedily making more weight updates with an erroneous hypergradient, i.e., \textit{less is more}!

With these insights in mind, we will move on to \Cref{convg-results} which presents a much more general analysis of this phenomenon.

\section{Main Results: Less is More}
\label{convg-results}
To formally understand the role of the horizon~$T$, this section provides convergence bounds for a more general and practical version of \Cref{alg:1-main}, assuming the per-domain losses are strongly-convex and smooth. 

\subsection{Practical Version of Algorithm~\ref{alg:1-main} Using Approximate Hessian}
\label{sec:pract-alg}
Let us first describe the computationally-practical version of \Cref{alg:1-main} that we analyze. Recall that to update the weights, we need to compute $\frac{\partial \mathcal{L}_\text{V}(\bm{\theta}_{k,T})}{\partial {w}_k^{(j)}}$ ($j \in [m]$) which by the chain rule is equal to $\big\langle \nabla_{\bm{\theta}} \mathcal{L}_\text{V}(\bm{\theta}_{k,T}), \frac{\partial \bm{\theta}_{k, T}}{\partial w_k^{(j)}}\big \rangle$ (see Eq. \eqref{eq:22-dec30}). As discussed,  $\frac{\partial \bm{\theta}_{k, T}}{\partial w_k^{(j)}}$ can be computed recursively with the following recursive update rule from \Cref{prop:recursion}:
\begin{flalign*}
    \frac{\partial \bm{\theta}_{k, t}}{\partial w_k^{(j)}} = \Big(\textbf{\textup{I}} - \eta \bm{H}_{k, t-1} \Big)\frac{\partial \bm{\theta}_{k, t-1}}{\partial w_k^{(j)}} - \eta \nabla \ell_j\big(\bm{\theta}_{k, t-1}\big),
\end{flalign*}
where $\bm{H}_{k, t-1} := \nabla^2_{\bm{\theta}} \mathcal{L}_\text{T}\big(\bm{\theta}_{k,t-1}, \bm{w}_k\big)$ and
initial condition $\frac{\partial \bm{\theta}_{k, 0}}{\partial w_k^{(j)}} = \vec{0}$. Since the computation of the Hessian $\bm{H}_{k, t}$ at every step is costly, %
we consider using a \textbf{single approximate Hessian} $\bm{H}_{k}$ for the entire round $k$. Specifically, we assume access to a \textbf{Hessian approximator} $\bm{\mathcal{H}}$ that takes as input $\bm{w}_k$ and returns an approximate Hessian $\bm{H}_{k} = \bm{\mathcal{H}}(\bm{w}_k)$.\footnote{We discuss a practical choice of $\bm{H}_{k}$ in \Cref{app:approx}.}

Under this scheme, let the approximation to $\Big\{\frac{\partial \bm{\theta}_{k, t}}{\partial w_k^{(j)}}\Big\}_t$ be denoted by $\big\{-\eta \bm{u}_{k,t}^{(j)}\big\}_t$, which has the following simpler recursive update:
\begin{equation}
    \label{eq:7}
    \bm{u}_{k, t}^{(j)} = \big(\textbf{\textup{I}} - \eta \bm{H}_{k}\big) \bm{u}_{k,t-1}^{(j)}  + \nabla \ell_j\big(\bm{\theta}_{k,t-1}\big),
\end{equation}
with $\bm{u}_{k,0}^{(j)} = \vec{0}$. %
Unfolding the recursion in Eq. \eqref{eq:7}, we get:
\begin{equation}
    \label{eq:24-dec30}
    \bm{u}_{k, T}^{(j)} = \sum_{i=0}^{T-1} \big(\textbf{\textup{I}} - \eta \bm{H}_{k}\big)^{T-1-i} \nabla \ell_j\big(\bm{\theta}_{k,i}\big).
\end{equation}
Let us denote the approximation for $\frac{\partial \mathcal{L}_\text{V}(\bm{\theta}_{k,T})}{\partial w_k^{(j)}}$ that we will obtain by approximating $\frac{\partial \bm{\theta}_{k, T}}{\partial w_k^{(j)}}$ with $-\eta \bm{u}_{k,T}^{(j)}$ by ${g}^{(j)}_{k,T}$; so, $\bm{g}_{k, T} = \Big[{g}^{(1)}_{k,T}, \ldots, {g}^{(m)}_{k,T}\Big]^\top$ is our approximate hypergradient. 
Plugging in \cref{eq:24-dec30} into \cref{eq:22-dec30} and dropping the subscript $\bm{\theta}$ in $\nabla_{\bm{\theta}} \mathcal{L}_\text{V}(\bm{\theta}_{k,T})$ henceforth, we get:
\begin{equation}
    \label{eq:26-jan1}
    {g}^{(j)}_{k,T} = -\eta \sum_{i=0}^{T-1} \Big\langle \nabla \mathcal{L}_\text{V}(\bm{\theta}_{k,T}), \big(\textbf{\textup{I}} - \eta \bm{H}_{k}\big)^{T-1-i} \nabla \ell_j\big(\bm{\theta}_{k,i}\big)\Big \rangle.
\end{equation}
The corresponding update rule for $\bm{w}_k$ is:
\begin{equation}
    w_{k+1}^{(j)} = \frac{w_{k}^{(j)} \exp\big(-\alpha {g}^{(j)}_{k,T}\big)}{\sum_{s=1}^m w_{k}^{(s)} \exp\big(-\alpha {g}^{(s)}_{k,T}\big)}.
\end{equation}
We concretely present this practical version in \Cref{alg:practical}.

\begin{algorithm}[!htb]
	\caption{Data Mixing with Approximate Hessian}
	\label{alg:practical}
	\begin{algorithmic}[1]
	    \STATE {\bfseries Input:} Initialization for model parameters $\bm{\theta}_0$, initialization for mixture weights $\bm{w}_0$, parameter update step-size $\eta$, weight update step-size $\alpha$, number of rounds $K$, number of parameter updates per round / \enquote{horizon} $T$, and Hessian approximator $\bm{\mathcal{H}}$ for weighted training loss. 
        \vspace{0.1 cm}
        \STATE Set $\bm{\theta}_{0, 0} = \bm{\theta}_0$.
        \vspace{0.1 cm}
        \FOR{$k \in \{0,\ldots,K-1\}$}
        \vspace{0.1 cm}
        \STATE Obtain Hessian approximation $\bm{H}_k = \bm{\mathcal{H}}(\bm{w}_k)$. %
        \vspace{0.1 cm}
        \STATE Set $\bm{u}_{k, 0}^{(j)} = \vec{\bm{0}}$ for $j \in [m]$.
        \vspace{0.1 cm}
        \FOR{$t \in \{0,\ldots,T-1\}$}
        \vspace{0.1 cm}
        \STATE Update $\bm{\theta}_{k, t+1} = \bm{\theta}_{k, t} - \eta \nabla_\theta \mathcal{L}_\text{T}(\bm{\theta}_{k, t}, \bm{w}_k)$. \\
        // \texttt{Inner parameter update}
        \vspace{0.1 cm}
        \FOR{$j \in \{1,\ldots,m\}$}
        \vspace{0.1 cm}
        \STATE Update $\bm{u}_{k, t+1}^{(j)} = \big(\textbf{\textup{I}} - \eta \bm{H}_{k}\big) \bm{u}_{k,t}^{(j)}  + \nabla \ell_j\big(\bm{\theta}_{k,t}\big)$
        \vspace{0.1 cm}
        \ENDFOR
        \vspace{0.1 cm}
        \ENDFOR
        \vspace{0.1 cm}
        \STATE For $j \in [m]$, update $w_{k+1}^{(j)} = \frac{w_{k}^{(j)} \exp\big(-\alpha {g}^{(j)}_{k, T}\big)}{\sum_{s=1}^m w_{k}^{(s)} \exp\big(-\alpha {g}^{(s)}_{k, T}\big)}$, where ${g}^{(j)}_{k, T} := -\eta  \Big\langle \nabla \mathcal{L}_\text{V}(\bm{\theta}_{k,T}), \bm{u}_{k,T}^{(j)}\Big \rangle$. 
        \\
        // \texttt{Outer weight update}
        \vspace{0.1 cm}
        \STATE Set $\bm{\theta}_{k+1, 0} = \bm{\theta}_{k, T}$.
        \vspace{0.1 cm}
        \ENDFOR 
        \vspace{0.1 cm}
		\STATE \textbf{Return} $\{\bm{\theta}_{k, 0}\}_{k=0}^{K}$ and $\{\bm{w}_k\}_{k=0}^K$.
	\end{algorithmic}
\end{algorithm}

\subsection{Convergence Result for Algorithm~\ref{alg:practical}}
\label{convg-res-sec}
Now we will provide a convergence result for %
\Cref{alg:practical} under the following assumptions.

\begin{assumption}[\textbf{Domain losses}]
    \label{asmp-1}
    Each $\ell_j(\bm{\theta})$ ($j \in [m]$) is $G$-Lipschitz, $L$-smooth, and $\mu$-strongly-convex. 
\end{assumption}

{Note that when the $\ell_j$'s are strongly convex, $\bm{\theta}^{*}(\bm{w})$ 
is unique and $F(\bm{w})$ in Eq. \eqref{eq:def} is unambiguously defined.}

\begin{assumption}[\textbf{Validation loss}]
    \label{asmp-2}
    $\mathcal{L}_\text{V}(\bm{\theta})$ is $G_\text{V}$-Lipschitz and $L_\text{V}$-smooth. %
\end{assumption}

\begin{assumption}[\textbf{Hessian approximation}]
    \label{asmp-3}
    For $\bm{w} \in \bm{\Delta}^m$, suppose $\bm{H}^{\ast}(\bm{w}) := \nabla_{\bm{\theta}}^2 \mathcal{L}_\text{T} \big(\bm{\theta}^{\ast}(\bm{w}), \bm{w}\big)$ is the Hessian at the optimum of the weighted training loss with weight $\bm{w}$, and $\bm{\mathcal{H}}(\bm{w})$ is the approximate Hessian returned by our Hessian approximator (recall that we denoted this by $\bm{H}_k$ for round $k$ in \Cref{sec:pract-alg}). Then, $\bm{\mathcal{H}}(\bm{w})$ is PSD with $\widehat{\mu} \textbf{\textup{I}} \preceq \bm{\mathcal{H}}(\bm{w}) \preceq \widehat{L} \textbf{\textup{I}}$ and $\|\bm{\mathcal{H}}(\bm{w}) - \bm{H}^{\ast}(\bm{w})\|_\textup{op} \leq \delta$ $\forall$ $\bm{w} \in \bm{\Delta}^m$.
\end{assumption}
We are now ready to present our result for \Cref{alg:practical}. 

\begin{theorem} \label{thm:main_convergence}
Suppose Assumptions \ref{asmp-1}, \ref{asmp-2} and \ref{asmp-3} hold,  $F(\bm{w}) = \mathcal{L}_\textup{V}\big(\bm{\theta}^{*}(\bm{w})\big)$ is convex (in $\bm{w}$), and $F^{*} := \min_{\bm{w} \in \bm{\Delta}^m} F(\bm{w})$. Let $N = K T$ denote the total parameter update budget. Suppose our initialization is $\bm{w}_0 = \frac{1}{m} \mathds{1}_m$. Then, $\exists$ $\alpha$ such that for any $\eta < \min\Big(\frac{1}{\widehat{L}}, \frac{\mu}{L^2}\Big)$ and $T \geq \big\lceil\frac{\log 4}{\eta \mu}\big\rceil$, we have the following guarantee for \Cref{alg:practical}:
\begin{flalign}
    \label{eq:bound}
    & F(\overline{\bm{w}}_K) - F^{*} \le \underbrace{\frac{c_1 \sqrt{T}}{\sqrt{N}}}_\textup{(I)} + \underbrace{\left(1 - \frac{\eta \mu}{2}\right)^{T-1} (c_2 \eta T + c_3)}_{\textup{(II)}}  + \underbrace{c_4 \delta}_\textup{(III)},
\end{flalign}
where $\overline{\bm{w}}_K := \frac{1}{K} \sum_{k=0}^{K-1} \bm{w}_k$, and $\{c_i\}_{i=1}^4$ are constants independent of $T$ and $N$. 
\\
In \cref{eq:bound}, term \textup{(I)} is the \textbf{optimization error}, term \textup{(II)} is the \textbf{finite horizon bias}, while term \textup{(III)} is the irreducible \textbf{Hessian approximation error}.
\end{theorem}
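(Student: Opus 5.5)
We treat \Cref{alg:practical} as inexact mirror descent (exponentiated gradient) on the simplex for the convex function $F$, with $\bm{g}_{k,T}$ a biased approximation of $\nabla F(\bm{w}_k)$. The standard analysis of entropic mirror descent with $\bm{w}_0 = \frac{1}{m}\mathds{1}_m$ (KL radius $\log m$) gives, for any comparator $\bm{w}^*$,
\begin{equation*}
\frac{1}{K}\sum_{k=0}^{K-1}\langle \bm{g}_{k,T}, \bm{w}_k - \bm{w}^*\rangle \le \frac{\log m}{\alpha K} + \frac{\alpha}{2K}\sum_{k}\|\bm{g}_{k,T}\|_\infty^2 .
\end{equation*}
We combine this with convexity, Jensen, and Hölder ($\|\bm{w}_k - \bm{w}^*\|_1 \le 2$). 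This yields
\begin{equation*}
F(\overline{\bm{w}}_K) - F^* \le \frac{\log m}{\alpha K} + \frac{\alpha}{2}\max_k\|\bm{g}_{k,T}\|_\infty^2 + \frac{2}{K}\sum_k \|\bm{g}_{k,T} - \nabla F(\bm{w}_k)\|_\infty .
\end{equation*}

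\textbf{Bounding the hypergradient estimate.} Each $|g^{(j)}_{k,T}| \le \eta G_\text{V} G \sum_{i}(1-\eta\widehat{\mu})^{i} \le G_\text{V} G/\widehat{\mu} =: B$, uniformly in $T$, using $\eta<1/\widehat{L}$. Optimizing $\alpha \asymp \sqrt{\log m/(K B^2)}$ makes the first two terms $O(B\sqrt{\log m/K}) = c_1\sqrt{T/N}$, which is term (I).

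\textbf{Bias decomposition.} The main work is bounding $|g^{(j)}_{k,T} - \nabla F(\bm{w}_k)^{(j)}|$. Write $\bm{\theta}^*_k = \bm{\theta}^*(\bm{w}_k)$, $\bm{H}_k^* = \bm{H}^*(\bm{w}_k)$, and $\bm{A} = \textbf{I}-\eta\bm{H}_k$. By \eqref{eq:ift_hypergradient}, the target is $-\langle \nabla\mathcal{L}_\text{V}(\bm{\theta}^*_k), (\bm{H}_k^*)^{-1}\nabla\ell_j(\bm{\theta}^*_k)\rangle$. We interpolate in three steps:
\begin{enumerate}
\item[(a)] Replace $\bm{\theta}_{k,i}$ by $\bm{\theta}^*_k$ inside $\nabla\ell_j$, and $\bm{\theta}_{k,T}$ by $\bm{\theta}^*_k$ inside $\nabla\mathcal{L}_\text{V}$.
\item[(b)] Replace the truncated sum $\eta\sum_{i=0}^{T-1}\bm{A}^{T-1-i} = \bm{H}_k^{-1}(\textbf{I}-\bm{A}^T)$ by $\bm{H}_k^{-1}$.
\item[(c)] Replace $\bm{H}_k^{-1}$ by $(\bm{H}_k^*)^{-1}$.
\end{enumerate}
Step (c) costs $G_\text{V}G\,\delta/(\widehat{\mu}\mu)$, which is term (III). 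Step (b) costs $G_\text{V} G(1-\eta\widehat{\mu})^T/\widehat{\mu}$. This is geometric in $T$; the statement's rate $(1-\eta\mu/2)$ presumably absorbs it, assuming $\widehat{\mu}\gtrsim\mu/2$ or after adjusting constants. For step (a), GD on the $\mu$-strongly convex, $L$-smooth $\mathcal{L}_\text{T}(\cdot,\bm{w}_k)$ with $\eta<\mu/L^2$ contracts:
\[
\|\bm{\theta}_{k,i}-\bm{\theta}^*_k\|\le (1-\eta\mu/2)^{i} D_k, \qquad D_k := \|\bm{\theta}_{k,0}-\bm{\theta}^*_k\|.
\]
Hence the $\nabla\ell_j$ replacement costs $\eta G_\text{V} L\sum_i (1-\eta\widehat{\mu})^{T-1-i}(1-\eta\mu/2)^i D_k$. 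This is bounded by $\eta G_\text{V} L\, T\,(1-\eta\mu/2)^{T-1}D_k$ after taking the slower rate, which is the origin of the $c_2\eta T$ factor. The $\nabla\mathcal{L}_\text{V}$ replacement costs $L_\text{V}(1-\eta\mu/2)^T D_k\cdot B$, which is the $c_3$ part.

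\textbf{Main obstacle: bounding $D_k$ uniformly in $k$.} Since $\bm{\theta}_{k,0}=\bm{\theta}_{k-1,T}$ is warm-started from the previous round's optimum target, I would use
\[
D_k \le (1-\eta\mu/2)^T D_{k-1} + \|\bm{\theta}^*_{k-1}-\bm{\theta}^*_k\|.
\]
I would then bound the drift by Lipschitzness of $\bm{\theta}^*(\cdot)$ in $\bm{w}$. From \Cref{lem2}'s implicit differentiation, $\|\partial\bm{\theta}^*/\partial w^{(j)}\|\le G/\mu$, so the drift is at most $(G/\mu)\|\bm{w}_k-\bm{w}_{k-1}\|_1 \le (G/\mu)\,O(\alpha B)$; alternatively, simply $\le 2G/\mu$ since all optima lie in a ball of radius $G/\mu$ around any one of them. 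With $T\ge\lceil\log 4/(\eta\mu)\rceil$ we have $(1-\eta\mu/2)^T\le 1/2$ (using $\log 4/2=\log 2$ loosely), so the recursion gives $D_k\le 2D_0 + 4G/\mu$, a constant independent of $T$ and $N$. This is precisely where the lower bound on $T$ is needed. Substituting into step (a) and collecting terms gives term (II) with constants $c_2,c_3$ depending on $G,G_\text{V},L,L_\text{V},\mu,\widehat{\mu},\|\bm{\theta}_0-\bm{\theta}^*(\bm{w}_0)\|$, and summing all three contributions yields \eqref{eq:bound}.
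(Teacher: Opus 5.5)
Your architecture (inexact entropic mirror descent, $\|\bm{g}_{k,T}\|_\infty\le GG_{\text{V}}/\widehat{\mu}$, a per-round hypergradient-error bound, and the warm-start recursion for $D_k$ using $(1-\eta\mu/2)^T\le 1/2$) matches the paper's. However, your hypergradient-error step has a gap as written.

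\textbf{The gap.} In your ordering, every geometric factor in steps (a) and (b) comes from $\bm{A}=\textbf{I}-\eta\bm{H}_k$, and you only control its contraction by $1-\eta\widehat{\mu}$.
\begin{itemize}
\item Step (b) costs $G_{\text{V}}G(1-\eta\widehat{\mu})^T/\widehat{\mu}$.
\item In step (a), the sum $\sum_i(1-\eta\widehat{\mu})^{T-1-i}(1-\eta\mu/2)^i$ is only bounded by $T\max(1-\eta\widehat{\mu},\,1-\eta\mu/2)^{T-1}$. ``Taking the slower rate'' gives $(1-\eta\mu/2)^{T-1}$ only if $\widehat{\mu}\ge\mu/2$, which the statement does not assume.
\end{itemize}
Adjusting constants cannot fix this. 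The ratio $\big((1-\eta\widehat{\mu})/(1-\eta\mu/2)\big)^{T}$ is unbounded in $T$, so $c_2,c_3$ would have to depend on $T$.

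\textbf{The fix.} The missing observation is that Assumptions~\ref{asmp-1} and \ref{asmp-3} together give a better spectral bound on $\bm{H}_k$. Split on $\delta$:
\begin{itemize}
\item If $\delta\ge\mu/2$, bound crudely: $|g^{(j)}_{k,T}-\nabla F(\bm{w}_k)^{(j)}|\le GG_{\text{V}}(1/\widehat{\mu}+1/\mu)\le (2\delta/\mu)\,GG_{\text{V}}(1/\widehat{\mu}+1/\mu)$. This is absorbed into (III).
\item If $\delta<\mu/2$, Weyl's inequality and $\bm{H}_k^*\succeq\mu\textbf{I}$ give $\lambda_{\min}(\bm{H}_k)\ge\mu-\delta>\mu/2$. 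With $\eta<1/\widehat{L}$ this yields $\|\bm{A}\|_{\text{op}}\le 1-\eta\mu/2$ and $\|\bm{H}_k^{-1}\|_{\text{op}}\le 2/\mu$. Your steps (a)--(c) then all go through at rate $1-\eta\mu/2$ with $T$-independent constants.
\end{itemize}

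\textbf{Comparison with the paper.} The paper avoids this issue through its ordering. It expands $(\bm{H}_k^*)^{-1}$ as a Neumann series and, in \Cref{lem3}, first swaps $(\textbf{I}-\eta\bm{H}_k)^i$ for $(\textbf{I}-\eta\bm{H}_k^*)^i$ term by term. That swap costs $\eta\delta GG_{\text{V}}\, i\,(1-\eta\min(\mu,\widehat{\mu}))^{i-1}$, which sums to $\delta GG_{\text{V}}/\min(\mu,\widehat{\mu})^2$. The trajectory and truncation errors are then measured against $\textbf{I}-\eta\bm{H}_k^*$, whose contraction $1-\eta\mu$ merges with $(1-\eta\mu/2)^{T-1-i}$ into $(1-\eta\mu/2)^{T-1}$ with no condition on $\widehat{\mu}$.

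Your route is a genuine alternative. The closed form $\eta\sum_{i<T}\bm{A}^{i}=\bm{H}_k^{-1}(\textbf{I}-\bm{A}^T)$, combined with the resolvent bound $\|\bm{H}_k^{-1}-(\bm{H}_k^*)^{-1}\|_{\text{op}}\le\delta/(\widehat{\mu}\mu)$, is shorter and valid once patched as above.

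Your drift bound via $\|\bm{\theta}^*(\bm{w})-\bm{\theta}^*(\bm{w}')\|_2\le G/\mu$ also differs from the paper. It follows from strong convexity of $\mathcal{L}_{\text{T}}(\cdot,\bm{w}')$ at $\bm{\theta}^*(\bm{w})$, where its gradient has norm at most $G$. This is a simpler substitute for \Cref{max:dist}, which instead gives $(2L/\mu+1)D$ using only smoothness and strong convexity.

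\textbf{Minor slip.} The $\nabla\mathcal{L}_{\text{V}}$ replacement in step (a) costs $L_{\text{V}}G D_k(1-\eta\mu/2)^T/\widehat{\mu}$, not $L_{\text{V}}D_k(1-\eta\mu/2)^T\cdot GG_{\text{V}}/\widehat{\mu}$.
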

The detailed version and proof of \Cref{thm:main_convergence} are presented in \Cref{app:pf-main_convergence}. Terms (I) and (II) in Eq. \eqref{eq:bound} are reducible by increasing $N$ and choosing $T$ appropriately (as a function of $N$). However, term (III) is independent of $N$ and $T$, and cannot be reduced. This is the Hessian approximation cost, which depends on the quality of the approximator.
\\
\\
\textbf{Reducible error.} The reducible error ((I) + (II)) in \cref{eq:bound} is associated with a tradeoff with respect to the value of $T$. Specifically, the optimization error (I) increases with $T$ (or as as $K=N/T$ decreases), while the horizon bias (II) decays with $T$ asymptotically. As we discuss and show in the detailed version of \Cref{thm:main_convergence} (\Cref{thm2}), the reducible error is minimized by choosing 
\begin{equation}
    T = \Theta(\log N).\footnote{Note that the $\Theta(\cdot)$ used here and subsequently is w.r.t. $N$. Also, $\eta$ is chosen to be a constant independent of $N$ for this result.}
\end{equation}

\begin{remark}
    \label{rmk-det}
    The reducible error is minimized by setting $T = \Theta(\log N)$. \textbf{This establishes that $T = \Theta(1)$ is suboptimal} and making fewer weight updates with the horizon growing \textit{logarithmically} with the total parameter update budget is better, i.e., \textbf{less is more}!
\end{remark}

\noindent \textbf{Tightness.} Since $F(\bm{w})$ is assumed to be convex in \Cref{thm:main_convergence}, we have a lower bound of $\Omega\big(\frac{1}{\sqrt{N}}\big)$ in the worst case \cite{nesterov2013introductory}. Note that by choosing $T = \Theta(\log N)$, the reducible error in ((I) + (II)) in Eq. \eqref{eq:bound} becomes $\widetilde{\mathcal{O}}\big(\frac{1}{\sqrt{N}}\big)$ (where $\widetilde{\mathcal{O}}(\cdot)$ hides poly-log factors). Thus, our upper bound in \Cref{thm:main_convergence} with $T = \Theta(\log N)$ matches the lower bound (ignoring poly-log factors). This establishes that $T = \Theta(\log N)$ is indeed the \textbf{optimal choice} in this setting.

\subsection{Extension to the Stochastic Case}
\label{stoc-main}
Here, we consider a more practical version of \Cref{alg:practical}, where for each domain loss $\ell_j(\cdot)$ and the validation loss $\mathcal{L}_\text{V}(\cdot)$, we have access to unbiased stochastic gradients $\widetilde{\nabla} {\ell}_j(\cdot)$ and $\widetilde{\nabla} {\mathcal{L}}_\text{V}(\cdot)$ instead of the actual gradients $\nabla {\ell}_j(\cdot)$ and $\nabla {\mathcal{L}}_\text{V}(\cdot)$. In this case, we have three changes in \Cref{alg:practical}. First, line 7 (inner parameter update) becomes $$\bm{\theta}_{k, t+1} = \bm{\theta}_{k, t} - \eta \widetilde{\nabla}_\theta {\mathcal{L}}_\text{T}(\bm{\theta}_{k, t}, \bm{w}_k),$$ 
where $\widetilde{\nabla}_\theta {\mathcal{L}}_\text{T}(\bm{\theta}_{k, t}, \bm{w}_k) = \sum_{j=1}^m w_k^{(j)} \widetilde{\nabla} {\ell}_j\big(\bm{\theta}_{k,t}\big)$. Second, line 9 changes to
$$\widetilde{\bm{u}}_{k, t+1}^{(j)} = \big(\textbf{\textup{I}} - \eta \bm{H}_{k}\big) \widetilde{\bm{u}}_{k,t}^{(j)}  + \widetilde{\nabla} {\ell}_j\big(\bm{\theta}_{k,t}\big),$$
with $\widetilde{\bm{u}}_{k, 0}^{(j)} = \vec{0}$. Finally, line 12 (outer weight update) becomes
$$w_{k+1}^{(j)} = \frac{w_{k}^{(j)} \exp\big(-\alpha \widetilde{g}^{(j)}_{k}\big)}{\sum_{s=1}^m w_{k}^{(s)} \exp\big(-\alpha \widetilde{g}^{(s)}_{k}\big)},$$
with $\widetilde{g}^{(j)}_k = -\eta \big\langle \widetilde{\nabla} {\mathcal{L}}_\text{V}(\bm{\theta}_{k,T}), \widetilde{\bm{u}}_{k,T}^{(j)}\big \rangle$. Note that $\widetilde{\bm{g}}_{k, T} = \Big[\widetilde{g}^{(1)}_{k,T}, \ldots, \widetilde{g}^{(m)}_{k,T}\Big]^\top$ is our approximate hypergradient here. For full clarity, we write down the resultant algorithm in the stochastic case in \Cref{alg:stoc}. 
\begin{algorithm}[!htb]
	\caption{Data Mixing with Approximate Hessian and Stochastic Gradients}
	\label{alg:stoc}
	\begin{algorithmic}[1]
	    \STATE {\bfseries Input:} Initialization for model parameters $\bm{\theta}_0$, initialization for mixture weights $\bm{w}_0$, parameter update step-size $\eta$, weight update step-size $\alpha$, number of rounds $K$, number of parameter updates per round / \enquote{horizon} $T$, and Hessian approximator $\bm{\mathcal{H}}$ for the weighted training loss. 
        \vspace{0.1 cm}
        \STATE Set $\bm{\theta}_{0, 0} = \bm{\theta}_0$.
        \vspace{0.1 cm}
        \FOR{$k \in \{0,\ldots,K-1\}$}
        \vspace{0.1 cm}
        \STATE Obtain Hessian approximation $\bm{H}_k = \bm{\mathcal{H}}(\bm{w}_k)$. %
        \vspace{0.1 cm}
        \STATE Set $\widetilde{\bm{u}}_{k, 0}^{(j)} = \vec{\bm{0}}$ for $j \in [m]$.
        \vspace{0.1 cm}
        \FOR{$t \in \{0,\ldots,T-1\}$}
        \vspace{0.1 cm}
        \STATE Update $\bm{\theta}_{k, t+1} = \bm{\theta}_{k, t} - \eta \Big(\sum_{j=1}^m w_k^{(j)} \widetilde{\nabla} {\ell}_j\big(\bm{\theta}_{k,t}\big)\Big)$.
        \\
        // \texttt{Inner parameter update}
        \vspace{0.1 cm}
        \FOR{$j \in \{1,\ldots,m\}$}
        \vspace{0.1 cm}
        \STATE Update  $\widetilde{\bm{u}}_{k, t+1}^{(j)} = \big(\textbf{\textup{I}} - \eta \bm{H}_{k}\big) \widetilde{\bm{u}}_{k,t}^{(j)}  + \widetilde{\nabla} {\ell}_j\big(\bm{\theta}_{k,t}\big)$.
        \vspace{0.1 cm}
        \ENDFOR
        \vspace{0.1 cm}
        \ENDFOR
        \vspace{0.1 cm}
        \STATE For $j \in [m]$, update $w_{k+1}^{(j)} = \frac{w_{k}^{(j)} \exp\big(-\alpha \widetilde{g}^{(j)}_{k, T}\big)}{\sum_{s=1}^m w_{k}^{(s)} \exp\big(-\alpha \widetilde{g}^{(s)}_{k, T}\big)}$, where $\widetilde{g}^{(j)}_{k, T} := -\eta  \Big\langle \widetilde{\nabla} {\mathcal{L}}_\text{V}(\bm{\theta}_{k,T}), \widetilde{\bm{u}}_{k,T}^{(j)}\Big \rangle$. 
        \\
        // \texttt{Outer weight update}
        \vspace{0.1 cm}
        \STATE Set $\bm{\theta}_{k+1, 0} = \bm{\theta}_{k, T}$.
        \vspace{0.1 cm}
        \ENDFOR 
        \vspace{0.1 cm}
		\STATE \textbf{Return} $\{\bm{\theta}_{k, 0}\}_{k=0}^{K}$ and $\{\bm{w}_k\}_{k=0}^K$.
	\end{algorithmic}
\end{algorithm}

We make the following standard assumptions on the stochastic gradients.
\begin{assumption}[\textbf{Stochastic gradients}]
\label{asmp-stoc}
Suppose $\zeta$ denotes the source of randomness in the stochastic gradients. The stochastic gradients:
\begin{enumerate}
    \item are unbiased, i.e., $\mathbb{E}_\zeta\big[\widetilde{\nabla} {\ell}_j(\bm{\theta})\big] = \nabla {\ell}_j(\bm{\theta})$ and $\mathbb{E}_\zeta\big[\widetilde{\nabla} {\mathcal{L}}_\text{V}(\bm{\theta})\big] = \nabla {\mathcal{L}}_\text{V}(\bm{\theta}), \text{ } \forall \text{ } \bm{\theta}$.
    \item have bounded variance, i.e., $\mathbb{E}_\zeta\big[\|\widetilde{\nabla} {\ell}_j(\bm{\theta}) - \nabla {\ell}_j(\bm{\theta})\|_2^2\big] \leq \sigma^2  \text{ and } \mathbb{E}_\zeta\big[\|\widetilde{\nabla} {\mathcal{L}}_\text{V}(\bm{\theta}) - \nabla {\mathcal{L}}_\text{V}(\bm{\theta})\|_2^2\big] \leq \sigma^2, \text{ } \forall \text{ } \bm{\theta}$.
\end{enumerate}
\end{assumption}

We are now ready to present our convergence result for the algorithm in the stochastic case (\Cref{alg:stoc}).

\begin{theorem}[\textbf{Stochastic case}]\label{thm:stoc-main}
    Suppose Assumptions \ref{asmp-1}, \ref{asmp-2}, \ref{asmp-3} and \ref{asmp-stoc} hold, $F(\bm{w}) = \mathcal{L}_\textup{V}\big(\bm{\theta}^{*}(\bm{w})\big)$ is convex (in $\bm{w}$), and $F^{*} := \min_{\bm{w} \in \bm{\Delta}^m} F(\bm{w})$.
    Let $N = K T$ denote the total parameter update budget. Suppose our initialization is $\bm{w}_0 = \frac{1}{m} \mathds{1}_m$. Then, $\exists$ $\alpha$ and $\eta$ such that for any $T \geq \Omega(1)$ (w.r.t. $N$), we have the following guarantee for \Cref{alg:stoc}:
    \begin{equation}
        \label{eq:bound-stoc}
        \mathbb{E}\big[F(\overline{\bm{w}}_K)\big] - F^{*} \le \underbrace{\mathcal{O}\Big(\frac{\sqrt{T}}{\sqrt{N}}\Big)}_\textup{(I)} + \underbrace{\mathcal{O}\Big(\frac{\sqrt{\log T}}{\sqrt{T}}\Big)}_\textup{(II)} + \underbrace{\mathcal{O}(\delta)}_\textup{(III)},
    \end{equation}
    where $\overline{\bm{w}}_K := \frac{1}{K} \sum_{k=0}^{K-1} \bm{w}_k$.\footnote{The expectation in \cref{eq:bound-stoc} is w.r.t. the randomness of the algorithm due to use of stochastic gradients.} 
    
    In \cref{eq:bound-stoc}, term \textup{(I)} is the \textbf{optimization error}, term \textup{(II)} is the \textbf{finite horizon bias}, while term \textup{(III)} is the irreducible \textbf{Hessian approximation error}.
\end{theorem}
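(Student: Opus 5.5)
The plan is to treat \Cref{alg:stoc} as stochastic mirror descent (entropic regularizer on $\bm{\Delta}^m$) run on the convex function $F$ with a biased, noisy gradient oracle $\widetilde{\bm{g}}_{k,T}$. The standard inexact mirror-descent analysis, with $\bm{w}_0=\frac1m\mathds{1}_m$ (so the KL divergence to any comparator is at most $\log m$), gives
\[
\mathbb{E}[F(\overline{\bm{w}}_K)]-F^* \le \frac{\log m}{\alpha K} + \frac{\alpha}{2K}\sum_{k}\mathbb{E}\|\widetilde{\bm{g}}_{k,T}\|_\infty^2 + \frac{2}{K}\sum_k \mathbb{E}\big\|\mathbb{E}_k[\widetilde{\bm{g}}_{k,T}]-\nabla F(\bm{w}_k)\big\|_\infty,
\]
where $\mathbb{E}_k$ conditions on the history up to round $k$. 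Choosing $\alpha\asymp 1/\sqrt{K\,\max_k\mathbb{E}\|\widetilde{\bm{g}}\|^2_\infty}$ makes the first two terms $\mathcal{O}(1/\sqrt{K})=\mathcal{O}(\sqrt{T/N})$, which is term (I). This step needs a bound on $\mathbb{E}\|\widetilde{\bm{g}}_{k,T}\|_\infty^2$ uniform in $T$. That holds because $\|\eta\widetilde{\bm{u}}_{k,T}^{(j)}\|$ is a geometric sum with ratio at most $1-\eta\widehat\mu$, so in second moment it is at most $\mathcal{O}(\eta\cdot\frac{G+\sigma}{\eta\widehat\mu})=\mathcal{O}(1)$. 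Combined with the second moment of $\widetilde{\nabla}\mathcal{L}_\text{V}$ (at most $G_\text{V}^2+\sigma^2$), and since the validation-gradient noise is independent of the inner trajectory, Cauchy--Schwarz gives $\mathcal{O}(1)$ for the product.

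The main work is the bias term. Because the validation stochastic gradient is drawn independently and is unbiased, I would write $\mathbb{E}_k[\widetilde g^{(j)}_{k,T}]=-\eta\,\mathbb{E}_k\langle\nabla\mathcal{L}_\text{V}(\bm{\theta}_{k,T}),\widetilde{\bm{u}}^{(j)}_{k,T}\rangle$. I would then compare this with the idealized quantity $-\eta\langle\nabla\mathcal{L}_\text{V}(\bm{\theta}^*),\sum_{i}(\mathbf{I}-\eta\bm{H}^*)^{T-1-i}\nabla\ell_j(\bm{\theta}^*)\rangle$. The geometric sum equals $-\langle\nabla\mathcal{L}_\text{V}(\bm{\theta}^*),(\bm{H}^*)^{-1}(\mathbf{I}-(\mathbf{I}-\eta\bm{H}^*)^T)\nabla\ell_j(\bm{\theta}^*)\rangle$, which by \cref{eq:ift_hypergradient} differs from $\nabla F(\bm{w}_k)^{(j)}$ by $\mathcal{O}((1-\eta\mu)^T)$. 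The discrepancy splits into three pieces. First, $\bm{H}_k$ versus $\bm{H}^*$ contributes $\mathcal{O}(\delta)$ via a telescoping bound on matrix powers, which yields term (III). Second, the deterministic distance $\|\bm{\theta}_{k,i}-\bm{\theta}^*(\bm{w}_k)\|$ decays geometrically within the round, as in the deterministic proof of \Cref{thm:main_convergence}. Third, there is the stochastic piece. Its key feature is that the noise in $\widetilde{\bm{u}}_{k,T}$ from $\widetilde{\nabla}\ell_j(\bm{\theta}_{k,i})$ is correlated with $\bm{\theta}_{k,T}$, since the same samples drive the inner update. So $\mathbb{E}\langle\nabla\mathcal{L}_\text{V}(\bm{\theta}_{k,T}),\text{noise}\rangle$ does not vanish, and I would bound it through smoothness of $\mathcal{L}_\text{V}$ and Cauchy--Schwarz.

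For the stochastic piece I would use the steady-state SGD error $\mathbb{E}\|\bm{\theta}_{k,t}-\bm{\theta}^*(\bm{w}_k)\|^2\lesssim(1-\eta\mu)^t D_k^2+\eta\sigma^2/\mu$. Since $\bm{w}$ changes by $\mathcal{O}(\alpha)$ per round and $\bm{\theta}^*(\cdot)$ is Lipschitz in $\bm{w}$ by the IFT, $D_k$ stays bounded. The correlation of the noise with $\bm{\theta}_{k,T}$ then contributes $\mathcal{O}(\sqrt{\eta})$ after the factor $\eta\cdot\frac{1}{\eta\mu}$ from the geometric weights. The total bias is therefore $\mathcal{O}\big((1-\eta\mu/2)^T(c+\eta T)+\sqrt{\eta}+\delta\big)$. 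Choosing $\eta=\frac{c\log T}{\mu T}$ makes the geometric term $T^{-c}$ and gives $\sqrt{\eta}=\mathcal{O}(\sqrt{\log T/T})$, which is term (II). This choice requires $T\ge\Omega(1)$ so that $\eta<\min(1/\widehat L,\mu/L^2)$. Term (I) is unaffected, since the moment bound on $\widetilde{\bm{g}}$ is uniform in $\eta$.

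I expect the main obstacle to be the correlated-noise term. A naive bound treats $\widetilde{\bm{u}}$'s noise as having norm of order $\sigma/\sqrt{\eta}$, which after multiplying by $\eta$ gives only $\sqrt{\eta}$ if the cross term is controlled carefully. A sloppy argument would give $\mathcal{O}(1)$ instead. I would handle it by writing $\nabla\mathcal{L}_\text{V}(\bm{\theta}_{k,T})=\nabla\mathcal{L}_\text{V}(\bar{\bm{\theta}}_{k,T})+\mathcal{O}(L_\text{V}\|\bm{\theta}_{k,T}-\bar{\bm{\theta}}_{k,T}\|)$, with $\bar{\bm{\theta}}$ the noiseless trajectory. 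The martingale noise is orthogonal to the deterministic first part, and the second part is bounded by Cauchy--Schwarz, using $\mathbb{E}\|\bm{\theta}_{k,T}-\bar{\bm{\theta}}_{k,T}\|^2=\mathcal{O}(\eta\sigma^2/\mu)$ and $\mathbb{E}\|\eta\,\text{noise}(\widetilde{\bm{u}})\|^2=\mathcal{O}(\eta\sigma^2/\widehat\mu)$. Their product is $\mathcal{O}(\eta)$; if it degrades, I would fall back on the $\sqrt{\eta}$ bound, which still suffices.
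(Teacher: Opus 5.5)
Your proposal is correct and follows essentially the same route as the paper: entropic stochastic mirror descent with a biased oracle, where the bias is split into four pieces, and the proof is closed by a uniform bound on $\mathbb{E}\|\bm{\theta}_{k,0}-\bm{\theta}^*_k\|_2$ and the choice $\eta \asymp \frac{\log T}{\mu T}$. The four bias pieces are the Neumann-series tail, the $\delta$-term from $\bm{H}_k$ versus $\bm{H}^*_k$, the geometrically decaying trajectory error, and an $\mathcal{O}(\sqrt{\eta})$ SGD noise floor, which includes the correlated term handled via smoothness of $\mathcal{L}_\text{V}$ and Cauchy--Schwarz. Two small remarks: bounding the second moment $\mathbb{E}\|\widetilde{\bm{g}}_{k,T}\|_\infty^2$ is indeed what the mirror-descent step needs (the paper states its extension of \Cref{lem1} with only a first-moment bound), and since $(1-\eta\mu/2)^T \le T^{-c/2}$ rather than $T^{-c}$, you need $c>1$ (the paper takes $c=4$) so that the $\eta T(1-\eta\mu/2)^{T}$ term is $\mathcal{O}(\sqrt{\log T/T})$.
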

The detailed version and proof of \Cref{thm:stoc-main} are presented in \Cref{stoc-app}. Just like \Cref{thm:main_convergence}, there are reducible error terms (I) and (II), and an irreducible error term (III), which is the Hessian approximation cost.
\\
\\
\textbf{Reducible error.} Similar to \Cref{thm:main_convergence}, the optimization error (I) increases with $T$ (or as as $K=N/T$ decreases), while the horizon bias (II) decays with $T$ asymptotically. However, in this case, the decay in the horizon bias is much slower (as a function of $T$) than in \Cref{thm:main_convergence}; this is due to the use of stochastic gradients here. As we discuss and show in the detailed version of \Cref{thm:stoc-main} (\Cref{thm:stochastic-case}), the reducible error ((I) + (II)) is minimized by choosing 
\begin{equation}
    T = \Theta\big(\sqrt{N \log N}\big).
\end{equation}
\begin{remark}  
    \label{rmk-stoc}
    In the stochastic case, the reducible error is minimized by setting $T = \Theta\big(\sqrt{N \log N}\big)$ which is \textbf{significantly larger than $\Theta(1)$} as well as the optimal value of $\Theta(\log N)$ in the deterministic case (\Cref{rmk-det}).
\end{remark}

\subsection{Proof Sketch: Bounding the Hypergradient Error}
\label{pf-sketch}
The \textbf{core of our analysis} lies in \textbf{bounding the deviation of the approximate hypergradient} $\bm{g}_{k, T}$ (or its stochastic counterpart $\widetilde{\bm{g}}_{k, T}$) \textbf{from the true hypergradient} $\nabla F(\bm{w}_k)$; the rest of the analysis is based on standard results for mirror descent. Recall that the true hypergradient is given by the Implicit Function Theorem as 
$\nabla F(\bm{w})^{(j)} = - \big \langle \nabla_{\bm{\theta}} \mathcal{L}_\text{V} \big(\bm{\theta}^{\ast}\big),  \nabla_{\bm{\theta}}^2 \mathcal{L}_\text{T} \big(\bm{\theta}^{\ast}, \bm{w}\big)^{-1} \nabla \ell_j\big(\bm{\theta}^{\ast}\big) \big \rangle$, where $\bm{\theta}^{\ast}$ denotes $\bm{\theta}^{\ast}(\bm{w})$ (for brevity). Our analysis decomposes this error into two primary sources: the \textbf{finite-horizon bias} from truncating the inner optimization, and the \textbf{approximation error} from replacing the exact inverse Hessian at the optimum with a truncated Neumann series over the optimization trajectory using the approximate Hessian.

\paragraph{Deterministic setting (\Cref{thm:main_convergence}).}
Let $\bm{H}_k^{\ast} := \nabla_{\bm{\theta}}^2 \mathcal{L}_\text{T} \big(\bm{\theta}^{\ast}_k, \bm{w}_k\big)$. 
Here the analysis boils down to bounding $\|\nabla F(\bm{w}_k) - \bm{g}_{k, T}\|_\infty$, 
and this is done by expressing the inverse Hessian $(\bm{H}_k^{\ast})^{-1}$ in $\nabla F(\bm{w}_k)$ as an infinite Neumann series $\eta \sum_{i=0}^{\infty} \big(\textbf{\textup{I}} - \eta \bm{H}_{k}^{\ast}\big)^i$. Our estimated hypergradient $\bm{g}_{k, T}$ approximates this by using the approximate Hessian $\bm{H}_{k}$ instead of $\bm{H}_{k}^{\ast}$, truncating the sum to $T$ terms, and evaluating gradients along the finite trajectory $\{\bm{\theta}_{k,t}\}_{t=0}^{T}$ rather than at the optimum $\bm{\theta}^{*}_k$. We split the error into two parts: 
\begin{enumerate}
    \item \textbf{Tail error.} This is the error from ignoring terms $i \ge T$ in the Neumann series. Due to strong convexity of the per-domain losses, $\big(\textbf{\textup{I}} - \eta \bm{H}_{k}^{\ast}\big)$ is a contractive operator with spectral radius bounded by $(1 - \eta \mu)$. As a result, this tail error decays exponentially as $(1 - \eta \mu)^T$.
    \item \textbf{Trajectory error.} This is the error accumulating from evaluating gradients at $\{\bm{\theta}_{k,t}\}_{t=0}^T$ instead of $\bm{\theta}^{*}_k$ as well as from the use of approximate Hessian instead of exact Hessian. Since gradient descent on strongly-convex functions converges linearly, $\|\bm{\theta}_{k,t} - \bm{\theta}^{*}_k\|_2$ decays as $\big(1 - \mathcal{O}(\eta \mu)\big)^t$. Using this fact together with the Lipschitzness and smoothness of the per-domain and validation losses, it can be shown that this error is bounded by $\mathcal{O}\big(\eta T \big(1 - \mathcal{O}(\eta \mu)\big)^T\big) + \mathcal{O}(\delta)$, where the last term is due to the Hessian approximation and is irreducible.
\end{enumerate}
Combining these two errors, the total hypergradient error is bounded by $\mathcal{O}\big(\eta T \big(1 - \mathcal{O}(\eta \mu)\big)^T\big) + \mathcal{O}(\delta)$, where $\delta$ is the irreducible error from the Hessian approximator. This near-exponential decay in the hypergradient error drives the optimal choice of $T = \Theta(\log N)$.

\paragraph{Stochastic setting (\Cref{thm:stoc-main}).}
In the stochastic case, the high-level idea is similar to the deterministic case, but the analysis is more involved and tricky due to the use of stochastic gradients. After taking conditional expectations judiciously, the decompositions are similar to the deterministic case. The key difference here compared to the deterministic case is that the $\bm{\theta}_{k,t}$'s do not converge to $\bm{\theta}^{*}_k$ linearly; specifically, here we have $\mathbb{E}\big[\big\|\bm{\theta}_{k,T} - \bm{\theta}^{*}_k\big\|_2\big] \le \mathcal{O}\big(\big(1-\mathcal{O}(\eta\mu)\big)^{T}\big) + \mathcal{O}(\sqrt{\eta} \sigma)$. The noise variance term at the end $\mathcal{O}(\sqrt{\eta} \sigma)$ turns out to be the dominant term in the reducible\footnote{The irreducible $\mathcal{O}(\delta)$ term in the deterministic case (due to Hessian approximation) stays here as well.} part of the expected hypergradient error; the other terms decay as $\mathcal{O}\big(\eta T \big(1 - \mathcal{O}(\eta \mu)\big)^T\big)$ just like the deterministic case. By choosing $\eta = \mathcal{O}(\frac{\log T}{\mu T})$, the reducible part of the expected hypergradient error is minimized and it turns out to be $\mathcal{O}(\frac{\sqrt{\log T}}{\sqrt{T}})$. This decay rate is much slower than the deterministic case (where it was near-exponential asymptotically), and this necessitates a much larger optimal horizon $T = \Theta(\sqrt{N \log N})$.

\paragraph{Analysis without assuming convexity.} In all of our previous results, we assumed $F(\bm{w})$ to be convex. {Even in the case when $F(\bm{w})$ is smooth non-convex, the crux of the analysis lies in bounding the deviation of the approximate hypergradient from the true hypergradient, and the same analysis that we have in the convex case (as discussed in \Cref{pf-sketch}) essentially goes through in this case (assuming the $\ell_j(\bm{\theta})$'s are still strongly convex).  There are \textit{no additional important technical challenges} compared to the convex case; so the exact derivation is left for future work. 
}

\section{Empirical Evaluation}\label{sec:experiment}

\begin{figure*}[t]
\centering

\begin{subfigure}{0.42\textwidth}
\centering
\includegraphics[width=\linewidth]{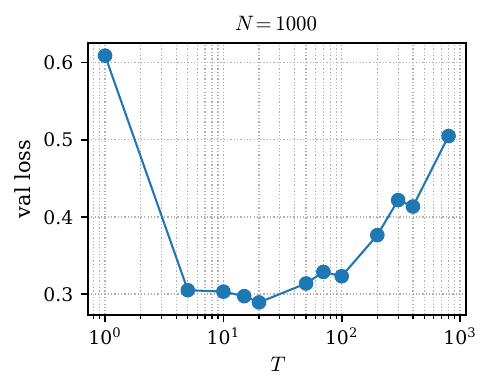}
\caption{Validation loss vs. $T$ for $N=1000$}
\label{fig:val_loss}
\end{subfigure}\hspace{0.7em}%
\begin{subfigure}{0.42\textwidth}
\centering
\includegraphics[width=\linewidth]{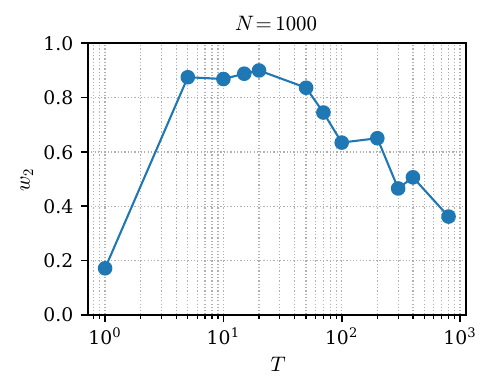}
\caption{$w_2$ vs. $T$ for $N=1000$}
\label{fig:w_T}
\end{subfigure}

\vspace{0.1em}

\begin{subfigure}{0.42\textwidth}
\centering
\includegraphics[width=\linewidth]{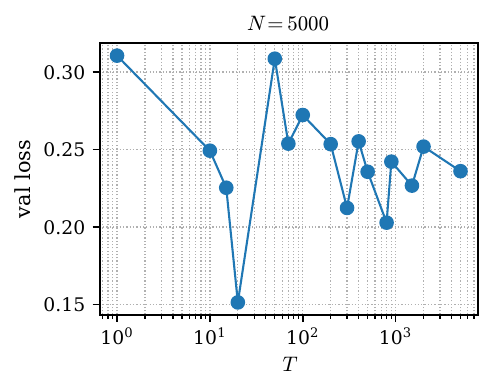}
\caption{Validation loss vs. $T$ for $N=5000$}
\label{fig:val_loss_2_5000_main}
\end{subfigure}\hspace{0.7em}%
\begin{subfigure}{0.42\textwidth}
\centering
\includegraphics[width=\linewidth]{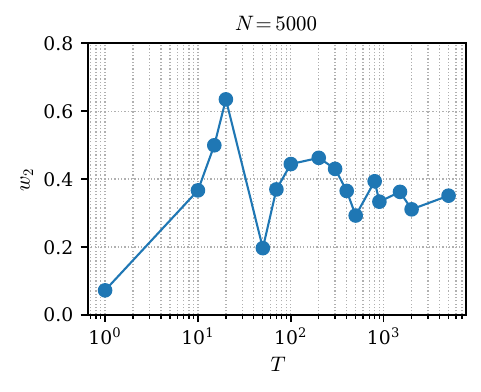}
\caption{$w_2$ vs. $T$ for $N=5000$}
\label{fig:w_2_5000_main}
\end{subfigure}

\caption{%
Validation loss and the weight of the second domain (most aligned with validation data) $w_2$ as a function of the horizon $T$, for $N=\{1000,5000\}$.
Note that the validation loss is lowest and $w_2$ is highest when $T$ is \textit{larger than} $1$ and \textit{sublinear} in $N$.
}
\label{fig:alphaRotatedLossAndW}
\end{figure*}
We design a multi-domain training experiment to illustrate how increasing the horizon $T$ (i.e., updating the mixture weights less frequently) leads to better performance. 
Specifically, we consider the following multi-domain data built out of MNIST.
\\
\\
\textbf{Data Domains:}
We construct three training domains (for a pretraining setting) from standard MNIST data. Domain (i) is the \textbf{vanilla domain} with standard pre-processing, Domain (ii) is the \textbf{rotated domain} with strong randomly chosen {rotation augmentation}, and Domain (iii) is the \textbf{noisy domain} which is standard MNIST corrupted with label noise. Validation data is a fixed rotated version of the MNIST test set, so the rotated domain (Domain (ii)) is the one that best matches the validation distribution.
\\
\\
We train a {CNN} whose architecture we describe in Appendix~\ref{sec:appendix_experiments}. 
\\
\\
\textbf{Data Mixing Algorithm:} We consider {Algorithm~\ref{alg:stoc}} with approximate Hessian $\bm{H}_k = \gamma \textbf{I}$, for all $k$ (see \Cref{app:approx}). More details about the hyperparameters, etc., are deferred to Appendix~\ref{sec:appendix_experiments}.
\\
\\
In \Cref{fig:alphaRotatedLossAndW}, we show the validation loss and the weight of the second domain (which is the ``important'' domain, most aligned with validation data), viz., $w_2$, as a function of the horizon $T$ {for $N=1000$ and $N=5000$}. We discuss the results next. In \Cref{fig:alphaRotatedLossAndW_full} (\Cref{sec:appendix_experiments}), we also plot the validation accuracies. 
\\
\\
\textbf{Optimal horizon is not small.} The performance is poor with a small horizon $T$ (or very frequent weight updates). With small $T$, the validation loss is high %
(Figures \ref{fig:val_loss} and \ref{fig:val_loss_2_5000_main}), and the corresponding values of $w_2$ are small (Figures \ref{fig:w_T} and \ref{fig:w_2_5000_main}). This suggests that the hypergradients with small $T$ are inaccurate and misled by short-horizon bias, underestimating the eventual benefit of allocating high weight to an important domain that may not reduce the loss immediately. 
As $T$ increases to an \textbf{intermediate regime}, %
\textbf{performance improves substantially}: validation loss %
decreases %
and $w_2$ increases, yielding a ``sweet spot'' at moderate horizons. In this regime, the hypergradients are much more precise and reflective of the domain's actual importance, leading to good performance. However, when $T$ becomes too large, %
performance degrades again because of too few weight updates. 

Overall, this experiment corroborates our theoretical result that small horizons are suboptimal due to being prone to getting misled by short-term spurious signals, while intermediate (in particular, sublinear in $N$) horizons do not suffer from this issue. Thus, \textit{less is more}!

\section{Conclusion}
In this work, we showed that the prevalent \enquote{greedy} heuristic of frequent weight updates with
short horizons ($T = \Theta(1)$) is systematically biased, often converging far away from the optimal domain weights. By deriving rigorous scaling laws for the horizon, we established a \enquote{less is more} principle, proving that updating weights less frequently with a horizon growing sublinearly in the total parameter update budget significantly improves convergence. 

We conclude this paper by discussing some limitations and future directions of work. {Our results in this paper are under the assumption that the per-domain losses are strongly convex. While this assumption is limiting, a critical challenge in analyzing the general non-convex case is that for $F(\bm{w})$ to be well-defined in Eq. \eqref{eq:def}, $\bm{\theta}^*(\bm{w})$ needs to be uniquely specified as one of the minimizers of $\mathcal{L}_\text{T}(\bm{\theta}, \bm{w})$, and the distance of the iterates from this $\bm{\theta}^*(\bm{w})$ needs to be bounded in the proofs. It is not clear how this can be done, and so, we leave this analysis for future work. Also, we would like to conduct experiments on larger models and datasets in the future.}

\bibliography{refs}

\newpage
\appendix
\onecolumn

{\begin{center}{\LARGE \bf Appendix}\end{center}}

\section{Quadratic Example in Section~\ref{lin:reg-example}}
\label{lin:reg-example-app}
In this case, it can be verified that the mirror descent update for $w_k$ from \Cref{alg:1-main} is:
\begin{equation}
    w_{k+1} = \frac{w_k}{w_k + (1-w_k) \exp(\alpha \bar{g}_{k, T})}, \text{where } \bar{g}_{k, T} := \frac{\partial \mathcal{L}_\text{V}(\theta_{k,T})}{\partial w_k}.
\end{equation}
It can be verified that the parameter iterates evolve as:
\begin{flalign}
    \label{eq:39}
    \theta_{k+1, 0} = \theta_{k, T} & = \big(1 - \eta\big)^T \theta_{k, 0} + ({1-w_k})\Big(1 - \big(1 - \eta\big)^T\Big).
\end{flalign}
Now:
\begin{flalign}
    \mathcal{L}_\text{V}({\theta}_{k,T}) = \frac{1}{2} \theta_{k,T}^2 = \frac{1}{2} \Bigg(\big(1 - \eta\big)^T \theta_{k, 0} + ({1-w_k})\Big(1 - \big(1 - \eta\big)^T\Big)\Bigg)^2,
\end{flalign}
and
\begin{flalign}
    \nonumber
    \bar{g}_{k, T} = \frac{\partial \mathcal{L}_\text{V}(\theta_{k,T})}{\partial w_k} & = - \underbrace{\Bigg(\big(1 - \eta\big)^T \theta_{k, 0} + ({1-w_k})\Big(1 - \big(1 - \eta\big)^T\Big)\Bigg)}_{= \theta_{k+1,0}} \Big(1 - \big(1 - \eta\big)^T\Big) 
    \\ 
    \label{eq:4-dec26}
    & = -{\theta}_{k+1,0} \Big(1 - \big(1 - \eta\big)^T\Big).
\end{flalign}
Thus, the update for the weight will be:
\begin{equation}    
    \label{eq:42}
    w_{k+1} = \frac{w_k}{w_k + (1-w_k) \exp(\alpha \bar{g}_{k, T})} = \frac{w_k}{w_k + (1-w_k) \exp\Big(-\alpha \theta_{k+1,0} \big(1 - \big(1 - \eta\big)^T\big)\Big)}.
\end{equation}

We will now state the formal version of \Cref{informal-quad-thm} and then prove it.

\begin{theorem}
    Let us fix the total parameter update budget in \Cref{alg:1-main}, i.e., $K T$, to be $N$. Define $\overline{R} := \frac{\eta N}{(1-\eta) \big(1 - (1-\eta)^N\big)}$, where $\eta < 1$ is the parameter update step-size. Suppose we begin with $w_0 = \frac{1}{2}$ and $\theta_{0} = -R$, where $R > \overline{R} > 0$. In that case:
    \begin{itemize}
        \item If $T = \overline{T} = \Big \lceil \frac{(c+1) \log(2R+1)}{\log(1/(1-\eta))} \Big \rceil$ with $c > 0$, then  $w_K \geq \frac{e^{\beta(\alpha/2)}}{1 + e^{\beta(\alpha/2)}}$, where $K = N/\overline{T}$, $\alpha$ is the weight update step-size, and $\beta = \Big(1 - \frac{1}{(2R+1)^c}\Big) \Big(1 - \frac{1}{(2R + 1)^{K(c+1)}}\Big)$. %
        In particular, for large $R$, $\beta \approx 1$ and $w_K \gtrsim \frac{e^{\alpha/2}}{1 + e^{\alpha/2}}$.\footnote{Here, $\gtrsim$ denotes nearly greater than.}
        \item If $T = 1$ (note that $K = N$ in this case), $w_N < \frac{1}{2}$. In particular, as $R \to \infty$, $w_N \to 0$. 
    \end{itemize}
\end{theorem}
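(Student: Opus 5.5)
The plan is to pass to log-odds, which turns the mirror-descent update in \eqref{eq:42} into an additive recursion. Set $z_k := \log\frac{w_k}{1-w_k}$, so that $z_0 = 0$ because $w_0 = \tfrac12$. Write $\rho := (1-\eta)^T$. Then \eqref{eq:42} reads
\begin{equation*}
z_{k+1} = z_k + \alpha(1-\rho)\,\theta_{k+1,0},
\qquad\text{hence}\qquad
z_K = \alpha(1-\rho)\sum_{k=1}^{K}\theta_{k,0}.
\end{equation*}
Since $w_K = \frac{e^{z_K}}{1+e^{z_K}}$ is increasing in $z_K$, both claims reduce to controlling the sign and size of $\sum_{k=1}^K \theta_{k,0}$. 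For this I will use \eqref{eq:39}, namely $\theta_{k+1,0} = \rho\,\theta_{k,0} + (1-w_k)(1-\rho)$, together with the fact that $1-w_k \in [0,1]$. These give the sandwich
\begin{equation*}
\rho\,\theta_{k,0} \;\le\; \theta_{k+1,0} \;\le\; \rho\,\theta_{k,0} + (1-\rho).
\end{equation*}

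\textbf{Case $T=1$.} Here $\rho = 1-\eta$ and $K = N$. Iterating the upper side of the sandwich from $\theta_{0,0} = -R$ gives $\theta_{k,0} \le -\rho^k R + (1-\rho^k) \le 1 - \rho^k R$. Summing over $k = 1,\dots,N$,
\begin{equation*}
\sum_{k=1}^N \theta_{k,0} \;\le\; N - R\,\frac{(1-\eta)\big(1-(1-\eta)^N\big)}{\eta}.
\end{equation*}
The right-hand side is strictly negative exactly when $R > \overline{R}$. This gives $z_N < 0$, hence $w_N < \tfrac12$. Moreover, the bound tends to $-\infty$ linearly in $R$, so $z_N \to -\infty$ and therefore $w_N \to 0$ as $R \to \infty$.

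\textbf{Case $T = \overline{T}$.} The choice of $\overline{T}$ guarantees $\rho \le (2R+1)^{-(c+1)}$. I first bound $\theta_{1,0}$ using $w_0 = \tfrac12$:
\begin{equation*}
\theta_{1,0} = -\rho R + \tfrac12(1-\rho) = \tfrac12 - \rho\big(R + \tfrac12\big) \;\ge\; \tfrac12\Big(1 - (2R+1)^{-c}\Big) \;>\; 0.
\end{equation*}
Next, the lower side of the sandwich gives $\theta_{k,0} \ge \rho^{k-1}\theta_{1,0} \ge 0$ for every $k \ge 1$, irrespective of how $w_k$ evolves. Summing the geometric series,
\begin{equation*}
z_K \;\ge\; \alpha(1-\rho)\,\theta_{1,0}\sum_{k=0}^{K-1}\rho^k = \alpha\,\theta_{1,0}\,(1-\rho^K).
\end{equation*}
Since $\rho^K \le (2R+1)^{-K(c+1)}$, combining the two estimates yields $z_K \ge \beta\,\alpha/2$, and monotonicity of the logistic map gives the stated lower bound on $w_K$. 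Finally, as $R \to \infty$ both factors of $\beta$ tend to $1$, which gives the approximate bound $w_K \gtrsim \frac{e^{\alpha/2}}{1+e^{\alpha/2}}$.

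\textbf{Main obstacle.} The step I expect to need the most care is the second case. The additive term $(1-w_k)(1-\rho)$ can shrink to zero as $w_k$ approaches $1$, so one cannot rely on it to keep the iterates positive. The argument must therefore use only the first step, where $w_0 = \tfrac12$ is known exactly, to make $\theta_{1,0}$ positive. After that, the lower-bound propagation $\theta_{k,0} \ge \rho^{k-1}\theta_{1,0}$ is what keeps the sum positive without any control on $w_k$ for $k \ge 1$. Some attention to the exact form of $\overline{T}$ is also needed to confirm $\rho R \le \tfrac12(2R+1)^{-c}$, which is the ingredient behind the factor $1-(2R+1)^{-c}$ in $\beta$.
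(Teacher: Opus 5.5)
Your proposal is correct and follows essentially the same route as the paper. You use the log-odds recursion $z_{k+1}=z_k+\alpha(1-\rho)\theta_{k+1,0}$. For $T=\overline{T}$ you use the lower bound $\theta_{k,0}\ge\rho^{k-1}\theta_{1,0}$ summed geometrically, and for $T=1$ you iterate the upper bound obtained from $1-w_k\le 1$. The only differences are cosmetic: you loosen to $\theta_{k,0}\le 1-\rho^k R$ instead of the paper's $1-(R+1)(1-\eta)^k$, which still becomes negative exactly at the threshold $\overline{R}$. You also skip the paper's intermediate observation that $\theta_{k,0}>0$ and $w_k$ is increasing, which is indeed not needed.
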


\begin{proof}
\textbf{Case 1: $T = \overline{T} = \Big \lceil \frac{(c+1) \log(2R+1)}{\log(1/(1-\eta))} \Big \rceil$ with $c > 0$.} In this case, we have using \cref{eq:39}:
\begin{equation}
    \theta_{1, 0} = - (1 - \eta)^{\overline{T}} R + \frac{1}{2}\Big(1 - (1 - \eta)^{\overline{T}}\Big).
\end{equation}
Since $\overline{T} = \Big \lceil \frac{(c+1) \log(2R+1)}{\log(1/(1-\eta))} \Big \rceil$, we get:
\begin{equation}
    \label{eq:10-dec27}
    \theta_{1, 0} \geq \frac{1}{2}\Bigg(1 - \frac{1}{(2R+1)^c}\Bigg) > 0.
\end{equation}
Since $\theta_{1, 0} > 0$, $\bar{g}_{0, \overline{T}} < 0$ (see \Cref{eq:4-dec26}) and thus, $w_1 > w_0 = \frac{1}{2}$ (this follows from \cref{eq:42}). Next, we have:
\begin{flalign}
    \label{eq:39-dec27}
    \theta_{2, 0} & = \big(1 - \eta\big)^{\overline{T}} \theta_{1, 0} + ({1-w_1})\Big(1 - \big(1 - \eta\big)^{\overline{T}}\Big).
\end{flalign}
Since $\theta_{1, 0} > 0$, we will also have $\theta_{2, 0} > 0$ and as a result $\bar{g}_{1, \overline{T}} < 0 \implies w_2 > w_1 > \frac{1}{2}$. This process will keep repeating and we will have for all $k \geq 2$:
$\theta_{k, 0} > 0$ and $w_k > \ldots > w_1 > \frac{1}{2}$. Let us sharpen these bounds now.

Note that $\theta_{2, 0} \geq \big(1 - \eta\big)^{\overline{T}} \theta_{1, 0}$. Using \cref{eq:39}, we have $\theta_{k, 0} \geq \big(1 - \eta\big)^{\overline{T}} \theta_{k-1, 0}$ for all $k > 2$ as well (this is because $({1-w_k})\big(1 - \big(1 - \eta\big)^{\overline{T}}\big) \geq 0$ always). Unfolding this recursion, 
\begin{equation}
    \label{eq:12-dec27}
    \theta_{k, 0} \geq \big(1 - \eta\big)^{(k-1)\overline{T}} \theta_{1, 0},
\end{equation}
for all $k \geq 2$. 

Let us define $\phi_k := \log \frac{w_k}{1 - w_k}$. As per the mirror descent update for $w_k$ (\cref{eq:42}), we have the following update rule for $\phi_k$:
\begin{equation}  
    \label{eq:13-dec27}
    \phi_k = \phi_{k-1} - \alpha \bar{g}_{k-1, \overline{T}} = \phi_{k-1} + \alpha \theta_{k, 0} \Big(1 - \big(1-\eta\big)^{\overline{T}}\Big),
\end{equation}
where the last step follows from \cref{eq:4-dec26}. Using \cref{eq:12-dec27} in \cref{eq:13-dec27}, we get:
\begin{equation}
    \phi_k \geq \phi_{k-1} + \alpha \Big(1 - \big(1-\eta\big)^{\overline{T}}\Big) \big(1 - \eta\big)^{(k-1){\overline{T}}} \theta_{1, 0}. %
\end{equation}
Applying this recursively, we get:
\begin{equation}
    \phi_k \geq \phi_{1} + \alpha \Big(1 - \big(1-\eta\big)^{\overline{T}}\Big) \sum_{l=1}^{k-1}\big(1 - \eta\big)^{l {\overline{T}}} \theta_{1, 0}.
\end{equation}
From \cref{eq:13-dec27}, we have $\phi_1 = \phi_0 + \alpha \Big(1 - \big(1-\eta\big)^{\overline{T}}\Big) \theta_{1, 0}$. Plugging this above, we get:
\begin{equation}
    \phi_k \geq \phi_{0} + \alpha \Big(1 - \big(1-\eta\big)^{\overline{T}}\Big) \sum_{l=0}^{k-1}\big(1 - \eta\big)^{l {\overline{T}}} \theta_{1, 0}.
\end{equation}
Note that $\phi_0 = 0$ as $w_0 = \frac{1}{2}$. Using $\sum_{l=0}^{k-1}\big(1 - \eta\big)^{l {\overline{T}}} = \frac{1 - (1 - \eta)^{k {\overline{T}}}}{1 - (1 - \eta)^{\overline{T}}}$ and $\phi_0 = 0$ above, we get:
\begin{equation}
    \phi_k \geq \alpha \theta_{1, 0} \Big(1 - (1 - \eta)^{k {\overline{T}}}\Big).
\end{equation}
Recalling that $\overline{T} = \Big \lceil \frac{(c+1) \log(2R+1)}{\log(1/(1-\eta))} \Big \rceil \geq \frac{(c+1) \log(2R+1)}{\log(1/(1-\eta))}$, we have $(1 - \eta)^{{\overline{T}}} \leq \frac{1}{(2R + 1)^{c+1}}$. Using this and \cref{eq:10-dec27} above, we get for $k = K$:
\begin{equation}
    \phi_K \geq \frac{\alpha}{2}\Bigg(1 - \frac{1}{(2R+1)^c}\Bigg) \Bigg(1 - \frac{1}{(2R + 1)^{K(c+1)}}\Bigg). 
\end{equation}
Let $\beta := \Big(1 - \frac{1}{(2R+1)^c}\Big) \Big(1 - \frac{1}{(2R + 1)^{K(c+1)}}\Big)$. This gives us:
\begin{equation}
    w_K \geq \frac{e^{\beta(\alpha/2)}}{1 + e^{\beta(\alpha/2)}}.
\end{equation}
\\
\\
\textbf{Case 2: $T=1$}. We perform a total of $N$ parameter and weight updates. In this case, let us denote the parameter iterates simply by $\{\theta_k\}_{k \geq 0}$. Plugging in $T=1$ in \Cref{eq:39}, here we have:
\begin{equation}
    \label{eq:12-dec26}
    \theta_{k} =  (1 - \eta) \theta_{k-1} + \eta (1-w_{k-1}).
\end{equation}
Since $1 - w_{k-1} \leq 1$ (as $w_{k-1} \geq 0$), we have that:
\begin{equation}
    \theta_{k} \leq (1 - \eta) \theta_{k-1} + \eta \leq (1 - \eta)^2 \theta_{k-2} + \eta \Big(1 + (1-\eta)\Big) \leq \ldots \leq (1 - \eta)^k \theta_{0} + \eta \sum_{l=0}^{k-1} (1-\eta)^l.
\end{equation}
Using the fact that $\sum_{l=0}^{k-1} (1-\eta)^l = \frac{1 - (1-\eta)^k}{\eta}$ above and recalling that $\theta_0 = -R$, we get:
\begin{equation}
    \label{eq:14-dec26}
    \theta_k \leq 1 - (R+1) (1 - \eta)^k.
\end{equation}
{As done in Case 1, let us define $\phi_k := \log \frac{w_k}{1 - w_k}$. As per the mirror descent update for $w_k$ (\cref{eq:42}), we have the following update rule for $\phi_k$:
\begin{equation}   
    \label{eq:15-dec26}
    \phi_k = \phi_{k-1} - \alpha \bar{g}_{k-1, 1} = \phi_{k-1} + \eta \alpha \theta_k,
\end{equation}
where the last step follows from \cref{eq:4-dec26} with $T=1$.}
Unfolding the recursion in \cref{eq:15-dec26}, we get:
\begin{equation}
    \label{eq:19-dec27}
    \phi_k = \phi_0 + \eta \alpha \sum_{l=1}^k \theta_l.
\end{equation}
Note that $\phi_0 = 0$ as $w_0 = \frac{1}{2}$. Using this and applying \cref{eq:14-dec26} in \cref{eq:19-dec27}, we get for $k = N$:
\begin{equation}
    \phi_N \leq \eta \alpha \Big(N - (R+1) \sum_{l=1}^N (1-\eta)^l \Big) = \eta \alpha \Big(N - \frac{(R+1) (1-\eta)\big(1 - (1-\eta)^N\big)}{\eta} \Big).
\end{equation}
Note that if $R > \frac{\eta N}{(1-\eta) \big(1 - (1-\eta)^N\big)}$, then $\phi_N < 0 \implies w_N < \frac{1}{2}$. 
{Moreover, for $R \to \infty$, we have $\phi_N \to -\infty \implies w_N \to 0$. 
}
\end{proof}

\section{Proof of Proposition~\ref{prop:recursion}}
\label{pf-prop:recursion}
For the reader's convenience, we first restate \Cref{prop:recursion} and then prove it.
\begin{proposition}
\label{prop:recursion-restated}
For $t \ge 0$, we have:
\begin{equation*}
    \frac{\partial \bm{\theta}_{k, t}}{\partial w_k^{(j)}} = \Big(\textbf{\textup{I}} - \eta \nabla^2_{\bm{\theta}} \mathcal{L}_\text{T}\big(\bm{\theta}_{k,t-1}, \bm{w}_k\big)\Big)\frac{\partial \bm{\theta}_{k, t-1}}{\partial w_k^{(j)}} - \eta \nabla \ell_j\big(\bm{\theta}_{k, t-1}\big),
\end{equation*}
with $\frac{\partial \bm{\theta}_{k, 0}}{\partial w_k^{(j)}} = \vec{0}$. 
\end{proposition}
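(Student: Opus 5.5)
The plan is a direct application of the chain rule to the inner update rule in \cref{eq:inner_update}, with $\bm{w}_k$ held fixed throughout round $k$. First, the initial condition. In round $k$, $\bm{\theta}_{k,0} = \bm{\theta}_{k-1,T}$ (or $\bm{\theta}_0$ when $k=0$). This quantity is computed from $\bm{w}_0,\ldots,\bm{w}_{k-1}$ and the earlier parameter iterates, and it does not depend on the current weight $\bm{w}_k$. The derivative in the proposition is taken with respect to $\bm{w}_k$ alone, with earlier quantities treated as constants, as in the unrolled algorithm. Hence $\frac{\partial \bm{\theta}_{k,0}}{\partial w_k^{(j)}} = \vec{0}$.

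Next, for $t \ge 1$, write $\bm{\theta}_{k,t} = \bm{\theta}_{k,t-1} - \eta \nabla_{\bm{\theta}} \mathcal{L}_\text{T}(\bm{\theta}_{k,t-1}, \bm{w}_k)$. Expanding $\mathcal{L}_\text{T}$ gives $\nabla_{\bm{\theta}} \mathcal{L}_\text{T}(\bm{\theta}, \bm{w}) = \sum_{s=1}^m w^{(s)} \nabla \ell_s(\bm{\theta})$, which depends on $w_k^{(j)}$ in two ways. The first is explicitly, through the coefficient $w_k^{(j)}$. The second is implicitly, through $\bm{\theta}_{k,t-1}$, which is itself a function of $\bm{w}_k$. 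Differentiate both sides with respect to $w_k^{(j)}$ using the multivariate chain rule, which is valid because each $\ell_s$ is twice differentiable (they are $L$-smooth and we take the Hessian to exist). This gives
\begin{equation*}
\frac{\partial}{\partial w_k^{(j)}} \nabla_{\bm{\theta}} \mathcal{L}_\text{T}(\bm{\theta}_{k,t-1}, \bm{w}_k) = \nabla^2_{\bm{\theta}} \mathcal{L}_\text{T}(\bm{\theta}_{k,t-1}, \bm{w}_k) \frac{\partial \bm{\theta}_{k,t-1}}{\partial w_k^{(j)}} + \nabla \ell_j(\bm{\theta}_{k,t-1}).
\end{equation*}
The second term arises because $\partial w^{(s)}/\partial w^{(j)} = \mathds{1}[s=j]$. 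Substituting this into the differentiated update and collecting the terms in $\frac{\partial \bm{\theta}_{k,t-1}}{\partial w_k^{(j)}}$ yields \cref{eq:recursion}.

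There is no real obstacle here. The only point needing care is the convention in the first step: the weights in round $k$ are treated as independent coordinates for the purpose of differentiation. We do not impose the simplex constraint, and we do not differentiate through the dependence of earlier rounds on $\bm{w}_k$, since there is none. This convention is what makes the initial condition zero.
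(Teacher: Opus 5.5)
Your proposal is correct and follows the same route as the paper. Both arguments differentiate the update $\bm{\theta}_{k,t} = \bm{\theta}_{k,t-1} - \eta \sum_{i} w_k^{(i)} \nabla \ell_i(\bm{\theta}_{k,t-1})$ with respect to $w_k^{(j)}$: the product rule gives the explicit $-\eta \nabla \ell_j(\bm{\theta}_{k,t-1})$ term and the chain rule gives the Hessian term. Your justification of the zero initial condition (that $\bm{\theta}_{k,0}=\bm{\theta}_{k-1,T}$ does not depend on $\bm{w}_k$, with $\bm{w}_k$ treated as free coordinates) is a small addition that the paper simply asserts.
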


\begin{proof}
Note that in any round $k$, the update rule for gradient descent on the parameters at step $t \geq 1$ is:
\begin{equation}
    \label{eq:21-jan2}
    \bm{\theta}_{k,t} = \bm{\theta}_{k,t-1} - \eta \nabla_{\bm{\theta}} \mathcal{L}_\text{T}\big(\bm{\theta}_{k, t-1}, \bm{w}_k\big) = \bm{\theta}_{k,t-1} - \eta \sum_{i=1}^m w_k^{(i)} \nabla \ell_i\big(\bm{\theta}_{k,t-1}\big). 
\end{equation}
As per \cref{eq:21-jan2}, note that:
\begin{align}
    \frac{\partial \bm{\theta}_{k, t}}{\partial w_k^{(j)}} & = \frac{\partial \bm{\theta}_{k, t-1}}{\partial w_k^{(j)}} - \eta \sum_{i=1}^m \frac{\partial \big(w_k^{(i)} \nabla \ell_i(\bm{\theta}_{k,t-1})\big)}{\partial w_k^{(j)}}
    \\
    & = \frac{\partial \bm{\theta}_{k, t-1}}{\partial w_k^{(j)}} - \eta \sum_{i=1}^m w_k^{(i)} \frac{\partial \nabla \ell_i(\bm{\theta}_{k,t-1})}{\partial w_k^{(j)}} - \eta \nabla \ell_j(\bm{\theta}_{k, t-1})
    \\
    \label{eq:5-july18}
    & = \frac{\partial \bm{\theta}_{k, t-1}}{\partial w_k^{(j)}} - \eta \sum_{i=1}^m w_k^{(i)} \nabla^2 \ell_i(\bm{\theta}_{k, t-1})\frac{\partial \bm{\theta}_{k,t-1}}{\partial w_k^{(j)}} - \eta \nabla \ell_j(\bm{\theta}_{k, t-1}).
\end{align}
In \cref{eq:5-july18}, we have used the chain rule. Recalling that $\mathcal{L}_\text{T}(\bm{\theta}, \bm{w}) := \sum_{i=1}^m w^{(i)} \ell_i(\bm{\theta})$, we can rewrite \cref{eq:5-july18} as:
\begin{equation}
    \frac{\partial \bm{\theta}_{k, t}}{\partial w_k^{(j)}} = \Big(\textbf{I} - \eta \nabla^2_{\bm{\theta}} \mathcal{L}_\text{T}(\bm{\theta}_{k,t-1}, \bm{w}_k)\Big)\frac{\partial \bm{\theta}_{k,t-1}}{\partial w_k^{(j)}} - \eta \nabla \ell_j(\bm{\theta}_{k,t-1}).
\end{equation}
This finishes the proof.
\end{proof}

\section{Detailed Version and Proof of Theorem~\ref{thm:main_convergence}}
\label{app:pf-main_convergence}
We will first write the statement for the detailed version of \Cref{thm:main_convergence} and then prove it.

\begin{theorem}[\textbf{Detailed version of \Cref{thm:main_convergence}}]%
\label{thm2}
Suppose Assumptions \ref{asmp-1}, \ref{asmp-2}, and \ref{asmp-3} hold. Let $F(\bm{w}) := \mathcal{L}_\textup{V}\big(\bm{\theta}^{*}(\bm{w})\big)$ be convex (in $\bm{w} \in \bm{\Delta}^m$) and $F^{*} = \min_{\bm{w} \in \bm{\Delta}^m} F(\bm{w})$. Let $\max_{i, j}\big\|\textup{arg min}_{\bm{\theta}} \text{ } \ell_i(\bm{\theta}) - \textup{arg min}_{\bm{\theta}} \text{ } \ell_j(\bm{\theta})\big\|_2 \leq D$ and $R := 2 \max\Big(\big\|\bm{\theta}_{0, 0} - \bm{\theta}^{\ast}_0\big\|_2, \Big(\frac{2L}{\mu} + 1\Big) D\Big)$. Suppose we begin from $\bm{w}_0 = \frac{1}{m} \mathds{1}_m$ and choose $\alpha = \frac{\widehat{\mu} \sqrt{\log m}}{\sqrt{K} G G_\text{V}}$. Then for any {$\eta < \min\Big(\frac{1}{\widehat{L}}, \frac{\mu}{L^2}\Big)$} and $T \geq \big\lceil\frac{\log 4}{\eta \mu}\big\rceil$, we have the following guarantee for \Cref{alg:practical}:
\begin{align}
    \nonumber
    F\Bigg(\frac{1}{K} \sum_{k=0}^{K-1} \bm{w}_k\Bigg) - F^{\ast} & \leq \underbrace{\frac{3 G G_\text{V} \sqrt{\log m}}{2 \widehat{\mu} \sqrt{K}} + 2 \Big(1 - \frac{\eta \mu}{2}\Big)^{T-1} \Bigg(\eta T \big(L G_\text{V} + L_\text{V} G \big) R + \frac{G G_\text{V}}{\mu}\Bigg)}_\text{reducible error $\mathcal{E}$} 
    \\
    \label{eq:41-jan19}
    & \quad + \underbrace{\frac{2 \delta G G_\text{V}}{\min\big(\mu^2, \widehat{\mu}^2\big)}}_\text{irreducible error}.
\end{align}
If we fix the total parameter update budget to $N$, i.e., $K T = N$, then the reducible error as a function of $T$ is
\begin{equation}
    \mathcal{E}(T) = \frac{3 G G_\text{V} \sqrt{\log m}\sqrt{T}}{2 \widehat{\mu} \sqrt{N}} + 2 \Big(1 - \frac{\eta \mu}{2}\Big)^{T-1} \Bigg(\eta T \big(L G_\text{V} + L_\text{V} G \big) R + \frac{G G_\text{V}}{\mu}\Bigg),
\end{equation}
and this is minimized by choosing $T = \Theta(\log N)$. In particular, if we choose $T = \big\lceil\frac{2 \log N}{\eta \mu}\rceil$, the reducible error is:
\begin{equation}
    \mathcal{E} = \mathcal{O}\Bigg(\Bigg(\frac{G G_\text{V} \sqrt{\log m}}{\sqrt{\eta \mu} \widehat{\mu}}\Bigg) \frac{\sqrt{\log N}}{\sqrt{N}}\Bigg).
\end{equation}
The last term in \cref{eq:41-jan19}, i.e., $\frac{2 \delta G G_\text{V}}{\min(\mu^2, \widehat{\mu}^2)}$, is independent of $K$ and $T$, and cannot be reduced (by adjusting $K$ or $T$).
\end{theorem}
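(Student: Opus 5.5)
The plan is to treat the outer loop as inexact mirror descent on the simplex, with the negative-entropy mirror map, using approximate gradients $\bm{g}_{k,T}$, and to reduce everything to a uniform bound on $\|\bm{g}_{k,T}-\nabla F(\bm{w}_k)\|_\infty$. The standard mirror-descent regret inequality uses convexity of $F$ and $\mathrm{KL}(\bm{w}^*\|\bm{w}_0)\le\log m$. Applied to the linear losses $\langle \bm{g}_{k,T},\cdot\rangle$, it gives
\begin{equation*}
F(\overline{\bm{w}}_K)-F^*\le \frac{1}{K}\sum_{k}\langle\nabla F(\bm{w}_k),\bm{w}_k-\bm{w}^*\rangle \le \frac{\log m}{\alpha K}+\frac{\alpha}{2}\max_k\|\bm{g}_{k,T}\|_\infty^2+\frac{2}{K}\sum_k\|\bm{g}_{k,T}-\nabla F(\bm{w}_k)\|_\infty .
\end{equation*}
The last term uses $\|\bm{w}_k-\bm{w}^*\|_1\le 2$. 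I will bound $\|\bm{g}_{k,T}\|_\infty\le \eta G G_\text{V}\sum_{i}(1-\eta\widehat{\mu})^i\le GG_\text{V}/\widehat{\mu}$, which holds because $\eta<1/\widehat L$. With $\alpha=\widehat{\mu}\sqrt{\log m}/(\sqrt K G G_\text{V})$, the first two terms give $\tfrac{3GG_\text{V}\sqrt{\log m}}{2\widehat\mu\sqrt K}$.

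\textbf{Hypergradient error.} I expand $(\bm{H}_k^*)^{-1}=\eta\sum_{i\ge0}(\mathbf{I}-\eta\bm{H}_k^*)^i$, which is valid since $\mu\mathbf I\preceq \bm H_k^*\preceq L\mathbf I$ and $\eta<1/L$. Writing $\bm A=\mathbf I-\eta\bm H_k$ and $\bm A^*=\mathbf I-\eta\bm H_k^*$, I split $g^{(j)}_{k,T}-\nabla F(\bm w_k)^{(j)}$ into four pieces.
\begin{enumerate}
\item The \emph{tail} $\eta\sum_{i\ge T}\langle\nabla\mathcal L_\text{V}(\bm\theta_k^*),(\bm A^*)^i\nabla\ell_j(\bm\theta^*_k)\rangle$. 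It is at most $GG_\text{V}(1-\eta\mu)^T/\mu$.
\item A validation-gradient mismatch: replace $\nabla\mathcal L_\text V(\bm\theta_{k,T})$ by $\nabla\mathcal L_\text V(\bm\theta_k^*)$. This costs $L_\text V G\,\eta T\,\|\bm\theta_{k,T}-\bm\theta_k^*\|$.
\item A domain-gradient mismatch: $\sum_i\eta G_\text V L\|\bm\theta_{k,i}-\bm\theta_k^*\|(1-\eta\widehat\mu)^{T-1-i}$.
\item A Hessian mismatch $\eta\sum_{i<T}\|\bm A^i-(\bm A^*)^i\|GG_\text V$. Using the telescoping identity $\bm A^i-(\bm A^*)^i=\sum_s \bm A^s(\bm A-\bm A^*)(\bm A^*)^{i-1-s}$, each term is $\le i\eta\delta(1-\eta\min(\mu,\widehat\mu))^{i-1}$. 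Summing gives $\delta GG_\text V/\min(\mu,\widehat\mu)^2$.
\end{enumerate}
The factor 2 from $\|\bm w_k-\bm w^*\|_1$ then produces the stated irreducible term.

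\textbf{Trajectory control.} This is the main obstacle. The condition $\eta<\mu/L^2$ gives the contraction $\|\bm\theta_{k,t}-\bm\theta_k^*\|^2\le(1-\eta\mu)^t\|\bm\theta_{k,0}-\bm\theta_k^*\|^2$, hence $\|\bm\theta_{k,t}-\bm\theta_k^*\|\le(1-\eta\mu/2)^t\|\bm\theta_{k,0}-\bm\theta_k^*\|$. The difficulty is that $\bm\theta_{k,0}=\bm\theta_{k-1,T}$ was driven toward $\bm\theta^*_{k-1}$, not $\bm\theta_k^*$. So I need a uniform bound $\|\bm\theta_{k,0}-\bm\theta^*_k\|\le R$ for all $k$, proved by induction. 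All $\bm\theta^*(\bm w)$ lie within a $(L/\mu)D$-type neighborhood of the individual minimizers: the first-order condition plus strong convexity gives $\mu\|\bm\theta^*(\bm w)-\bm\theta_1^\star\|\le L D$. Hence $\|\bm\theta^*_{k-1}-\bm\theta^*_k\|\le(2L/\mu+1)D\le R/2$. The inductive step is
\begin{equation*}
\|\bm\theta_{k,0}-\bm\theta_k^*\|\le(1-\eta\mu/2)^T R+R/2\le R,
\end{equation*}
using $T\ge\lceil\log4/(\eta\mu)\rceil$ so that $(1-\eta\mu/2)^T\le1/2$. This is exactly the role of that hypothesis, and the definition of $R$ handles the base case.

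Plugging the bound in, pieces 2 and 3 are at most $\eta T(LG_\text V+L_\text VG)R(1-\eta\mu/2)^{T-1}$, using $\widehat\mu$ versus $\mu$ monotonicity crudely. The tail is dominated by $\tfrac{GG_\text V}{\mu}(1-\eta\mu/2)^{T-1}$, and doubling gives the displayed bound $\mathcal E$. Finally, with $K=N/T$, term (I) is increasing in $T$ like $\sqrt{T/N}$ while (II) decays geometrically. Choosing $T=\lceil 2\log N/(\eta\mu)\rceil$ makes $(1-\eta\mu/2)^{T-1}\lesssim N^{-1}$, so (II) becomes $\mathcal O(\log N/N)$, negligible next to (I)$=\mathcal O(\sqrt{\log N/(\eta\mu N)})$. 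Any $T$ of smaller order than $\log N$ leaves (II) decaying slower than $N^{-1/2}$, and any larger order inflates (I); hence $T=\Theta(\log N)$ is optimal.
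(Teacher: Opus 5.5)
Your overall architecture is the same as the paper's. It has three components: entropic mirror descent with $\|\bm{g}_{k,T}\|_\infty\le GG_\text{V}/\widehat{\mu}$, the Neumann expansion of $(\bm{H}_k^{\ast})^{-1}$ split into a tail plus a truncated-sum error, and the inductive bound $\|\bm{\theta}_{k,0}-\bm{\theta}_k^{\ast}\|_2\le R$ driven by $T\ge\lceil\log 4/(\eta\mu)\rceil$. However, one step fails as written: the domain-gradient piece. Because you pair the gradient mismatch with the \emph{approximate} operator $\bm{A}=\mathbf{I}-\eta\bm{H}_k$, that piece is
\[
\eta G_\text{V} L R\sum_{i=0}^{T-1}\Big(1-\frac{\eta\mu}{2}\Big)^{i}\big(1-\eta\widehat{\mu}\big)^{T-1-i}.
\]
Bounding it by $\eta T G_\text{V} L R\,(1-\eta\mu/2)^{T-1}$ needs $1-\eta\widehat{\mu}\le 1-\eta\mu/2$, i.e.\ $\widehat{\mu}\ge\mu/2$. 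Nothing in Assumption~\ref{asmp-3} gives this; at best $\bm{\mathcal{H}}(\bm{w})\succeq(\mu-\delta)\mathbf{I}$ with $\delta$ unrestricted. When $\widehat{\mu}<\mu/2$, the $i=0$ term alone, $(1-\eta\widehat{\mu})^{T-1}$, eventually exceeds $T(1-\eta\mu/2)^{T-1}$. So your route only yields the slower rate $1-\eta\min(\widehat{\mu},\mu/2)$, not the stated one.

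Falling back on $\|\bm{A}\|_\text{op}\le 1$ does not rescue this piece. That fallback is harmless for your validation-gradient piece, because there $\|\bm{\theta}_{k,T}-\bm{\theta}_k^{\ast}\|_2$ already carries the full decay. Here, though, it leaves $\eta\sum_{i}(1-\eta\mu/2)^{i}$, which does not vanish as $T$ grows.

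The repair is to telescope in the other order, as the paper does in \Cref{lem3}:
\[
\bm{A}^i\nabla\ell_j(\bm{\theta}_{k,T-1-i})-(\bm{A}^{\ast})^i\nabla\ell_j(\bm{\theta}_k^{\ast})=\big(\bm{A}^i-(\bm{A}^{\ast})^i\big)\nabla\ell_j(\bm{\theta}_{k,T-1-i})+(\bm{A}^{\ast})^i\big(\nabla\ell_j(\bm{\theta}_{k,T-1-i})-\nabla\ell_j(\bm{\theta}_k^{\ast})\big).
\]
The gradient mismatch now carries $(1-\eta\mu)^i(1-\eta\mu/2)^{T-1-i}\le(1-\eta\mu/2)^{T-1}$, with no condition on $\widehat{\mu}$. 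The Hessian-mismatch term is bounded exactly as you did, since $\|\nabla\ell_j(\bm{\theta}_{k,T-1-i})\|_2\le G$ as well. With this change, the rest of your argument delivers the stated bound.

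As a side remark, your bound on $\|\bm{\theta}^{\ast}(\bm{w})-\bm{\theta}^{\ast}(\bm{w}')\|_2$ is a valid and simpler substitute for the paper's index-set argument in \Cref{max:dist}. You use strong convexity of $\mathcal{L}_\text{T}(\cdot,\bm{w})$ at a fixed per-domain minimizer, where its gradient has norm at most $LD$. This even gives $2LD/\mu$ instead of $(2L/\mu+1)D$.
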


\begin{proof}
    Note that $\bm{g}_{k, T} = \Big[{g}^{(1)}_{k,T}, \ldots, {g}^{(m)}_{k,T}\Big]^\top$ is our algorithm's approximated hypergradient. Let $\bm{w}^{*} \in \text{arg min}_{\bm{w} \in \bm{\Delta}^m} F(\bm{w})$. Using the convexity of $F$, we have:
    \begin{flalign}
        \nonumber
        F(\bm{w}_k) - F(\bm{w}^{*}) \leq \langle \nabla F(\bm{w}_k), \bm{w}_k - \bm{w}^{*}\rangle = \langle \bm{g}_{k,T}, \bm{w}_k - \bm{w}^{*}\rangle + \langle \nabla F(\bm{w}_k) - \bm{g}_{k,T}, \bm{w}_k - \bm{w}^{*}\rangle.
    \end{flalign}
    Thus:
    \begin{equation}
        \label{eq:27-jan2}
        \frac{1}{K} \sum_{k=0}^{K-1} \Big(F(\bm{w}_k) - F(\bm{w}^{*})\Big) \leq \underbrace{\frac{1}{K} \sum_{k=0}^{K-1}  \big \langle \bm{g}_{k,T}, \bm{w}_k - \bm{w}^{*} \big \rangle}_\text{(I)} + \underbrace{\frac{1}{K} \sum_{k=0}^{K-1} \Big \langle \nabla F(\bm{w}_k) - \bm{g}_{k,T}, \bm{w}_k - \bm{w}^{*}\Big \rangle}_\text{(II)}.
    \end{equation}
    We will bound (I) first. To that end, note that our mirror descent (MD) update rule for the weights is obtained by solving:
    $$\bm{w}_{k+1} = \text{arg min}_{\bm{w} \in \bm{\Delta}^m} \langle\bm{g}_{k,T}, \bm{w} \rangle + \frac{1}{\alpha}D_\text{KL}(\bm{w} || \bm{w}_k).$$
    Note that for each $j \in [m]$:
    \begin{flalign}
        \nonumber
        \big|{g}^{(j)}_{k,T}\big| & = \eta \Bigg| \sum_{i=0}^{T-1} \Bigg\langle \nabla \mathcal{L}_\text{V}(\bm{\theta}_{k,T}), \big(\textbf{\textup{I}} - \eta \bm{H}_{k}\big)^{T-1-i} \nabla \ell_j\big(\bm{\theta}_{k,i}\big)\Bigg \rangle\Bigg|
        \\
        \nonumber
        & \leq \eta \sum_{i=0}^{T-1} \Bigg|\Bigg\langle \nabla \mathcal{L}_\text{V}(\bm{\theta}_{k,T}), \big(\textbf{\textup{I}} - \eta \bm{H}_{k}\big)^{T-1-i} \nabla \ell_j\big(\bm{\theta}_{k,i}\big)\Bigg \rangle\Bigg|
        \\
        \nonumber
        & \leq \eta \sum_{i=0}^{T-1} \big\|\textbf{\textup{I}} - \eta \bm{H}_{k}\big\|_\text{op}^{T-1-i} \Big\|\nabla \mathcal{L}_\text{V}(\bm{\theta}_{k,T})\Big\|_2 \Big\|\nabla \ell_j\big(\bm{\theta}_{k,i}\big)\Big\|_2
        \\
        \label{eq:29-dec30}
        & \leq \eta \sum_{i=0}^{T-1} (1 - \eta \widehat{\mu})^{T-1-i} G_\text{V} G,
    \end{flalign}
    where we have used $\bm{H}_k$ is PSD with $\widehat{\mu} \textbf{\textup{I}} \preceq \bm{H}_k \preceq \widehat{L} \textbf{\textup{I}}$ and $\eta \leq \frac{1}{\widehat{L}}$ due to which $\big(\textbf{\textup{I}} - \eta \bm{H}_{k}\big)$ is PSD with $\big\|\textbf{\textup{I}} - \eta \bm{H}_{k}\big\|_\text{op} \leq (1 - \eta \widehat{\mu})$, $\mathcal{L}_\text{V}$ is $G_\text{V}$-Lipschitz, and $\ell_j$ is $G$-Lipschitz. Simplifying \cref{eq:29-dec30}, we get:
    \begin{equation}
        \label{eq:30-dec30}
        \|\bm{g}_{k, T}\|_\infty \leq \max_{j \in [m]} \big|{g}^{(j)}_{k,T}\big| \leq \frac{G G_\text{V}}{\widehat{\mu}}. %
    \end{equation}
    {Recall that we start from $\bm{w}_0 = \frac{1}{m} \mathds{1}_m$.} Then using \Cref{lem1} and setting %
    $G_\textup{max} = \frac{G G_\text{V}}{\widehat{\mu}}$ from \cref{eq:30-dec30}, we have:
    \begin{equation}
        \label{eq:30-jan2}
        \text{(I)} = \frac{1}{K} \sum_{k=0}^{K-1}  \big \langle \bm{g}_{k,T}, \bm{w}_k - \bm{w}^{*} \big \rangle \leq \frac{\log m}{K \alpha} + \frac{\alpha G_\textup{max}^2}{2} = \frac{\log m}{K \alpha} + \frac{\alpha G^2 G_\text{V}^2}{2 \widehat{\mu}^2}. %
    \end{equation}
    We will bound (II) now. To that end, note that:
    \begin{equation}
        \langle \nabla F(\bm{w}_k) - \bm{g}_{k,T}, \bm{w}_k - \bm{w}^{*}\rangle \leq \|\nabla F(\bm{w}_k) - \bm{g}_{k,T}\|_\infty \|\bm{w}_k - \bm{w}^{*}\|_1 \leq 2 \|\nabla F(\bm{w}_k) - \bm{g}_{k,T}\|_\infty,
    \end{equation}
    where the last step follows because $\bm{w}_k$ and $\bm{w}^{*}$ lie on the simplex. Thus:
    \begin{flalign}
        \nonumber
        \text{(II)} = \frac{1}{K} \sum_{k=0}^{K-1} \Big \langle \nabla F(\bm{w}_k) - \bm{g}_{k,T}, \bm{w}_k - \bm{w}^{*}\Big \rangle & \leq \frac{2}{K} \sum_{k=0}^{K-1} \Big\|\nabla F(\bm{w}_k) - \bm{g}_{k,T}\Big\|_\infty 
        \\
        \label{eq:33-jan2}
        & = \frac{2}{K} \sum_{k=0}^{K-1} \max_{j \in [m]} \Big|\nabla F(\bm{w}_k)^{(j)} - {g}_{k,T}^{(j)}\Big|.
    \end{flalign}
    Let us denote $\bm{\theta}^{\ast}(\bm{w}_k)$ by $\bm{\theta}^{\ast}_k$ for brevity. Using \Cref{lem2}, we have:
    \begin{equation}
        \label{eq:34-jan1}
        \nabla F(\bm{w}_k)^{(j)} = - \Bigg \langle \nabla \mathcal{L}_\text{V} \big(\bm{\theta}^{\ast}_k\big),  \nabla_{\bm{\theta}}^2 \mathcal{L}_\text{T} \big(\bm{\theta}^{\ast}_k, \bm{w}_k\big)^{-1} \nabla \ell_j\big(\bm{\theta}^{\ast}_k\big) \Bigg \rangle.
    \end{equation}
    Again, for brevity let $\bm{H}_k^{\ast} := \nabla_{\bm{\theta}}^2 \mathcal{L}_\text{T} \big(\bm{\theta}^{\ast}_k, \bm{w}_k\big)$. 
    We can express the Hessian inverse $(\bm{H}_k^{\ast})^{-1}$ using the Neumann series (see for example, \cite{lorraine2020optimizing, agarwal2017second}):
    \begin{equation}
        (\bm{H}_k^{\ast})^{-1} = \eta \sum_{i=0}^\infty \big(\textbf{\textup{I}} - \eta \bm{H}_{k}^{\ast}\big)^i,
    \end{equation}
    {with $\eta < \frac{1}{L}$ (note that this is satisfied because we choose $\eta \leq \frac{\mu}{L^2}$), where recall that $\bm{H}_k^{\ast} \preceq L \textbf{I}$.} Plugging this into \cref{eq:34-jan1} gives us:
    \begin{equation}
        \label{eq:42-jan8}
        \nabla F(\bm{w}_k)^{(j)} = - \eta \sum_{i=0}^\infty \Bigg \langle \nabla \mathcal{L}_\text{V} \big(\bm{\theta}^{\ast}_k\big),  \big(\textbf{\textup{I}} - \eta \bm{H}_{k}^{\ast}\big)^i \nabla \ell_j\big(\bm{\theta}^{\ast}_k\big) \Bigg \rangle.
    \end{equation}
    From \cref{eq:26-jan1}, recall that:
    \begin{equation}
    \label{eq:43-jan8}
    {g}^{(j)}_{k,T} = -\eta \sum_{i=0}^{T-1} \Bigg\langle \nabla \mathcal{L}_\text{V}(\bm{\theta}_{k,T}), \big(\textbf{\textup{I}} - \eta \bm{H}_{k}\big)^{i} \nabla \ell_j\big(\bm{\theta}_{k,T-1-i}\big)\Bigg \rangle.
    \end{equation}
So we have:
\begin{flalign}
    \nonumber
    & \Big|\nabla F(\bm{w}_k)^{(j)} - {g}^{(j)}_{k,T} \Big| \leq 
    \\
    \nonumber
    & \underbrace{\eta \sum_{i=0}^{T-1} \Bigg|\Bigg\{\Bigg\langle \nabla \mathcal{L}_\text{V}(\bm{\theta}_{k,T}), \big(\textbf{\textup{I}} - \eta \bm{H}_{k}\big)^{i} \nabla \ell_j\big(\bm{\theta}_{k,T-1-i}\big)\Bigg \rangle - \Bigg \langle \nabla \mathcal{L}_\text{V} \big(\bm{\theta}^{\ast}_k\big),  \big(\textbf{\textup{I}} - \eta \bm{H}_{k}^{\ast}\big)^i \nabla \ell_j\big(\bm{\theta}^{\ast}_k\big) \Bigg \rangle\Bigg\} \Bigg|}_\text{(A)}
    \\
    \label{eq:38-jan2}
    & + \underbrace{\eta \sum_{i=T}^\infty \Bigg|\Bigg \langle \nabla \mathcal{L}_\text{V} \big(\bm{\theta}^{\ast}_k\big),  \big(\textbf{\textup{I}} - \eta \bm{H}_{k}^{\ast}\big)^i \nabla \ell_j\big(\bm{\theta}^{\ast}_k\big) \Bigg \rangle\Bigg|}_\text{(B)}
\end{flalign}
Using the fact that {$\mu \textbf{I} \preceq \bm{H}_k^{\ast} \preceq L \textbf{I}$,  $\big(\textbf{\textup{I}} - \eta \bm{H}_{k}^{\ast}\big)$ is PSD because $\eta \leq \frac{\mu}{L^2} \leq \frac{1}{L}$ and so $\big\|\textbf{\textup{I}} - \eta \bm{H}_{k}^{\ast}\big\|_\text{op} \leq (1 - \eta \mu)$, $\mathcal{L}_\text{V}$ is $G_\text{V}$-Lipschitz, and $\ell_j$ is $G$-Lipschitz}, we have:
\begin{equation}
    \label{eq:39-jan2}
    \text{(B)} \leq \eta \sum_{t=T}^\infty \Big\|\textbf{\textup{I}} - \eta \bm{H}_{k}^{\ast}\Big\|_\text{op}^{i} \Big\|\nabla \mathcal{L}_\text{V}(\bm{\theta}_{k}^{\ast})\Big\|_2 \Big\|\nabla \ell_j\big(\bm{\theta}_{k}^{\ast}\big)\Big\|_2 \leq 
    \eta \sum_{t=T}^\infty (1 - \eta \mu)^{i} G_\text{V} G = \frac{G_\text{V} G}{\mu} \big(1 - \eta \mu\big)^{T}.
\end{equation}
Using \Cref{lem3} %
and using the fact that $\sum_{r=0}^\infty r x^{r-1} = \frac{1}{(1-x)^2}$ for any $x \in (0,1)$, we have:
\begin{flalign}
    \nonumber
    \text{(A)} & \leq \eta \sum_{i=0}^{T-1} \Bigg\{\eta \delta G_\text{V} G i \Big(1 - \eta \min\big(\mu, \widehat{\mu}\big)\Big)^{i-1} + \Big(L G_\text{V} + L_\text{V} G \Big) \Big(1 - \frac{\eta \mu}{2}\Big)^{T-1} \big\|\bm{\theta}_{k, 0} - \bm{\theta}^{\ast}_k\big\|_2\Bigg\}
    \\
    \label{eq:40-jan2-0}
    & \leq \eta^2 \delta G_\text{V} G \underbrace{\sum_{i=0}^{\infty} i \Big(1 - \eta \min\big(\mu, \widehat{\mu}\big)\Big)^{i-1}}_{= \frac{1}{\eta^2 \min(\mu^2, \widehat{\mu}^2)}} + \eta \Big(L G_\text{V} + L_\text{V} G \Big) T \Big(1 - \frac{\eta \mu}{2}\Big)^{T-1} \big\|\bm{\theta}_{k, 0} - \bm{\theta}^{\ast}_k\big\|_2
    \\
    \label{eq:40-jan2}
    & = \frac{\delta G_\text{V} G}{\min\big(\mu^2, \widehat{\mu}^2\big)} + \eta \Big(L G_\text{V} + L_\text{V} G \Big) T \Big(1 - \frac{\eta \mu}{2}\Big)^{T-1} \big\|\bm{\theta}_{k, 0} - \bm{\theta}^{\ast}_k\big\|_2.
\end{flalign}
Note that we can go from \cref{eq:40-jan2-0} to \cref{eq:40-jan2} because $\eta < \min\Big(\frac{1}{\widehat{L}}, \frac{\mu}{L^2}\Big)$ due to which $\eta \min\big(\mu, \widehat{\mu}\big) < 1$.
\\
\\
Putting \cref{eq:39-jan2} and \cref{eq:40-jan2} into \cref{eq:38-jan2}, we get:
\small
\begin{flalign}
    \nonumber
    \Big|\nabla F(\bm{w}_k)^{(j)} - {g}^{(j)}_{k,T} \Big| & \leq \frac{\delta G_\text{V} G}{\min\big(\mu^2, \widehat{\mu}^2\big)} + \eta \Big(L G_\text{V} + L_\text{V} G \Big) T \Big(1 - \frac{\eta \mu}{2}\Big)^{T-1} \big\|\bm{\theta}_{k, 0} - \bm{\theta}^{\ast}_k\big\|_2 +  \frac{G_\text{V} G}{\mu} \big(1 - \eta \mu\big)^{T}
    \\
    \label{eq:41-jan2-iii}
    & \leq \frac{\delta G_\text{V} G}{\min\big(\mu^2, \widehat{\mu}^2\big)} + \Big(1 - \frac{\eta \mu}{2}\Big)^{T-1} \Bigg(\eta T \big(L G_\text{V} + L_\text{V} G \big) \big\|\bm{\theta}_{k, 0} - \bm{\theta}^{\ast}_k\big\|_2 + \frac{G_\text{V} G}{\mu}\Bigg).
\end{flalign}
\normalsize
Next, using \Cref{lem-bounded-dist}, we have $\big\|\bm{\theta}_{k, 0} - \bm{\theta}^{\ast}_k\big\|_2 \leq R = 2 \max\big(\big\|\bm{\theta}_{0, 0} - \bm{\theta}^{\ast}_0\big\|_2, B\big)$, for all $k \geq 0$, when $T \geq \big \lceil\frac{\log 4}{\eta \mu} \big \rceil$. Using this in \cref{eq:41-jan2-iii}, we get:
\begin{equation}
    \label{eq:41-jan2}
    \Big|\nabla F(\bm{w}_k)^{(j)} - {g}^{(j)}_{k,T} \Big| \leq \frac{\delta G_\text{V} G}{\min\big(\mu^2, \widehat{\mu}^2\big)} + \Big(1 - \frac{\eta \mu}{2}\Big)^{T-1} \Bigg(\eta T \big(L G_\text{V} + L_\text{V} G \big) R + \frac{G_\text{V} G}{\mu}\Bigg).
\end{equation}
Note that \cref{eq:41-jan2} holds for each and every $k$ and $j$. Using the above in \cref{eq:33-jan2}, we get:
\begin{equation}
    \label{eq:42-jan2}
    \text{(II)} \leq \frac{2 \delta G_\text{V} G}{\min\big(\mu^2, \widehat{\mu}^2\big)} + 2 \Big(1 - \frac{\eta \mu}{2}\Big)^{T-1} \Bigg(\eta T \big(L G_\text{V} + L_\text{V} G \big) R + \frac{G_\text{V} G}{\mu}\Bigg).
\end{equation}
Finally, using \cref{eq:30-jan2} and \cref{eq:42-jan2} in \cref{eq:27-jan2}, we get:
\small
\begin{equation}
    \label{eq:43-jan2}
    \frac{1}{K} \sum_{k=0}^{K-1} \Big(F(\bm{w}_k) - F(\bm{w}^{*})\Big) \leq
    \frac{\log m}{K \alpha} + \frac{\alpha G^2 G_\text{V}^2}{2 \widehat{\mu}^2}%
    + 2 \Big(1 - \frac{\eta \mu}{2}\Big)^{T-1} \Bigg(\eta T \big(L G_\text{V} + L_\text{V} G \big) R + \frac{G G_\text{V}}{\mu}\Bigg) 
    + {\frac{2 \delta G G_\text{V}}{\min\big(\mu^2, \widehat{\mu}^2\big)}}.
\end{equation}
\normalsize
Moreover, letting $\overline{\bm{w}}_K := \frac{1}{K} \sum_{k=0}^{K-1} \bm{w}_k$, we have using Jensen's inequality,  $F(\overline{\bm{w}}_K) - F(\bm{w}^{\ast}) \leq \frac{1}{K} \sum_{k=0}^{K-1} \big(F(\bm{w}_k) - F(\bm{w}^{*})\big)$. So from \cref{eq:43-jan2}, we also get:
\begin{flalign}
    \nonumber
    & F(\overline{\bm{w}}_K) - F^{*} \leq
    \frac{\log m}{K \alpha} + \frac{\alpha G^2 G_\text{V}^2}{2 \widehat{\mu}^2}%
    + 2 \Big(1 - \frac{\eta \mu}{2}\Big)^{T-1} \Bigg(\eta T \big(L G_\text{V} + L_\text{V} G \big) R + \frac{G G_\text{V}}{\mu}\Bigg) 
    + {\frac{2 \delta G G_\text{V}}{\min\big(\mu^2, \widehat{\mu}^2\big)}},
\end{flalign}
with $F^{*} = F(\bm{w}^{\ast})$.
If we choose $\alpha = \frac{\widehat{\mu} \sqrt{\log m}}{\sqrt{K} G G_\text{V}}$, then we will get the optimal $\frac{1}{\sqrt{K}}$ dependence on $K$. In particular, we get:
\begin{flalign}
    \nonumber
    & F(\overline{\bm{w}}_K) - F^{*} \leq 
    \underbrace{\frac{3 G G_\text{V} \sqrt{\log m}}{2 \widehat{\mu} \sqrt{K}} + 2 \Big(1 - \frac{\eta \mu}{2}\Big)^{T-1} \Bigg(\eta T \big(L G_\text{V} + L_\text{V} G \big) R + \frac{G G_\text{V}}{\mu}\Bigg)}_\text{reducible error, $\mathcal{E}$} 
    + \underbrace{\frac{2 \delta G G_\text{V}}{\min\big(\mu^2, \widehat{\mu}^2\big)}}_\text{irreducible error}.
\end{flalign}
Let us fix the total parameter update budget to $N$, i.e., $K T = N$. Under this constraint, we will determine the optimal $T$ that will minimize the RHS. In particular, note that the reducible error is
\begin{equation}
    \mathcal{E}(T) = \frac{3 G G_\text{V} \sqrt{\log m}\sqrt{T}}{2 \widehat{\mu} \sqrt{N}} + 2 \Big(1 - \frac{\eta \mu}{2}\Big)^{T-1} \Bigg(\eta T \big(L G_\text{V} + L_\text{V} G \big) R + \frac{G G_\text{V}}{\mu}\Bigg).
\end{equation}
Note that $\mathcal{E}(T)$ behaves like:
\begin{equation}
    h(T) = \frac{a\sqrt{T}}{\sqrt{N}} + b (T + b_2) e^{-cT},
\end{equation}
for some appropriate positive constants $a, b, b_2$, and $c$. Noting that $h'(T) = \frac{a}{2\sqrt{N T}} + b e^{-c T}\big(1 - c(T + b_2)\big)$, it can be verified that the optimal value of $T$, say $T^{\ast}$, that minimizes $h(\cdot)$ satisfies:
\begin{equation}
    T^{\ast} = \frac{1}{c} \log\Bigg(\frac{2b}{a}\sqrt{N T^{\ast}} \big(c (T^{\ast} + b_2) - 1\big)\Bigg).
\end{equation}
Since $T^{\ast} \leq N$ and $T^{\ast} \geq \Omega(1)$, we must have $T^{\ast} = \Theta(\log N)$. Thus, $\mathcal{E}(T)$ is also minimized by choosing $T = \Theta(\log N)$. This finishes the proof.
\end{proof}

\begin{lemma}
    \label{lem1}
    Suppose the conditions of \Cref{thm2} hold and $\|\bm{g}_{k, T}\|_\infty \leq G_\textup{max}$ for all $k$. Then:
    \begin{equation*}
        \sum_{k=0}^{K-1} \big \langle \bm{g}_{k,T}, \bm{w}_k - \bm{w}^{*} \big \rangle \leq \frac{\log m}{\alpha} + \frac{K \alpha G_\textup{max}^2}{2}.
    \end{equation*}
\end{lemma}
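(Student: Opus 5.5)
This is the standard regret bound for exponentiated-gradient (entropic mirror descent) on the simplex, with the linear losses $\langle \bm{g}_{k,T}, \cdot\rangle$ chosen adaptively. I will use the log-partition (potential) argument, which gives the constant $\frac{1}{2}$ in front of $\alpha G_\textup{max}^2$ directly.

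\textbf{Step 1: potential.} Define the unnormalized weights $\widetilde{w}_{k}^{(j)} := \frac{1}{m}\exp\big(-\alpha \sum_{s<k} g_{s,T}^{(j)}\big)$ and $\Phi_k := \log \sum_{j} \widetilde{w}_{k}^{(j)}$. Unrolling the multiplicative update of \Cref{alg:practical} from $\bm{w}_0 = \frac{1}{m}\mathds{1}_m$ shows $\bm{w}_k = \widetilde{\bm{w}}_k / \sum_j \widetilde{w}_k^{(j)}$, with $\Phi_0 = 0$.

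\textbf{Step 2: per-round upper bound.} We have $\Phi_{k+1} - \Phi_k = \log \sum_j w_k^{(j)} e^{-\alpha g_{k,T}^{(j)}}$. Each $-\alpha g_{k,T}^{(j)}$ lies in an interval of length at most $2\alpha G_\textup{max}$, so Hoeffding's lemma gives
\[
\log \mathbb{E}_{j\sim \bm{w}_k}\big[e^{-\alpha g_{k,T}^{(j)}}\big] \le -\alpha\langle \bm{g}_{k,T}, \bm{w}_k\rangle + \frac{\alpha^2 (2G_\textup{max})^2}{8}.
\]
The last term equals $\frac{\alpha^2 G_\textup{max}^2}{2}$. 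This is the only step needing care: the cruder bound $e^{x}\le 1+x+x^2$ (valid for $|x|\le 1$) loses a factor of $2$ and requires $\alpha G_\textup{max}\le 1$. Hoeffding's lemma avoids both problems and yields exactly the stated constant. Summing over $k=0,\ldots,K-1$ gives $\Phi_K \le -\alpha\sum_k \langle \bm{g}_{k,T},\bm{w}_k\rangle + \frac{K\alpha^2 G_\textup{max}^2}{2}$.

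\textbf{Step 3: lower bound and comparator.} For any $\bm{u}\in\bm{\Delta}^m$, concavity of $\log$ (Jensen) gives
\[
\Phi_K = \log \sum_j u^{(j)} \frac{\widetilde{w}_K^{(j)}}{u^{(j)}} \ge \sum_j u^{(j)}\log\frac{\widetilde{w}_K^{(j)}}{u^{(j)}} = -\alpha \sum_k \langle \bm{g}_{k,T},\bm{u}\rangle - \log m + H(\bm{u}),
\]
where the sums run over the support of $\bm{u}$ and $H(\bm{u})\ge 0$ is the Shannon entropy. Take $\bm{u} = \bm{w}^{*}$, combine with Step 2, and divide by $\alpha$. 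This gives
\[
\sum_k \langle \bm{g}_{k,T}, \bm{w}_k - \bm{w}^{*}\rangle \le \frac{\log m}{\alpha} + \frac{K\alpha G_\textup{max}^2}{2},
\]
which is the claim.

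The argument uses no property of $\bm{g}_{k,T}$ beyond $\|\bm{g}_{k,T}\|_\infty \le G_\textup{max}$. It therefore holds even though $\bm{g}_{k,T}$ depends on the past iterates, and the stochastic analogue with $\widetilde{\bm{g}}_{k,T}$ follows from the same proof.
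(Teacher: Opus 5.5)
Your proof is correct, and it takes a different route from the paper. The paper follows the general mirror-descent template of Bubeck's Theorem 4.2. It writes $\bm{w}_{k+1}$ as the KL projection onto $\bm{\Delta}^m$ of the unconstrained dual step $\bm{z}_{k+1}$. The three-point identity and the generalized Pythagorean inequality then telescope $D_\text{KL}(\bm{w}^{*}\,\|\,\bm{w}_k)$. The leftover term $D_\text{KL}(\bm{w}_k\,\|\,\bm{z}_{k+1}) - D_\text{KL}(\bm{w}_{k+1}\,\|\,\bm{z}_{k+1})$ is bounded by $\alpha^2 G_\textup{max}^2/2$ using $1$-strong convexity of negative entropy in $\ell_1$ (Pinsker), and the proof ends with $D_\text{KL}(\bm{w}^{*}\,\|\,\bm{w}_0) \le \log m$.

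You instead use the closed form of the exponential-weights iterates and track the log-partition potential. The two arguments are convex duals of each other:
\begin{itemize}
\item $\Phi_K + \log m$ is the conjugate of negative entropy on the simplex, evaluated at $-\alpha\sum_{s<K}\bm{g}_{s,T}$.
\item Your Jensen step is the Fenchel--Young (Gibbs) inequality, and it produces exactly $\log m - H(\bm{w}^{*}) = D_\text{KL}(\bm{w}^{*}\,\|\,\bm{w}_0)$, the same quantity the paper bounds by $\log m$.
\item Hoeffding's lemma is a smoothness bound for log-sum-exp, the dual of the $\ell_1$-strong convexity the paper uses. This is why the constant $\tfrac12$ comes out identical.
\end{itemize}

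Your version is more elementary and self-contained. It needs no projections, and no generalized (unnormalized) KL at the off-simplex point $\bm{z}_{k+1}$, which the paper leaves implicit. Hoeffding also gives a slightly sharper, shift-invariant per-round term, $\frac{\alpha}{8}\big(\max_j g_{k,T}^{(j)} - \min_j g_{k,T}^{(j)}\big)^2$. The paper's route is the one that transfers to other mirror maps and constraint sets.

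Two small points. First, in Step 3, if $\bm{w}^{*}$ does not have full support, your first ``$=$'' is really ``$\geq$'', since you discard positive terms. That is the direction you need, so nothing breaks.

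Second, your closing remark about the stochastic case needs a qualifier. Under \Cref{asmp-stoc}, $\widetilde{\bm{g}}_{k,T}$ is not almost surely bounded, so the lemma does not apply verbatim. Because Hoeffding can be applied each round with the realized range, your argument gives the pathwise bound $\frac{\log m}{\alpha} + \frac{\alpha}{2}\sum_{k}\|\widetilde{\bm{g}}_{k,T}\|_\infty^2$. After taking expectations you therefore need a bound on $\mathbb{E}\big[\|\widetilde{\bm{g}}_{k,T}\|_\infty^2\big]$, not merely on $\mathbb{E}\big[\|\widetilde{\bm{g}}_{k,T}\|_\infty\big]$ as the paper uses. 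The mirror-descent route has the same requirement.
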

\begin{proof}
    This proof is almost identical to the proof of Theorem 4.2 in \cite{bubeck2015convex}.
    
    Let $\Phi(\cdot)$ denote the negative entropy function and let $\bm{z}_{k+1}$ be such that:
    $$\nabla \Phi(\bm{z}_{k+1}) = \nabla \Phi(\bm{w}_{k}) - \alpha \bm{g}_{k,T}.$$ 
    Note that 
    $$\bm{w}_{k+1} = \Pi_{\bm{\Delta}^m}(\bm{z}_{k+1}),$$ 
    where $\Pi_{\bm{\Delta}^m}$ is the projection operator onto the $m$-dimensional simplex. Also, note that the Bregman divergence associated with the negative entropy function is the KL divergence.
    \\
    \\
    Following the proof of Theorem 4.2 in \cite{bubeck2015convex}, we have:
    \begin{flalign}
        \nonumber
        \big \langle \bm{g}_{k,T}, \bm{w}_k - \bm{w}^{*} \big \rangle & = \frac{1}{\alpha} \big \langle \nabla \Phi(\bm{w}_{k}) - \nabla \Phi(\bm{z}_{k+1}), \bm{w}_k - \bm{w}^{*} \big \rangle
        \\
        \nonumber
        & = \frac{1}{\alpha} \Big(D_\text{KL}\big(\bm{w}^{*} || \bm{w}_k\big) + D_\text{KL}\big(\bm{w}_k || \bm{z}_{k+1}\big) - D_\text{KL}\big(\bm{w}^{*} || \bm{z}_{k+1}\big)\Big)
        \\
        & \leq \frac{1}{\alpha} \Big(D_\text{KL}\big(\bm{w}^{*} || \bm{w}_k\big) + D_\text{KL}\big(\bm{w}_k || \bm{z}_{k+1}\big) - D_\text{KL}\big(\bm{w}^{*} || \bm{w}_{k+1}\big) - D_\text{KL}\big(\bm{w}_{k+1} || \bm{z}_{k+1} \big) \Big).
    \end{flalign}
    Summing up, we get:
    \begin{equation}
        \label{eq:31-dec30}
        \sum_{k=0}^{K-1} \big \langle \bm{g}_{k,T}, \bm{w}_k - \bm{w}^{*} \big \rangle \leq \frac{1}{\alpha} D_\text{KL}\big(\bm{w}^{*} || \bm{w}_0\big) + \frac{1}{\alpha} \sum_{k=0}^{K-1} \Big(D_\text{KL}\big(\bm{w}_k || \bm{z}_{k+1}\big) -  D_\text{KL}\big(\bm{w}_{k+1} || \bm{z}_{k+1} \big)\Big).
    \end{equation}
    Next, using the same steps as in the proof of Theorem 4.2 of \cite{bubeck2015convex}, the fact that $\Phi(\cdot)$ is 1-strongly convex w.r.t. the $\ell_1$ norm on $\bm{\Delta}^m$, and $\|\bm{g}_{k, T}\|_\infty \leq G_\text{max}$, we have:
    \begin{equation}
        D_\text{KL}\big(\bm{w}_k || \bm{z}_{k+1}\big) -  D_\text{KL}\big(\bm{w}_{k+1} || \bm{z}_{k+1} \big) \leq \frac{\alpha^2 G_\text{max}^2}{2}.
    \end{equation}
    Plugging this into \cref{eq:31-dec30}, we get:
    \begin{equation}
        \label{eq:33-dec30}
        \sum_{k=0}^{K-1} \big \langle \bm{g}_{k,T}, \bm{w}_k - \bm{w}^{*} \big \rangle \leq \frac{1}{\alpha} D_\text{KL}\big(\bm{w}^{*} || \bm{w}_0\big) + \frac{K \alpha G_\text{max}^2}{2}.
    \end{equation}
    It can be verified that if $\bm{w}_0 = \frac{1}{m} \mathds{1}_m$, then $D_\text{KL}\big(\bm{w}^{*} || \bm{w}_0\big) \leq \log m$. Using this in \cref{eq:33-dec30} finishes the proof.
\end{proof}

\begin{lemma}
    \label{lem2}
    We have:
    \begin{equation*}
        \nabla F(\bm{w})^{(j)} = - \Bigg \langle \nabla_{\bm{\theta}} \mathcal{L}_\text{V} \big(\bm{\theta}^{\ast}(\bm{w})\big),  \nabla_{\bm{\theta}}^2 \mathcal{L}_\text{T} \big(\bm{\theta}^{\ast}(\bm{w}), \bm{w}\big)^{-1} \nabla \ell_j\big(\bm{\theta}^{\ast}(\bm{w})\big) \Bigg \rangle.
    \end{equation*}
\end{lemma}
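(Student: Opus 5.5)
We will derive the formula by implicit differentiation of the first-order optimality condition of the inner problem. By \Cref{asmp-1}, each $\ell_i$ is $\mu$-strongly convex, and $\bm{w} \in \bm{\Delta}^m$ has nonnegative entries summing to one. Hence $\mathcal{L}_\text{T}(\cdot,\bm{w}) = \sum_i w^{(i)} \ell_i$ is $\mu$-strongly convex, and $\bm{\theta}^{\ast}(\bm{w})$ is its unique minimizer, characterized by
\begin{equation*}
    \Phi(\bm{\theta}, \bm{w}) := \nabla_{\bm{\theta}} \mathcal{L}_\text{T}(\bm{\theta}, \bm{w}) = \sum_{i=1}^m w^{(i)} \nabla \ell_i(\bm{\theta}) = \vec{0} \quad \text{at } \bm{\theta} = \bm{\theta}^{\ast}(\bm{w}).
\end{equation*}
The map $\Phi$ is affine, hence smooth, in $\bm{w}$. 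Its Jacobian in $\bm{\theta}$ is $\nabla^2_{\bm{\theta}} \mathcal{L}_\text{T}(\bm{\theta},\bm{w}) \succeq \mu \textbf{I}$, which is invertible. This is the step needing the most care. The implicit function theorem requires $\Phi$ to be $C^1$ in $\bm{\theta}$, i.e.\ twice continuously differentiable $\ell_i$, which we assume implicitly as the paper does when writing Hessians. One must also view $\Phi$ as defined on an open neighborhood of the simplex, say all $\bm{w}$ with $\sum_i w^{(i)}>0$ close to $\bm{\Delta}^m$, where strong convexity still holds. The derivative is then understood coordinatewise in this extension, consistent with how $\partial/\partial w^{(j)}$ is used in the algorithm.

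Under these conditions the implicit function theorem gives that $\bm{w} \mapsto \bm{\theta}^{\ast}(\bm{w})$ is differentiable. Differentiating the identity $\Phi(\bm{\theta}^{\ast}(\bm{w}),\bm{w}) = \vec{0}$ with respect to $w^{(j)}$ gives
\begin{equation*}
    \nabla^2_{\bm{\theta}} \mathcal{L}_\text{T}(\bm{\theta}^{\ast}, \bm{w}) \frac{\partial \bm{\theta}^{\ast}}{\partial w^{(j)}} + \nabla \ell_j(\bm{\theta}^{\ast}) = \vec{0}.
\end{equation*}
Here the second term comes from $\partial \Phi/\partial w^{(j)} = \nabla \ell_j$. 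Solving for the derivative,
\begin{equation*}
    \frac{\partial \bm{\theta}^{\ast}}{\partial w^{(j)}} = -\nabla^2_{\bm{\theta}} \mathcal{L}_\text{T}(\bm{\theta}^{\ast}, \bm{w})^{-1} \nabla \ell_j(\bm{\theta}^{\ast}).
\end{equation*}

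Finally, we apply the chain rule to $F(\bm{w}) = \mathcal{L}_\text{V}(\bm{\theta}^{\ast}(\bm{w}))$, which is valid because $\mathcal{L}_\text{V}$ is differentiable by \Cref{asmp-2}. This gives
\begin{equation*}
    \nabla F(\bm{w})^{(j)} = \Big\langle \nabla_{\bm{\theta}}\mathcal{L}_\text{V}(\bm{\theta}^{\ast}), \frac{\partial \bm{\theta}^{\ast}}{\partial w^{(j)}}\Big\rangle.
\end{equation*}
Substituting the expression for $\partial \bm{\theta}^{\ast}/\partial w^{(j)}$ yields the claimed formula. Symmetry of the Hessian means the inverse can be placed on either side of the inner product.
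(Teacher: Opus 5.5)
Your proof is correct and follows the same route as the paper: apply the chain rule to $F(\bm{w}) = \mathcal{L}_\text{V}(\bm{\theta}^{\ast}(\bm{w}))$, then differentiate the stationarity condition $\sum_p w^{(p)} \nabla \ell_p(\bm{\theta}^{\ast}) = \vec{0}$ with respect to $w^{(j)}$, solve for $\partial \bm{\theta}^{\ast}/\partial w^{(j)}$, and substitute. You also supply several justifications the paper leaves implicit for applying the implicit function theorem: twice-differentiability of the $\ell_i$, invertibility of the Hessian via strong convexity, uniqueness of the critical point, and extending $\bm{w}$ to an open neighborhood of the simplex.
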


\begin{proof}
    $\nabla F(\bm{w})$ is the hyper-gradient that can be computed using the Implicit Function Theorem (IFT) (see for example, \cite{bengio2000gradient}). It can be derived as follows.
    \begin{flalign}
        \label{eq:39-jan1}
        \nabla F(\bm{w})^{(j)} = \frac{\partial \mathcal{L}_\text{V}(\bm{\theta}^{\ast}(\bm{w}))}{\partial w^{(j)}} = \Bigg \langle \nabla_{\bm{\theta}} \mathcal{L}_\text{V} \big(\bm{\theta}^{\ast}(\bm{w})\big), \frac{\partial \bm{\theta}^{\ast}(\bm{w})}{\partial w^{(j)}} \Bigg \rangle.
    \end{flalign}
    For convenience, let us denote $\bm{\theta}^{\ast}(\bm{w})$ by just $\bm{\theta}^{\ast}$. Since $\bm{\theta}^{\ast}$ is the minimizer of $\mathcal{L}_\text{T}(\cdot, \bm{w})$, we have:
    \begin{equation}
        \sum_{p=1}^m w^{(p)} \nabla \ell_p\big(\bm{\theta}^{\ast}\big) = \vec{\bm{0}}.
    \end{equation}
    Differentiating the above w.r.t. $w^{(j)}$, we get:
    \begin{equation}
        \nabla \ell_j\big(\bm{\theta}^{\ast}\big) + \underbrace{\Bigg(\sum_{p=1}^m w^{(p)} \nabla^2 \ell_p\big(\bm{\theta}^{\ast}\big)\Bigg)}_{= \nabla_{\bm{\theta}}^2 \mathcal{L}_\text{T} (\bm{\theta}^{\ast}, \bm{w})} \frac{\partial \bm{\theta}^{\ast}}{\partial w^{(j)}} = \vec{\bm{0}} \implies \frac{\partial \bm{\theta}^{\ast}}{\partial w^{(j)}} = - \nabla_{\bm{\theta}}^2 \mathcal{L}_\text{T} (\bm{\theta}^{\ast}, \bm{w})^{-1} \nabla \ell_j\big(\bm{\theta}^{\ast}\big).
    \end{equation}
    Plugging this into \cref{eq:39-jan1} gives us the desired result.
\end{proof}

\begin{lemma}
    \label{lem3}
    Suppose the conditions of \Cref{thm2} hold. Then:
    \begin{flalign*}
        & \Bigg|\Bigg\langle \nabla \mathcal{L}_\text{V}(\bm{\theta}_{k,T}), \big(\textbf{\textup{I}} - \eta \bm{H}_{k}\big)^{i} \nabla \ell_j\big(\bm{\theta}_{k,T-1-i}\big)\Bigg \rangle - \Bigg \langle \nabla \mathcal{L}_\text{V} \big(\bm{\theta}^{\ast}_k\big),  \big(\textbf{\textup{I}} - \eta \bm{H}_{k}^{\ast}\big)^i \nabla \ell_j\big(\bm{\theta}^{\ast}_k\big) \Bigg \rangle\Bigg|
        \\
        & \leq %
        \eta \delta G_\text{V} G i \Big(1 - \eta \min\big(\mu, \widehat{\mu}\big)\Big)^{i-1} + \Big(L G_\text{V} + L_\text{V} G \Big) \Big(1 - \frac{\eta \mu}{2}\Big)^{T-1} \big\|\bm{\theta}_{k, 0} - \bm{\theta}^{\ast}_k\big\|_2.
    \end{flalign*}
\end{lemma}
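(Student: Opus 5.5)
The plan is to write the difference of the two inner products as a telescoping sum of three pieces. Each piece isolates one source of error: the validation gradient evaluated at $\bm{\theta}_{k,T}$ instead of $\bm{\theta}^{\ast}_k$, the domain gradient evaluated at $\bm{\theta}_{k,T-1-i}$ instead of $\bm{\theta}^{\ast}_k$, and the approximate Hessian $\bm{H}_k$ used in place of $\bm{H}^{\ast}_k$. Write $\bm{A} := (\textbf{I} - \eta \bm{H}_k)$ and $\bm{B} := (\textbf{I} - \eta \bm{H}^{\ast}_k)$. I would split the difference as
\begin{align*}
&\big\langle \nabla \mathcal{L}_\text{V}(\bm{\theta}_{k,T}) - \nabla \mathcal{L}_\text{V}(\bm{\theta}^{\ast}_k), \bm{A}^i \nabla \ell_j(\bm{\theta}_{k,T-1-i})\big\rangle + \big\langle \nabla \mathcal{L}_\text{V}(\bm{\theta}^{\ast}_k), \bm{A}^i\big(\nabla \ell_j(\bm{\theta}_{k,T-1-i}) - \nabla \ell_j(\bm{\theta}^{\ast}_k)\big)\big\rangle \\
&\quad + \big\langle \nabla \mathcal{L}_\text{V}(\bm{\theta}^{\ast}_k), (\bm{A}^i - \bm{B}^i)\nabla \ell_j(\bm{\theta}^{\ast}_k)\big\rangle .
\end{align*}
Since $\eta < 1/\widehat{L}$ and $\eta \le \mu/L^2 \le 1/L$, both $\bm{A}$ and $\bm{B}$ are PSD, with $\|\bm{A}\|_\text{op} \le 1 - \eta\widehat{\mu}$ and $\|\bm{B}\|_\text{op} \le 1 - \eta\mu$.

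\textbf{Hessian-mismatch term.} For the third piece I would use the identity $\bm{A}^i - \bm{B}^i = \sum_{s=0}^{i-1} \bm{A}^{s}(\bm{A}-\bm{B})\bm{B}^{i-1-s}$. Here $\|\bm{A}-\bm{B}\|_\text{op} = \eta\|\bm{H}_k - \bm{H}^{\ast}_k\|_\text{op} \le \eta\delta$ by Assumption~\ref{asmp-3}. Each summand then has norm at most $\eta\delta\,(1-\eta\min(\mu,\widehat{\mu}))^{i-1}$, and there are $i$ of them. Combined with the Lipschitz bounds $G_\text{V}$ and $G$, this gives exactly the $\eta\delta G_\text{V} G\, i\,(1-\eta\min(\mu,\widehat{\mu}))^{i-1}$ term.

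\textbf{Trajectory terms.} For the first two pieces I need a linear convergence rate for the inner gradient descent, which runs with fixed weight $\bm{w}_k$ and hence on a $\mu$-strongly-convex, $L$-smooth objective. I would use the standard estimate
\[
\|\bm{\theta}_{k,t+1}-\bm{\theta}^{\ast}_k\|_2^2 \le (1 - 2\eta\mu + \eta^2L^2)\|\bm{\theta}_{k,t}-\bm{\theta}^{\ast}_k\|_2^2 .
\]
Because $\eta \le \mu/L^2$, the factor is at most $1-\eta\mu$, and $\sqrt{1-\eta\mu} \le 1-\eta\mu/2$. This gives $\|\bm{\theta}_{k,t}-\bm{\theta}^{\ast}_k\|_2 \le (1-\eta\mu/2)^t\|\bm{\theta}_{k,0}-\bm{\theta}^{\ast}_k\|_2$. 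The first piece is then bounded by
\[
L_\text{V}\,\|\bm{\theta}_{k,T}-\bm{\theta}^{\ast}_k\|_2\, G \le L_\text{V} G\,(1-\eta\mu/2)^{T}\|\bm{\theta}_{k,0}-\bm{\theta}^{\ast}_k\|_2 ,
\]
which is already within the claimed $(1-\eta\mu/2)^{T-1}$ factor. The second piece is bounded by
\[
G_\text{V} L\,\|\bm{A}\|_\text{op}^{i}\,(1-\eta\mu/2)^{T-1-i}\,\|\bm{\theta}_{k,0}-\bm{\theta}^{\ast}_k\|_2 .
\]
This is why I deliberately placed $\bm{A}^i$ against the gradient difference in the split above.

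\textbf{Main obstacle.} The delicate step is this second piece when $i$ is close to $T-1$. There the iterate $\bm{\theta}_{k,T-1-i}$ is near $\bm{\theta}_{k,0}$ and is not yet close to $\bm{\theta}^{\ast}_k$, so all the decay has to come from $\|\bm{A}^i\|_\text{op} \le (1-\eta\widehat{\mu})^i$. Combining the two factors gives $(1-\eta\min(\widehat{\mu},\mu/2))^{T-1}$. This reduces to the stated $(1-\eta\mu/2)^{T-1}$ when $\widehat{\mu} \ge \mu/2$, which is natural because $\bm{H}_k$ approximates $\bm{H}^{\ast}_k \succeq \mu\textbf{I}$. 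In general, however, it only yields the weaker rate with $\min(\widehat{\mu},\mu/2)$. I would first check whether $\widehat{\mu} \ge \mu/2$ can be extracted from Assumption~\ref{asmp-3}, for instance when $\delta \le \mu/2$, since then $\bm{H}_k \succeq (\mu - \delta)\textbf{I}$. Failing that, I would carry the rate with $\min(\widehat{\mu},\mu/2)$ in place of $\mu/2$ into the downstream bounds; this does not change the $T = \Theta(\log N)$ conclusion. Summing the three pieces gives the lemma with $(L G_\text{V} + L_\text{V} G)$ multiplying the trajectory term.
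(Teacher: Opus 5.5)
\textbf{There is a genuine gap, though it is easily fixed.} As you yourself note, your decomposition does not prove the lemma as stated. Your second piece pairs the trajectory error $\nabla \ell_j(\bm{\theta}_{k,T-1-i}) - \nabla \ell_j(\bm{\theta}^{\ast}_k)$ with $\bm{A}^i$, whose only guaranteed contraction is $(1-\eta\widehat{\mu})^i$. For $i$ near $T-1$ this yields $(1-\eta\min(\widehat{\mu},\mu/2))^{T-1}$ instead of $(1-\eta\mu/2)^{T-1}$.

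Neither of your escape routes holds in general. Take the isotropic choice $\bm{H}_k=\gamma\textbf{I}$ with $\gamma<\mu/2$. Then $\widehat{\mu}=\gamma<\mu/2$, and also $\delta\ge\mu-\gamma>\mu/2$. So Assumption~\ref{asmp-3} gives you neither $\widehat{\mu}\ge\mu/2$ nor $\delta\le\mu/2$. Weakening the rate would still leave $T=\Theta(\log N)$ downstream, but it does not prove the stated lemma. It is also unnecessary: the obstacle comes only from the order of your telescoping, and the pairing you call deliberate is exactly the wrong one.

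\textbf{The fix is to telescope in the other order,} which is what the paper does. Put the exact-Hessian power $\bm{B}^i$ against the trajectory error, and put the mismatch $\bm{A}^i-\bm{B}^i$ against the trajectory gradient:
\begin{align*}
&\big\langle \nabla\mathcal{L}_\text{V}(\bm{\theta}_{k,T}), (\bm{A}^i-\bm{B}^i)\nabla\ell_j(\bm{\theta}_{k,T-1-i})\big\rangle + \big\langle \nabla\mathcal{L}_\text{V}(\bm{\theta}_{k,T}), \bm{B}^i\big(\nabla\ell_j(\bm{\theta}_{k,T-1-i})-\nabla\ell_j(\bm{\theta}^{\ast}_k)\big)\big\rangle \\
&\quad + \big\langle \nabla\mathcal{L}_\text{V}(\bm{\theta}_{k,T})-\nabla\mathcal{L}_\text{V}(\bm{\theta}^{\ast}_k), \bm{B}^i\nabla\ell_j(\bm{\theta}^{\ast}_k)\big\rangle .
\end{align*}
Each term is then bounded as follows.
\begin{itemize}
\item The first term is still at most $G_\text{V}G\|\bm{A}^i-\bm{B}^i\|_\text{op}$, because $\nabla\ell_j$ is bounded by $G$ at every point, not just at $\bm{\theta}^{\ast}_k$. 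Your mismatch bound therefore carries over unchanged.
\item The second term is at most $LG_\text{V}(1-\eta\mu)^i(1-\eta\mu/2)^{T-1-i}\|\bm{\theta}_{k,0}-\bm{\theta}^{\ast}_k\|_2 \le LG_\text{V}(1-\eta\mu/2)^{T-1}\|\bm{\theta}_{k,0}-\bm{\theta}^{\ast}_k\|_2$. This holds unconditionally, since $\|\bm{B}\|_\text{op}\le 1-\eta\mu$ follows from Assumption~\ref{asmp-1} and $\eta\le\mu/L^2$ alone.
\item The third term is handled exactly like your first piece.
\end{itemize}
With this reordering, the rest of your argument goes through and gives the lemma exactly as stated, with no relation needed between $\widehat{\mu}$ and $\mu$. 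That includes your correct derivation of the contraction $(1-\eta\mu/2)^t$ from $1-2\eta\mu+\eta^2L^2\le 1-\eta\mu$, which the paper only cites as standard.
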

\begin{proof}
    Note that:
    \begin{flalign}
        & \Bigg|\Bigg\langle \nabla \mathcal{L}_\text{V}(\bm{\theta}_{k,T}), \big(\textbf{\textup{I}} - \eta \bm{H}_{k}\big)^{i} \nabla \ell_j\big(\bm{\theta}_{k,T-1-i}\big)\Bigg \rangle - \Bigg \langle \nabla \mathcal{L}_\text{V} \big(\bm{\theta}^{\ast}_k\big),  \big(\textbf{\textup{I}} - \eta \bm{H}_{k}^{\ast}\big)^i \nabla \ell_j\big(\bm{\theta}^{\ast}_k\big) \Bigg \rangle\Bigg|
        \\
        & = \Bigg|\Bigg\langle \nabla \mathcal{L}_\text{V}(\bm{\theta}_{k,T}), \big(\textbf{\textup{I}} - \eta \bm{H}_{k}\big)^{i} \nabla \ell_j\big(\bm{\theta}_{k,T-1-i}\big)\Bigg \rangle - \Bigg \langle \nabla \mathcal{L}_\text{V} \big(\bm{\theta}_{k,T}\big),  \big(\textbf{\textup{I}} - \eta \bm{H}_{k}^{\ast}\big)^i \nabla \ell_j\big(\bm{\theta}^{\ast}_k\big) \Bigg \rangle
        \\
        \nonumber
        & \quad \quad \quad - \Bigg \langle \Big(\nabla \mathcal{L}_\text{V} \big(\bm{\theta}^{\ast}_k\big) - \nabla \mathcal{L}_\text{V} \big(\bm{\theta}_{k,T}\big)\Big),  \big(\textbf{\textup{I}} - \eta \bm{H}_{k}^{\ast}\big)^i \nabla \ell_j\big(\bm{\theta}^{\ast}_k\big) \Bigg \rangle
        \Bigg|
        \\ 
        & \leq {\Bigg|\Bigg\langle \nabla \mathcal{L}_\text{V}(\bm{\theta}_{k,T}), \big(\textbf{\textup{I}} - \eta \bm{H}_{k}\big)^{i} \nabla \ell_j\big(\bm{\theta}_{k,T-1-i}\big) - \big(\textbf{\textup{I}} - \eta \bm{H}_{k}^{\ast}\big)^i \nabla \ell_j\big(\bm{\theta}^{\ast}_k\big) \Bigg \rangle \Bigg|}
        \\ 
        \nonumber
        & \quad \quad \quad + {\Bigg| \Bigg \langle \Big(\nabla \mathcal{L}_\text{V} \big(\bm{\theta}^{\ast}_k\big) - \nabla \mathcal{L}_\text{V} \big(\bm{\theta}_{k,T}\big)\Big),  \big(\textbf{\textup{I}} - \eta \bm{H}_{k}^{\ast}\big)^i \nabla \ell_j\big(\bm{\theta}^{\ast}_k\big) \Bigg \rangle
        \Bigg|}
        \\
        \label{eq:50-jan2}
        & \leq \underbrace{\Bigg|\Bigg\langle \nabla \mathcal{L}_\text{V}(\bm{\theta}_{k,T}), \Big(\big(\textbf{\textup{I}} - \eta \bm{H}_{k}\big)^{i} - \big(\textbf{\textup{I}} - \eta \bm{H}_{k}^{\ast}\big)^i \Big) \nabla \ell_j\big(\bm{\theta}_{k,T-1-i}\big) \Bigg \rangle \Bigg|}_{(1)}
        \\
        \nonumber
        & \quad \quad \quad + \underbrace{\Bigg|\Bigg\langle \nabla \mathcal{L}_\text{V}(\bm{\theta}_{k,T}), \big(\textbf{\textup{I}} - \eta \bm{H}_{k}^{\ast}\big)^i \Big(\nabla \ell_j\big(\bm{\theta}^{\ast}_k\big) - \nabla \ell_j\big(\bm{\theta}_{k,T-1-i}\big)\Big) \Bigg \rangle \Bigg|}_{(2)}
        \\
        \nonumber
        & \quad \quad \quad + \underbrace{\Bigg| \Bigg \langle \Big(\nabla \mathcal{L}_\text{V} \big(\bm{\theta}^{\ast}_k\big) - \nabla \mathcal{L}_\text{V} \big(\bm{\theta}_{k,T}\big)\Big),  \big(\textbf{\textup{I}} - \eta \bm{H}_{k}^{\ast}\big)^i \nabla \ell_j\big(\bm{\theta}^{\ast}_k\big) \Bigg \rangle
        \Bigg|}_{(3)}.
    \end{flalign}
    {Using the fact that $\mathcal{L}_\text{V}$ is $G_\text{V}$-Lipschitz, $\big\|\textbf{\textup{I}} - \eta \bm{H}_{k}^{\ast}\big\|_\text{op} \leq (1 - \eta \mu)$ (see the explanation after \cref{eq:38-jan2}), $\ell_j$ is $L$-smooth}, we have:
    \begin{equation}
        \label{eq:50-0-jan1}
        \text{(2)} \leq G_\text{V} (1 - \eta \mu)^i L \big\|\bm{\theta}_{k,T-1-i} - \bm{\theta}^{\ast}_k\big\|_2.
    \end{equation}
    {Using standard guarantees of gradient descent on strongly convex and smooth functions (recall $\mathcal{L}_\text{T}(\bm{\theta}, \cdot)$ is $\mu$-strongly-convex and $L$-smooth)}, it can be shown that:
    \begin{equation}
        \label{eq:52-jan1}
        \big\|\bm{\theta}_{k,T-1-i} - \bm{\theta}^{\ast}_k\big\|_2^2 \leq \Big(1 - \frac{\eta \mu}{2}\Big)^{2(T-1-i)}  \big\|\bm{\theta}_{k, 0} - \bm{\theta}^{\ast}_k\big\|_2^2, 
    \end{equation}
    {when $\eta \leq \frac{\mu}{L^2}$.} Using this in \cref{eq:50-0-jan1} gives us:
    \begin{equation}
        \label{eq:53-jan2}
        \text{(2)} \leq L G_\text{V} (1 - \eta \mu)^i \Big(1 - \frac{\eta \mu}{2}\Big)^{T-1-i} \big\|\bm{\theta}_{k, 0} - \bm{\theta}^{\ast}_k\big\|_2 \leq L G_\text{V} \Big(1 - \frac{\eta \mu}{2}\Big)^{T-1} \big\|\bm{\theta}_{k, 0} - \bm{\theta}^{\ast}_k\big\|_2.
    \end{equation}
    {Similarly, using the fact that $\mathcal{L}_\text{V}$ is $L_\text{V}$-smooth, $\big\|\textbf{\textup{I}} - \eta \bm{H}_{k}^{\ast}\big\|_\text{op} \leq (1 - \eta \mu)$, $\ell_j$ is $G$-Lipschitz}, we have:
    \begin{equation}
        \label{eq:50-jan1}
        \text{(3)} \leq L_\text{V} \big\|\bm{\theta}_{k,T} - \bm{\theta}^{\ast}_k\big\|_2 (1 - \eta \mu)^i G.
    \end{equation}
    {Also, just like \cref{eq:52-jan1}, we have}:
    \begin{equation}
        \label{eq:80-jan15}
        \big\|\bm{\theta}_{k,T} - \bm{\theta}^{\ast}_k\big\|_2^2 \leq \Big(1 - \frac{\eta \mu}{2}\Big)^{2T}  \big\|\bm{\theta}_{k, 0} - \bm{\theta}^{\ast}_k\big\|_2^2, 
    \end{equation}
    {when $\eta \leq \frac{\mu}{L^2}$.} Using this in \cref{eq:50-jan1} gives us:
    \begin{equation}
        \label{eq:56-jan2}
        \text{(3)} \leq L_\text{V} G \Big(1 - \frac{\eta \mu}{2}\Big)^{T} (1 - \eta \mu)^i \big\|\bm{\theta}_{k, 0} - \bm{\theta}^{\ast}_k\big\|_2 \leq L_\text{V} G \Big(1 - \frac{\eta \mu}{2}\Big)^{T+i} \big\|\bm{\theta}_{k, 0} - \bm{\theta}^{\ast}_k\big\|_2.
    \end{equation}
    As for (1), using the fact that $\mathcal{L}_\text{V}$ is $G_\text{V}$-Lipschitz and $\ell_j$ is $G$-Lipschitz, we have:
    \begin{equation}
        \label{eq:57-jan2}
        \text{(1)} \leq G_\text{V} G \Big\|\big(\textbf{\textup{I}} - \eta \bm{H}_{k}\big)^{i} - \big(\textbf{\textup{I}} - \eta \bm{H}_{k}^{\ast}\big)^i\Big\|_\text{op}.
    \end{equation}
    Note that for any two matrices square matrices $\bm{P}$ and $\bm{Q}$, we have:
    \begin{flalign}
        & \bm{P}^i - \bm{Q}^i = \sum_{l=0}^{i-1} \bm{P}^{i-1-l} (\bm{P} - \bm{Q}) \bm{Q}^{l} \implies \|\bm{P}^i - \bm{Q}^i\|_\text{op} \leq \sum_{l=0}^{i-1} \|\bm{P}\|_\text{op}^{i-1-l} \|\bm{P} - \bm{Q}\|_\text{op} \|\bm{Q}\|_\text{op}^{l}.
    \end{flalign}
    We will use this result for $\bm{P} = \textbf{\textup{I}} - \eta \bm{H}_{k}$ and $\bm{Q} = \textbf{\textup{I}} - \eta \bm{H}_{k}^{\ast}$. In this case, note that {$\|\bm{P} - \bm{Q}\|_\text{op} \leq \eta \delta$ since $\|\bm{H}_{k} - \bm{H}_{k}^{\ast}\|_\text{op} \leq \delta$, $\|\bm{P}\|_\text{op} \leq 1 - \eta \widehat{\mu}$ (see the explanation after \cref{eq:29-dec30}), and $\|\bm{Q}\|_\text{op} \leq 1 - \eta {\mu}$ (see the explanation after \cref{eq:38-jan2})}. This gives us:
    \begin{equation}
        \label{eq:74-jan8}
        \Big\|\big(\textbf{\textup{I}} - \eta \bm{H}_{k}\big)^{i} - \big(\textbf{\textup{I}} - \eta \bm{H}_{k}^{\ast}\big)^i\Big\|_\text{op} \leq \eta \delta \sum_{l=0}^{i-1} (1 - \eta \widehat{\mu})^{i-1-l} (1 - \eta {\mu})^{l} \leq  \eta \delta i \Big(1 - \eta \min\big(\mu, \widehat{\mu}\big)\Big)^{i-1}.
    \end{equation}
    Note that $\eta < \min\Big(\frac{1}{\widehat{L}}, \frac{\mu}{L^2}\Big)$ and so, $\eta \min\big(\mu, \widehat{\mu}\big) < 1$. 
    Plugging \cref{eq:74-jan8} into \cref{eq:57-jan2}, we get:
    \begin{equation}
        \label{eq:60-jan2}
        \text{(1)} \leq  \eta \delta G_\text{V} G i \Big(1 - \eta \min\big(\mu, \widehat{\mu}\big)\Big)^{i-1}. %
    \end{equation}
    Using equations (\ref{eq:60-jan2}), (\ref{eq:53-jan2}), and (\ref{eq:56-jan2}) in \cref{eq:50-jan2}:
    \begin{flalign*}
        & \Bigg|\Bigg\langle \nabla \mathcal{L}_\text{V}(\bm{\theta}_{k,T}), \big(\textbf{\textup{I}} - \eta \bm{H}_{k}\big)^{i} \nabla \ell_j\big(\bm{\theta}_{k,T-1-i}\big)\Bigg \rangle - \Bigg \langle \nabla \mathcal{L}_\text{V} \big(\bm{\theta}^{\ast}_k\big),  \big(\textbf{\textup{I}} - \eta \bm{H}_{k}^{\ast}\big)^i \nabla \ell_j\big(\bm{\theta}^{\ast}_k\big) \Bigg \rangle\Bigg|
        \\
        & \leq %
        \eta \delta G_\text{V} G i \Big(1 - \eta \min\big(\mu, \widehat{\mu}\big)\Big)^{i-1} + \Bigg(L G_\text{V} \Big(1 - \frac{\eta \mu}{2}\Big)^{T-1} 
        + L_\text{V} G \Big(1 - \frac{\eta \mu}{2}\Big)^{T+i}\Bigg) \big\|\bm{\theta}_{k, 0} - \bm{\theta}^{\ast}_k\big\|_2
        \\
        & \leq %
        \eta \delta G_\text{V} G i \Big(1 - \eta \min\big(\mu, \widehat{\mu}\big)\Big)^{i-1} + \Big(L G_\text{V} + L_\text{V} G \Big) \Big(1 - \frac{\eta \mu}{2}\Big)^{T-1} \big\|\bm{\theta}_{k, 0} - \bm{\theta}^{\ast}_k\big\|_2.
    \end{flalign*}
    This finishes the proof.
\end{proof}

\begin{lemma}
    \label{lem-bounded-dist}
    Suppose the conditions of \Cref{thm2} hold (in particular, $T \geq \big\lceil\frac{\log 4}{\eta \mu}\big\rceil$) and {let $R = 2 \max\Big(\big\|\bm{\theta}_{0, 0} - \bm{\theta}^{\ast}_0\big\|_2, \Big(\frac{2L}{\mu} + 1\Big) D \Big)$ be as defined in \Cref{thm2}}. Then for all $k \geq 0$, we have:
    \begin{equation*}
        \big\|\bm{\theta}_{k, 0} - \bm{\theta}^{\ast}_k\big\|_2 \leq R.
    \end{equation*}
\end{lemma}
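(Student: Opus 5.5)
We argue by induction on $k$ that $\|\bm{\theta}_{k,0}-\bm{\theta}^{\ast}_k\|_2 \le R$. The base case $k=0$ is immediate from the definition of $R$, since $R \ge 2\|\bm{\theta}_{0,0}-\bm{\theta}^{\ast}_0\|_2$. For the inductive step, note that $\bm{\theta}_{k+1,0}=\bm{\theta}_{k,T}$ and apply the triangle inequality:
\begin{equation*}
\|\bm{\theta}_{k+1,0}-\bm{\theta}^{\ast}_{k+1}\|_2 \le \|\bm{\theta}_{k,T}-\bm{\theta}^{\ast}_k\|_2 + \|\bm{\theta}^{\ast}_k-\bm{\theta}^{\ast}_{k+1}\|_2 .
\end{equation*}
The first term is controlled by the linear convergence of gradient descent used in \cref{eq:80-jan15} (valid since $\eta\le \mu/L^2$). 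It gives $\|\bm{\theta}_{k,T}-\bm{\theta}^{\ast}_k\|_2 \le (1-\eta\mu/2)^T\|\bm{\theta}_{k,0}-\bm{\theta}^{\ast}_k\|_2$. Since $(1-\eta\mu/2)^T\le e^{-\eta\mu T/2}\le e^{-\log 4/2}=1/2$ when $T\ge \lceil \log 4/(\eta\mu)\rceil$, the inductive hypothesis makes this at most $R/2$.

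It therefore suffices to show that the optima drift by at most $R/2$ between rounds, and in fact uniformly, $\|\bm{\theta}^{\ast}(\bm{w})-\bm{\theta}^{\ast}(\bm{w}')\|_2\le (2L/\mu+1)D$ for all $\bm{w},\bm{w}'\in\bm{\Delta}^m$. Since $R/2\ge (2L/\mu+1)D$ by definition, this closes the induction with $R/2+R/2=R$.

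The approach to the drift bound is to localize every $\bm{\theta}^{\ast}(\bm{w})$ near a fixed reference point, say $\bm{\theta}_1^{\ast}:=\arg\min\ell_1$. Let $\bm{\theta}_i^{\ast}:=\arg\min \ell_i$; by assumption each $\bm{\theta}^\ast_i$ lies within $D$ of $\bm{\theta}^\ast_1$. By $\mu$-strong convexity of $\mathcal{L}_\text{T}(\cdot,\bm{w})$ and optimality of $\bm{\theta}^{\ast}(\bm{w})$,
\begin{equation*}
\mu\|\bm{\theta}^{\ast}_1-\bm{\theta}^{\ast}(\bm{w})\|_2 \le \|\nabla_{\bm{\theta}}\mathcal{L}_\text{T}(\bm{\theta}^{\ast}_1,\bm{w})\|_2 = \Big\|\sum_i w^{(i)}\big(\nabla\ell_i(\bm{\theta}^{\ast}_1)-\nabla\ell_i(\bm{\theta}^{\ast}_i)\big)\Big\|_2 \le L D,
\end{equation*}
using $L$-smoothness and $\nabla\ell_i(\bm{\theta}^\ast_i)=0$. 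Hence $\|\bm{\theta}^{\ast}(\bm{w})-\bm{\theta}^{\ast}_1\|_2\le LD/\mu$. Applying the triangle inequality through $\bm{\theta}^\ast_1$ gives the uniform drift bound $2LD/\mu\le(2L/\mu+1)D$. The extra $+D$ in the constant only makes the bound looser, which suggests the authors localized differently, for instance via the convex hull of the $\bm{\theta}_i^\ast$.

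The main point needing care is this drift bound. We need it uniform over the simplex rather than a Lipschitz-in-$\bm{w}$ estimate, because the mirror descent step size and the hypergradient magnitudes are not being tracked here. The strong-convexity localization above handles this cleanly. The rest is bookkeeping with the contraction factor $1/2$ guaranteed by the lower bound on $T$.
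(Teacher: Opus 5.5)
Your proof is correct. The induction matches the paper's: the paper starts at $k=1$, which is just your first inductive step, and uses the same contraction $(1-\eta\mu/2)^T\le e^{-\eta\mu T/2}\le 1/2$ together with a uniform drift bound $B=(2L/\mu+1)D\le R/2$.

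Where you differ is the drift bound itself, which the paper proves separately in \Cref{max:dist}. There, the optimality condition $\sum_j w^{(j)}\nabla\ell_j(\bm{\theta}^*)=\vec{\bm{0}}$ is tested against $\bm{\theta}^*-\bm{\theta}^*_{(i)}$. Per-domain strong monotonicity plus Cauchy--Schwarz then yield only a weighted mean-square bound $\sum_j w^{(j)}\|\bm{\theta}^*-\bm{\theta}^*_{(j)}\|_2^2\le (L/\mu)^2D^2$. That guarantees only that \emph{some} $\bm{w}$-dependent index $j$ has $\|\bm{\theta}^*(\bm{w})-\bm{\theta}^*_{(j)}\|_2\le LD/\mu$. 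Comparing $\bm{w}$ and $\bm{w}'$ therefore requires chaining through two possibly different domain minimizers $j$ and $j'$. That chaining is exactly where the extra $+D$ comes from, not a convex-hull argument as you guessed.

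Your route evaluates the mixture gradient at a fixed domain minimizer and uses $\mu\|\bm{x}-\bm{x}^*\|_2\le\|\nabla f(\bm{x})\|_2$. It buys you several things:
\begin{itemize}
\item It is shorter.
\item It shows every $\bm{\theta}^*(\bm{w})$ lies within $LD/\mu$ of \emph{every} $\bm{\theta}^*_{(i)}$.
\item It gives the sharper drift $2LD/\mu$, so $R$ could be defined with $2LD/\mu$ in place of $(2L/\mu+1)D$.
\item It needs strong convexity only of the mixture $\mathcal{L}_\text{T}(\cdot,\bm{w})$, not of each $\ell_j$ individually.
\end{itemize}
Since your constant is smaller, the lemma as stated, with the paper's $R$, follows.
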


\begin{proof}
    We will prove this by induction. For this proof, we need the following important result from \Cref{max:dist}. 
    \begin{equation}
        \label{eq:87-jan25}
        \sup_{\bm{w}, \bm{w}'} \Big\|\textup{arg min}_{\bm{\theta}} \mathcal{L}_\textup{T}(\bm{\theta}, \bm{w}) - \textup{arg min}_{\bm{\theta}} \mathcal{L}_\textup{T}(\bm{\theta}, \bm{w}')\Big\|_2 \leq B :=  \Bigg(\frac{2L}{\mu} + 1\Bigg) D,
    \end{equation}
    where $\max_{i, j}\big\|\textup{arg min}_{\bm{\theta}} \ell_i(\bm{\theta}) - \textup{arg min}_{\bm{\theta}} \ell_j(\bm{\theta})\big\|_2 \leq D$. With this notation, note that 
    $$R := 2 \max\big(\big\|\bm{\theta}_{0, 0} - \bm{\theta}^{\ast}_0\big\|_2, B\big).$$
    Let us first consider the base case of $k=1$. 
    Note that $\big\|\bm{\theta}_{0, 0} - \bm{\theta}^{\ast}_0\big\|_2 \leq \frac{R}{2}$ (as per the definition of $R$). Now:
    \begin{flalign}
        \nonumber
        \big\|\bm{\theta}_{1, 0} - \bm{\theta}^{\ast}_1\big\|_2 &= \big\|\bm{\theta}_{1, 0} - \bm{\theta}^{\ast}_0 + \bm{\theta}^{\ast}_0 - \bm{\theta}^{\ast}_1\big\|_2
        \\
        \nonumber
        & \leq \big\|\bm{\theta}_{1, 0} - \bm{\theta}^{\ast}_0\big\|_2 + \underbrace{\big\|\bm{\theta}^{\ast}_0 - \bm{\theta}^{\ast}_1\big\|_2}_{\leq B} \quad \quad \quad \text{(note that $\big\|\bm{\theta}^{\ast}_0 - \bm{\theta}^{\ast}_1\big\|_2 \leq B$ using \cref{eq:87-jan25})}
        \\
        \nonumber
        & =  \big\|\bm{\theta}_{0, T} - \bm{\theta}^{\ast}_0\big\|_2 + B \quad \quad \quad \quad \quad \quad \quad \text{(recall that $\bm{\theta}_{k+1, 0} = \bm{\theta}_{k, T}$ for all $k \geq 0$)}
        \\
        \label{eq:85-jan15}
        & \leq \big\|\bm{\theta}_{0, 0} - \bm{\theta}^{\ast}_0\big\|_2 \Big(1 - \frac{\eta \mu}{2}\Big)^{T} + B,
    \end{flalign}
    where the last step follows from \cref{eq:80-jan15}. Recalling that $\big\|\bm{\theta}_{0, 0} - \bm{\theta}^{\ast}_0\big\|_2 \leq \frac{R}{2}$ and $B \leq \frac{R}{2}$ above, we get:
    \begin{equation}
        \big\|\bm{\theta}_{1, 0} - \bm{\theta}^{\ast}_1\big\|_2 \leq \frac{R}{2}\Big(\Big(1 - \frac{\eta \mu}{2}\Big)^{T} + 1\Big) \leq R.
    \end{equation}
    So the base case is true. Now assume our claim is true up to some $k > 1$, i.e., $\big\|\bm{\theta}_{k, 0} - \bm{\theta}^{\ast}_k\big\|_2 \leq R$. We will show that the claim is also true for $k+1$. Following similar steps as the ones done to get to \cref{eq:85-jan15}, we obtain:
    \begin{flalign}
        \nonumber
        \big\|\bm{\theta}_{k+1, 0} - \bm{\theta}^{\ast}_{k+1}\big\|_2 
        & 
        \leq \big\|\bm{\theta}_{k+1, 0} - \bm{\theta}^{\ast}_{k}\big\|_2 + \underbrace{\big\|\bm{\theta}^{\ast}_{k} - \bm{\theta}^{\ast}_{k+1}\big\|_2}_{\leq B \leq \frac{R}{2}} 
        \\
        \nonumber
        & \leq \big\|\bm{\theta}_{k, T} - \bm{\theta}^{\ast}_{k}\big\|_2 + \frac{R}{2}
        \\
        \nonumber
        & \leq \underbrace{\big\|\bm{\theta}_{k, 0} - \bm{\theta}^{\ast}_k\big\|_2}_{\leq R} \Big(1 - \frac{\eta \mu}{2}\Big)^{T} + \frac{R}{2}
        \\
        & \leq R \Bigg(e^{-\frac{\eta \mu T}{2}} + \frac{1}{2}\Bigg),
    \end{flalign}
    where in the last step we have used the fact that $1 - x \leq e^{-x}$ for all $x \geq 0$. Note that for $T \geq \big\lceil\frac{\log 4}{\eta \mu}\big\rceil$, we have $e^{-\frac{\eta \mu T}{2}} \leq \frac{1}{2}$ and thus, $\big\|\bm{\theta}_{k+1, 0} - \bm{\theta}^{\ast}_{k+1}\big\|_2 \leq R$. This completes the induction step and finishes the proof.
\end{proof}

\begin{lemma}
\label{max:dist}
Suppose the conditions of \Cref{thm2} hold and recall that $\max_{i, j}\big\|\textup{arg min}_{\bm{\theta}} \ell_i(\bm{\theta}) - \textup{arg min}_{\bm{\theta}} \ell_j(\bm{\theta})\big\|_2 \leq D$. Let $\bm{\theta}^{*}(\bm{w}) := \textup{arg min}_{\bm{\theta}} \mathcal{L}_\textup{T}(\bm{\theta}, \bm{w})$. Then:
\begin{equation*}
    \sup_{\bm{w}, \bm{w}'} \big\|\bm{\theta}^{*}(\bm{w}) - \bm{\theta}^{*}(\bm{w}')\big\|_2 \leq  \Bigg(\frac{2L}{\mu} + 1\Bigg) D.
\end{equation*}
\end{lemma}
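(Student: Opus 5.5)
Fix $\bm{w},\bm{w}'\in\bm{\Delta}^m$ and write $\bm{\theta}_i^{\ast} := \textup{arg min}_{\bm{\theta}}\ \ell_i(\bm{\theta})$. The plan is to show that every weighted minimizer $\bm{\theta}^{*}(\bm{w})$ lies within distance $\frac{L}{\mu}D$ of a common anchor point, say $\bm{\theta}_1^{\ast}$. The triangle inequality then gives $\|\bm{\theta}^{*}(\bm{w})-\bm{\theta}^{*}(\bm{w}')\|_2 \le \frac{2L}{\mu}D$, which is at most the claimed $\big(\frac{2L}{\mu}+1\big)D$. The extra $+D$ slack would absorb an anchor chosen differently, for example the minimizer of a component with positive weight that need not be the same for $\bm{w}$ and $\bm{w}'$.

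\textbf{Main step.} Let $\bm{\theta}=\bm{\theta}^{*}(\bm{w})$. It satisfies the first-order condition $\sum_i w^{(i)}\nabla\ell_i(\bm{\theta})=\vec{\bm{0}}$.
\begin{itemize}
\item By $\mu$-strong convexity of each $\ell_i$ (strong monotonicity of the gradient), $\langle \nabla\ell_i(\bm{\theta}) - \nabla \ell_i(\bm{\theta}_i^{\ast}), \bm{\theta}-\bm{\theta}_1^{\ast}\rangle$ is controlled from below, once we write $\bm{\theta}-\bm{\theta}_1^{\ast} = (\bm{\theta}-\bm{\theta}_i^{\ast}) + (\bm{\theta}_i^{\ast}-\bm{\theta}_1^{\ast})$. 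A cleaner route is to work with $\bm{\theta}_1^{\ast}$ directly.
\item By $L$-smoothness, $\|\nabla \ell_i(\bm{\theta}_1^{\ast})\|_2 = \|\nabla\ell_i(\bm{\theta}_1^{\ast})-\nabla\ell_i(\bm{\theta}_i^{\ast})\|_2 \le L D$ for every $i$. Hence $\|\nabla_{\bm{\theta}}\mathcal{L}_\text{T}(\bm{\theta}_1^{\ast},\bm{w})\|_2 \le LD$.
\item $\mathcal{L}_\text{T}(\cdot,\bm{w})$ is $\mu$-strongly convex, being a convex combination of $\mu$-strongly convex functions. For a $\mu$-strongly convex function $f$ with minimizer $\bm{\theta}$, strong monotonicity gives $\mu\|\bm{x}-\bm{\theta}\|_2^2 \le \langle \nabla f(\bm{x}),\bm{x}-\bm{\theta}\rangle$, so $\|\bm{x}-\bm{\theta}\|_2\le \|\nabla f(\bm{x})\|_2/\mu$.
\item Taking $\bm{x}=\bm{\theta}_1^{\ast}$ yields $\|\bm{\theta}^{*}(\bm{w})-\bm{\theta}_1^{\ast}\|_2 \le \frac{LD}{\mu}$.
\end{itemize}

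\textbf{Conclusion.} Apply the main step to both $\bm{w}$ and $\bm{w}'$ and combine by the triangle inequality to get $\frac{2L}{\mu}D\le\big(\frac{2L}{\mu}+1\big)D$. Then take the supremum over $\bm{w},\bm{w}'$.

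The only subtle point is that all $\ell_i$ must be expanded around the same anchor, so that the bound is uniform over the simplex. Lipschitzness of the gradients, via the definition of $D$, handles this. No real obstacle is expected, since this is a routine strong-convexity perturbation bound.
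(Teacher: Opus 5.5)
Your proof is correct, and it takes a different route from the paper's, one that is both shorter and sharper. The paper tests the first-order condition $\sum_j w^{(j)}\nabla\ell_j(\bm{\theta}^{*})=\vec{\bm{0}}$ against $\bm{\theta}^{*}-\bm{\theta}^{*}_{(i)}$ and splits each inner product through $\bm{\theta}^{*}_{(j)}$. It lower-bounds one part by strong monotonicity of each $\ell_j$ about its own minimizer and upper-bounds the other by smoothness. After Cauchy--Schwarz it obtains only the weighted estimate $\sum_j w^{(j)}\|\bm{\theta}^{*}(\bm{w})-\bm{\theta}^{*}_{(j)}\|_2^2\le (L/\mu)^2D^2$. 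From this it can only conclude that \emph{some} index $j$, depending on $\bm{w}$, satisfies $\|\bm{\theta}^{*}(\bm{w})-\bm{\theta}^{*}_{(j)}\|_2\le LD/\mu$. The indices chosen for $\bm{w}$ and $\bm{w}'$ may differ, so bridging them costs an extra $\|\bm{\theta}^{*}_{(j)}-\bm{\theta}^{*}_{(j')}\|_2\le D$, which is exactly the $+D$ you anticipated. You avoid this by fixing a $\bm{w}$-independent anchor and bounding $\|\nabla_{\bm{\theta}}\mathcal{L}_\text{T}(\bm{\theta}^{\ast}_1,\bm{w})\|_2\le LD$ uniformly over the simplex. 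You then invoke the error bound $\|\bm{x}-\bm{\theta}^{*}(\bm{w})\|_2\le\|\nabla_{\bm{\theta}}\mathcal{L}_\text{T}(\bm{x},\bm{w})\|_2/\mu$ for the $\mu$-strongly convex mixture, which gives $2LD/\mu$ directly. Because the anchor was arbitrary, your argument actually yields $\|\bm{\theta}^{*}(\bm{w})-\bm{\theta}^{\ast}_i\|_2\le LD/\mu$ for every $i$, including zero-weight domains. That uniform bound implies the paper's weighted estimate, so nothing is lost relative to the paper's route. The sharper constant could also be carried into $B$ in Lemmas~\ref{lem-bounded-dist} and~\ref{lem-bounded-dist-stoc}. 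Your first bullet is an abandoned line of reasoning and can simply be deleted.
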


\begin{proof}
    A similar result has been shown in \cite{bonnans2013perturbation}. For the sake of completeness and clarity, we derive the result we need for our setting of interest.
 
    For brevity, let $\bm{\theta}^{*}$ denote $\bm{\theta}^{*}(\bm{w}) := \text{arg min}_{\bm{\theta}} \mathcal{L}_\text{T}(\bm{\theta}, \bm{w})$. By the first-order optimality condition (and recalling $\mathcal{L}_\text{T}(\bm{\theta}, \bm{w}) = \sum_{j=1}^m w_j \ell_j(\bm{\theta})$), we have:
    \begin{equation}
        \label{eq:90-jan25}
        \sum_{j=1}^m w_j \nabla \ell_j(\bm{\theta}^{*}) = \vec{\bm{0}}.
    \end{equation}    
    Suppose $\bm{\theta}^{*}_{(i)} := \text{arg min}_{\bm{\theta}} \ell_i(\bm{\theta})$ and $\max_{i, j} \big\|\bm{\theta}^{*}_{(i)} - \bm{\theta}^{*}_{(j)}\big\|_2 \leq D$. 
    
    From \cref{eq:90-jan25}, we have for any $i \in [m]$:
    \begin{flalign}
        \sum_{j=1}^m w_j \big \langle \nabla \ell_j(\bm{\theta}^{*}), \bm{\theta}^{*} - \bm{\theta}^{*}_{(i)} \big \rangle = 0
        \implies \sum_{j=1}^m w_j \big \langle \nabla \ell_j(\bm{\theta}^{*}), \bm{\theta}^{*} - \bm{\theta}^{*}_{(j)} + \bm{\theta}^{*}_{(j)} - \bm{\theta}^{*}_{(i)} \big \rangle = 0.
    \end{flalign}
    Rearranging terms above, we get:
    \begin{equation}
        \underbrace{\sum_{j=1}^m w_j \big \langle \nabla \ell_j(\bm{\theta}^{*}), \bm{\theta}^{*} - \bm{\theta}^{*}_{(j)}\big \rangle}_\text{(1)} = \underbrace{\sum_{j=1}^m w_j \big \langle \nabla \ell_j(\bm{\theta}^{*}), \bm{\theta}^{*}_{(i)} - \bm{\theta}^{*}_{(j)} \big \rangle}_\text{(2)}.
    \end{equation}
    Recalling the fact that $\ell_j$'s are $\mu$-strongly-convex and using standard properties of strongly-convex functions, we have that:
    \begin{equation*}
        \big \langle \nabla \ell_j(\bm{\theta}^{*}), \bm{\theta}^{*} - \bm{\theta}^{*}_{(j)}\big \rangle \geq \mu \big\|\bm{\theta}^{*} - \bm{\theta}^{*}_{(j)}\big\|_2^2, \text{ } \forall j \in [m].
    \end{equation*}
    Using this, we have that:
    \begin{equation}
        \label{eq:93-jan25}
        \text{(1)} \geq \mu \sum_{j=1}^m w_j \big\|\bm{\theta}^{*} - \bm{\theta}^{*}_{(j)}\big\|_2^2.
    \end{equation}
    Next, using the fact that $\nabla \ell_j(\bm{\theta}^{*}_{(j)}) = \vec{\bm{0}}$, the $\ell_j$'s are $L$-smooth, and $\max_{i, j} \big\|\bm{\theta}^{*}_{(i)} - \bm{\theta}^{*}_{(j)}\big\|_2 \leq D$, we have using the Cauchy-Schwarz inequality:
    \begin{equation*}
        \big \langle \nabla \ell_j(\bm{\theta}^{*}), \bm{\theta}^{*}_{(i)} - \bm{\theta}^{*}_{(j)} \big \rangle \leq  \big\|\nabla \ell_j(\bm{\theta}^{*}) - \nabla \ell_j(\bm{\theta}^{*}_{(j)})\big\|_2 \big\|\bm{\theta}^{*}_{(i)} - \bm{\theta}^{*}_{(j)}\big\|_2 \leq L \big\| \bm{\theta}^{*} - \bm{\theta}^{*}_{(j)} \big\|_2 D, \text{ } \forall j \in [m].
    \end{equation*}
    Using this, we have:
    \begin{equation}
        \label{eq:94-0-jan25}
        \text{(2)} \leq L D \sum_{j=1}^m w_j \big\| \bm{\theta}^{*} - \bm{\theta}^{*}_{(j)} \big\|_2.
    \end{equation}
    Further, using the Cauchy-Schwarz inequality:
    \small
    \begin{equation*}
        \sum_{j=1}^m w_j \big\| \bm{\theta}^{*} - \bm{\theta}^{*}_{(j)} \big\|_2 = \sum_{j=1}^m w_j^{1/2}  w_j^{1/2} \big\| \bm{\theta}^{*} - \bm{\theta}^{*}_{(j)} \big\|_2 \leq \sqrt{\underbrace{\Big(\sum_{j=1}^m w_j\Big)}_{=1}\Big(\sum_{j=1}^m w_j \big\| \bm{\theta}^{*} - \bm{\theta}^{*}_{(j)} \big\|_2^2\Big)} = \sqrt{\sum_{j=1}^m w_j \big\| \bm{\theta}^{*} - \bm{\theta}^{*}_{(j)} \big\|_2^2}
    \end{equation*}
    \normalsize
    Using this in \cref{eq:94-0-jan25}, we get:
    \begin{equation}
        \label{eq:94-jan25}
        \text{(2)} \leq L D \sqrt{\sum_{j=1}^m w_j \big\| \bm{\theta}^{*} - \bm{\theta}^{*}_{(j)} \big\|_2^2}.
    \end{equation}
    Now from Equations \eqref{eq:93-jan25} and \eqref{eq:94-jan25} and recalling (1) $=$ (2), we get after a bit of simplification:
    \begin{equation}
        \label{eq:95-jan25}
        \sum_{j=1}^m w_j \big\|\bm{\theta}^{*} - \bm{\theta}^{*}_{(j)}\big\|_2^2 \leq \Big(\frac{L^2}{\mu^2}\Big)D^2.
    \end{equation}
    {
    Now since each $w_j \in [0, 1]$, there must exist at least one ${j} \in [m]$ such that $\big\|\bm{\theta}^{*} - \bm{\theta}^{*}_{(j)}\big\|_2^2 \leq \big(\frac{L^2}{\mu^2}\big)D^2$, otherwise \cref{eq:95-jan25} will be violated. Define $\mathcal{S}(\bm{w}) := \Big\{j \in [m] \text{ } \Big| \text{ } \big\|\bm{\theta}^{*} - \bm{\theta}^{*}_{(j)}\big\|_2 \leq \big(\frac{L}{\mu}\big)D\Big\}$. As per the previous discussion, $|\mathcal{S}(\bm{w})| \geq 1$ for all $\bm{w} \in \bm{\Delta}^m$. 

    Now let us consider two $\bm{w}$ and $\bm{w}'$ $\in \bm{\Delta}^m$. Let $j \in \mathcal{S}(\bm{w})$ and $j' \in \mathcal{S}(\bm{w}')$. Then:
    \begin{flalign}
        \nonumber
        \big\|\bm{\theta}^{*}(\bm{w}) - \bm{\theta}^{*}(\bm{w}')\big\|_2 & = \big\|\bm{\theta}^{*}(\bm{w}) - \bm{\theta}^{*}_{(j)} + \bm{\theta}^{*}_{(j)} - \bm{\theta}^{*}(\bm{w}')\big\|_2
        \\
        \nonumber
        & \leq \underbrace{\big\|\bm{\theta}^{*}(\bm{w}) - \bm{\theta}^{*}_{(j)}\big\|_2}_{\leq {{L D}/{\mu}}} + \big\|\bm{\theta}^{*}_{(j)} - \bm{\theta}^{*}(\bm{w}')\big\|_2
        \\
        \nonumber
        & \leq \Bigg(\frac{L}{\mu}\Bigg)D +  \big\|\bm{\theta}^{*}_{(j)} -  \bm{\theta}^{*}_{(j')} + \bm{\theta}^{*}_{(j')} - \bm{\theta}^{*}(\bm{w}')\big\|_2
        \\
        \nonumber
        & \leq \Bigg(\frac{L}{\mu}\Bigg)D +  \underbrace{\big\|\bm{\theta}^{*}_{(j)} -  \bm{\theta}^{*}_{(j')}\big\|_2}_{\leq D} + \underbrace{\big\|\bm{\theta}^{*}_{(j')} - \bm{\theta}^{*}(\bm{w}')\big\|_2}_{\leq {{L D}/{\mu}}}
        \\
        & \leq \Bigg(\frac{2L}{\mu} + 1\Bigg) D.
    \end{flalign}
    This finishes the proof.
    }
\end{proof}

\section{Detailed Version and Proof of Theorem~\ref{thm:stoc-main}}
\label{stoc-app}
We will now write the statement for the detailed version of \Cref{thm:stoc-main} and then prove it.

\begin{theorem}[\textbf{Detailed version of  \Cref{thm:stoc-main}}]
    \label{thm:stochastic-case}
    {Suppose Assumptions \ref{asmp-1}, \ref{asmp-2}, \ref{asmp-3}, and \ref{asmp-stoc} hold. Let $F(\bm{w}) := \mathcal{L}_\textup{V}\big(\bm{\theta}^{*}(\bm{w})\big)$ be convex (in $\bm{w} \in \bm{\Delta}^m$) 
    and $F^{*} = \min_{\bm{w} \in \bm{\Delta}^m} F(\bm{w})$. Let $\widetilde{G} := \sqrt{G^2 + \sigma^2}$, $\widetilde{G}_\text{V} := \sqrt{{G}_\text{V}^2 + \sigma^2}$, $\max_{i, j}\big\|\textup{arg min}_{\bm{\theta}} \text{ } \ell_i(\bm{\theta}) - \textup{arg min}_{\bm{\theta}} \text{ } \ell_j(\bm{\theta})\big\|_2 \leq D$, and $\overline{R} := 3 \max\Big(\big\|\bm{\theta}_{0, 0} - \bm{\theta}^{\ast}_0\big\|_2, \Big(\frac{2L}{\mu} + 1\Big) D\Big)$. Suppose we begin from $\bm{w}_0 = \frac{1}{m} \mathds{1}_m$, and choose $\alpha = \frac{\widehat{\mu} \sqrt{\log m}}{\sqrt{K} \widetilde{G} \widetilde{G}_\text{V}}$ and $\eta = \frac{4 \log T}{\mu T}$. Then for any $T \geq \max\Big(2, \Big\lceil \mathcal{O}\Big(\max\Big(\frac{\widehat{L}}{\mu} \log \frac{\widehat{L}}{\mu}, \frac{L^2}{\mu^2} \log \frac{L^2}{\mu^2}, \frac{\sigma^2}{\mu^2 (\overline{R})^2}\Big)\Big)\Big\rceil\Big)$, we have the following guarantee for \Cref{alg:stoc}:
    \begin{flalign}
        \mathbb{E}\Bigg[F\Bigg(\frac{1}{K} \sum_{k=0}^{K-1} \bm{w}_k\Bigg)\Bigg] - F^{*} \leq \mathcal{O}\Bigg(\underbrace{\frac{\widetilde{G} \widetilde{G}_\text{V} \sqrt{\log m}}{\widehat{\mu} \sqrt{K}} + \sigma \Bigg(\frac{L G_\text{V}}{\mu^2} + \frac{L_\text{V} \widetilde{G}}{{\mu} \widehat{\mu}}\Bigg) \frac{\sqrt{\log T}}{\sqrt{T}}}_\text{reducible error, $\mathcal{E}$} + \underbrace{\frac{\delta G_\text{V} G}{\min(\mu^2, \widehat{\mu}^2)}}_\text{irreducible error}\Bigg).
    \end{flalign}
    If we fix the total parameter update budget to $N$, i.e., $K T = N$, then the reducible error as a function of $T$ is
    \begin{equation}
        \mathcal{E}(T) = {\mathcal{O}}\Bigg(\Bigg({\frac{\widetilde{G} \widetilde{G}_\text{V} \sqrt{\log m}}{\widehat{\mu} \sqrt{N}}\Bigg) \sqrt{T} + \sigma \Bigg(\frac{L G_\text{V}}{\mu^2} + \frac{L_\text{V} \widetilde{G}}{{\mu} \widehat{\mu}}\Bigg) \frac{\sqrt{\log T}}{\sqrt{T}}}\Bigg).
    \end{equation}
    This error is minimized by choosing $T = \Theta\big(\sqrt{N \log N}\big)$. In particular, if we choose $T = \big \lceil \sqrt{N \log N} \big \rceil$, the reducible error is:
    \begin{equation}
        \mathcal{E} = {\mathcal{O}}\Bigg(\Bigg({\frac{\widetilde{G} \widetilde{G}_\text{V} \sqrt{\log m}}{\widehat{\mu}} + \frac{\sigma L G_\text{V}}{\mu^2} + \frac{\sigma L_\text{V} \widetilde{G}}{{\mu} \widehat{\mu}}\Bigg) \frac{(\log N)^{1/4}}{N^{1/4}}}\Bigg).
    \end{equation}
    }
\end{theorem}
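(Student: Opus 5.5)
I will follow the same skeleton as the proof of \Cref{thm2}, now taking expectations carefully. Using convexity of $F$ and Jensen, I split
\[
\mathbb{E}\big[F(\overline{\bm{w}}_K)\big]-F^{*}\le \frac{1}{K}\sum_{k}\mathbb{E}\big\langle \widetilde{\bm{g}}_{k,T},\bm{w}_k-\bm{w}^{*}\big\rangle+\frac{1}{K}\sum_k\mathbb{E}\big\langle \nabla F(\bm{w}_k)-\widetilde{\bm{g}}_{k,T},\bm{w}_k-\bm{w}^{*}\big\rangle .
\]

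\textbf{First term (mirror descent).} I will run the argument of \Cref{lem1} pathwise. The only change is that $\|\widetilde{\bm{g}}_{k,T}\|_\infty$ is no longer almost surely bounded, so $G_\text{max}^2$ is replaced by $\mathbb{E}\|\widetilde{\bm{g}}_{k,T}\|_\infty^2$. To bound this, I use Cauchy--Schwarz on $\widetilde g^{(j)}_{k,T}=-\eta\langle\widetilde\nabla\mathcal{L}_\text{V}(\bm{\theta}_{k,T}),\widetilde{\bm u}^{(j)}_{k,T}\rangle$. I take the validation noise to be independent of the $\widetilde{\bm u}$'s, so that $\mathbb{E}\|\widetilde\nabla\mathcal{L}_\text{V}\|^2\le \widetilde G_\text{V}^2$. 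For the other factor, Minkowski gives $\sqrt{\mathbb{E}\|\eta\widetilde{\bm u}\|^2}\le \eta\sum_i(1-\eta\widehat\mu)^{T-1-i}\widetilde G\le \widetilde G/\widehat\mu$. The max over $j$ costs at most a factor $m$; I will try to avoid it by bounding $\mathbb{E}\max_j$ directly through a common noise bound, or else absorb it into the $\mathcal{O}(\cdot)$. With the stated $\alpha$ this yields $\mathcal{O}\big(\widetilde G\widetilde G_\text{V}\sqrt{\log m}/(\widehat\mu\sqrt K)\big)$.

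\textbf{Second term (bias).} Since $\bm{w}_k$ is $\mathcal{F}_k$-measurable, where $\mathcal{F}_k$ is the history up to round $k$, the term is at most $2\,\mathbb{E}\max_j|\nabla F(\bm{w}_k)^{(j)}-\mathbb{E}[\widetilde g^{(j)}_{k,T}\mid\mathcal F_k]|$. This is the step where conditional expectations must be taken judiciously.
\begin{itemize}
\item By independence of the validation noise, $\mathbb{E}[\widetilde g^{(j)}\mid\mathcal F_k]=-\eta\,\mathbb{E}\big[\langle\nabla\mathcal{L}_\text{V}(\bm\theta_{k,T}),\widetilde{\bm u}^{(j)}_{k,T}\rangle\big|\mathcal F_k\big]$.
\item I then write $\widetilde{\bm u}^{(j)}_{k,T}=\bm u^{(j)}_{k,T}+\sum_i(\mathbf I-\eta\bm H_k)^{T-1-i}\bm\xi_i$, where $\bm u^{(j)}_{k,T}$ is built from true gradients along the stochastic trajectory and $\bm\xi_i$ are martingale-difference noises.
\item The cross term $\langle\nabla\mathcal{L}_\text{V}(\bm\theta_{k,T}),\cdot\,\bm\xi_i\rangle$ does not vanish, because $\bm\theta_{k,T}$ depends on $\bm\xi_i$. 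I will handle it by replacing $\nabla\mathcal{L}_\text{V}(\bm\theta_{k,T})$ with $\nabla\mathcal{L}_\text{V}(\bm\theta_{k,i})$, which is independent of $\bm\xi_i$ given the past, at a cost of $L_\text{V}\|\bm\theta_{k,T}-\bm\theta_{k,i}\|$.
\item This cost is then bounded by distances to $\bm\theta^*_k$, giving an $\mathcal{O}\big(\eta\sigma L_\text{V}\widetilde G/(\widehat\mu)\cdot(\ldots)\big)$-type contribution.
\end{itemize}
The remaining deterministic-looking part is handled exactly as in \Cref{lem3} and the tail bound (B). The only change is that $\|\bm\theta_{k,t}-\bm\theta^*_k\|$ is replaced by $\sqrt{\mathbb{E}\|\cdot\|^2}$.

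\textbf{Trajectory bound.} The standard SGD bound for strongly convex smooth objectives gives
\[
\mathbb{E}\|\bm\theta_{k,t}-\bm\theta^*_k\|^2\le(1-\eta\mu/2)^{2t}\,\mathbb{E}\|\bm\theta_{k,0}-\bm\theta^*_k\|^2+\mathcal{O}(\eta\sigma^2/\mu).
\]
I also need a stochastic analogue of \Cref{lem-bounded-dist}. By induction with $\overline R=3\max(\cdot)$, I will show $\sqrt{\mathbb{E}\|\bm\theta_{k,0}-\bm\theta^*_k\|^2}\le\overline R$, using \Cref{max:dist} for the drift of $\bm\theta^*_k$. The extra noise $\sqrt{\eta}\sigma/\sqrt\mu$ is absorbed into the third $\overline R/3$ share, and this is exactly where $T\gtrsim\sigma^2/(\mu^2\overline R^2)$ enters.

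\textbf{Choice of $\eta$ and final bound.} Summing the error terms over $i$ gives the $\delta$ term (as in \cref{eq:40-jan2}), terms of order $\eta T(1-\eta\mu/2)^{T}\overline R$ plus tails, and a noise term of order $\sigma\sqrt{\eta/\mu}\,(LG_\text{V}/\mu+L_\text{V}\widetilde G/\widehat\mu)$. I set $\eta=4\log T/(\mu T)$. Then $(1-\eta\mu/2)^T\le T^{-2}$ kills the exponential terms, and the noise term becomes $\sigma\sqrt{\log T/T}/\mu\cdot(\cdots)$. The lower bounds on $T$ ensure $\eta<\min(1/\widehat L,\mu/L^2)$ and $T\ge\lceil\log4/(\eta\mu)\rceil$.

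Finally, I substitute $K=N/T$ and balance $\sqrt{T/N}$ against $\sqrt{\log T/T}$. This gives $T^2\asymp N\log T$, that is $T=\Theta(\sqrt{N\log N})$, and plugging in yields the $(\log N/N)^{1/4}$ rate.

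The main obstacle I expect is the correlation in the bias step between $\nabla\mathcal{L}_\text{V}(\bm\theta_{k,T})$ and the noise accumulated in $\widetilde{\bm u}$.
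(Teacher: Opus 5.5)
Your skeleton matches the paper's: the convexity split, mirror descent for the first term, conditioning on $\mathcal{F}_k$ so that only $\max_j\big|\nabla F(\bm w_k)^{(j)}-\mathbb{E}[\widetilde g^{(j)}_{k,T}\mid\mathcal{F}_k]\big|$ matters, an $\overline R$-induction for the drift, and $\eta=4\log T/(\mu T)$. However, the step you single out as the main obstacle is resolved in a way that does not give the stated rate.

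You swap $\nabla\mathcal{L}_\text{V}(\bm\theta_{k,T})$ for $\nabla\mathcal{L}_\text{V}(\bm\theta_{k,i})$ in the $i$-th noise cross term and then bound $\|\bm\theta_{k,T}-\bm\theta_{k,i}\|$ through $\bm\theta^*_k$. This drags in the \emph{early} iterates, so your cost contains
\[
\eta L_\text{V}\sigma\overline R\sum_{i=0}^{T-1}(1-\eta\widehat\mu)^{T-1-i}\Big(1-\frac{\eta\mu}{2}\Big)^{i}.
\]
Its decay is governed by the approximate-Hessian constant $\widehat\mu$, not by $\mu$. Under your $\eta$, the indices $i\lesssim 1/(\eta\mu)$ alone contribute order $(1-\eta\widehat\mu)^{T}/\mu\approx T^{-4\widehat\mu/\mu}/\mu$. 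Whenever $\widehat\mu<\mu/8$, which \Cref{asmp-3} permits, this dominates $\sqrt{\log T/T}$. Then neither the displayed horizon bias nor $T=\Theta(\sqrt{N\log N})$ follows. At best you could push this term into the $\delta$-term after arguing $\widehat\mu\ge\mu-\delta$ without loss of generality, but only with an $\overline R$-dependent constant that the stated irreducible error lacks. The cruder bound $\frac{1}{\widehat\mu}\max_i\|\bm\theta_{k,T}-\bm\theta_{k,i}\|$ is worse, since $i=0$ leaves a non-vanishing $L_\text{V}\sigma\overline R/\widehat\mu$.

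The fix, which is what the paper does, is to compare with $\nabla\mathcal{L}_\text{V}(\bm\theta^*_k)$ instead. It is $\mathcal{F}_k$-measurable, so it neutralizes all the noise at once. Split $\nabla\mathcal{L}_\text{V}(\bm\theta_{k,T})=\big(\nabla\mathcal{L}_\text{V}(\bm\theta_{k,T})-\nabla\mathcal{L}_\text{V}(\bm\theta^*_k)\big)+\nabla\mathcal{L}_\text{V}(\bm\theta^*_k)$.
\begin{itemize}
\item The first piece, paired with the full stochastic $\eta\widetilde{\bm u}^{(j)}_{k,T}$, is bounded by Cauchy--Schwarz by $\frac{L_\text{V}\widetilde G}{\widehat\mu}\sqrt{\mathbb{E}_k\|\bm\theta_{k,T}-\bm\theta^*_k\|_2^2}$. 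This involves only the endpoint, so $1/\widehat\mu$ enters as a multiplicative constant, not in the exponent (the paper's term (A)).
\item In the second piece, the tower property replaces $\widetilde\nabla\ell_j$ by $\nabla\ell_j$. The split of \Cref{lem3} then uses only $(\textbf{I}-\eta\bm H^*_k)$-contractions at rate $\mu$ (term (B) and \Cref{lem3-ii}).
\end{itemize}
With this, you do not need to separate $\widetilde{\bm u}^{(j)}_{k,T}$ into signal plus martingale noise in the bias term at all.

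On the mirror-descent term, you are right that \Cref{lem1} needs $\mathbb{E}\|\widetilde{\bm g}_{k,T}\|_\infty^2$. The paper glosses over this by bounding $\max_j\mathbb{E}|\widetilde g^{(j)}_{k,T}|$ and inserting a first moment. Absorbing a factor $m$ into $\mathcal{O}(\cdot)$ would spoil the displayed $\sqrt{\log m}$, but you can confine $m$ to a vanishing term:
\begin{itemize}
\item The true-gradient part of $\eta\widetilde{\bm u}^{(j)}_{k,T}$ has norm at most $G/\widehat\mu$ almost surely.
\item By martingale orthogonality, the noise part has second moment at most $\eta\sigma^2/\widehat\mu$.
\end{itemize}
Hence $\mathbb{E}\max_j|\widetilde g^{(j)}_{k,T}|^2\le 2(1+m\eta\widehat\mu)\widetilde G^2\widetilde G_\text{V}^2/\widehat\mu^2$. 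With $\eta=4\log T/(\mu T)$, the factor $m\eta\widehat\mu$ vanishes as $T$ grows.
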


\begin{proof}
We begin this proof by imposing the condition $\eta \leq \min\Big(\frac{1}{\widehat{L}}, \frac{\mu}{L^2}\Big)$. Also, note that $\widetilde{\bm{g}}_{k, T} = \Big[\widetilde{g}^{(1)}_{k,T}, \ldots, \widetilde{g}^{(m)}_{k,T}\Big]^\top$ is our algorithm's approximated hypergradient. 
\\
\\
Let $\bm{w}^{*} \in \text{arg min}_{\bm{w} \in \bm{\Delta}^m} F(\bm{w})$. Similar to \cref{eq:27-jan2}, we have:
\begin{multline}
    \label{eq:27-jan2-ii}
    \frac{1}{K} \sum_{k=0}^{K-1} \Big(\mathbb{E}\Big[F(\bm{w}_k)\Big] - F(\bm{w}^{*})\Big) \leq \underbrace{\frac{1}{K} \sum_{k=0}^{K-1}  \mathbb{E}\Big[\Big \langle \widetilde{\bm{g}}_{k,T}, \bm{w}_k - \bm{w}^{*} \Big \rangle\Big]}_\text{(I)} 
    \\
    + \underbrace{\frac{1}{K} \sum_{k=0}^{K-1} \mathbb{E}\Big[\Big \langle \nabla F(\bm{w}_k) - \widetilde{\bm{g}}_{k,T}, \bm{w}_k - \bm{w}^{*}\Big \rangle\Big]}_\text{(II)}.
\end{multline}
Note that for each $j \in [m]$:
\begin{flalign}
    \nonumber
    \mathbb{E}\Big[\big|\widetilde{g}^{(j)}_{k,T}\big|\Big] & = \eta \mathbb{E}\Bigg[\Bigg| \sum_{i=0}^{T-1} \Bigg\langle \widetilde{\nabla} {\mathcal{L}}_\text{V}(\bm{\theta}_{k,T}), \big(\textbf{\textup{I}} - \eta \bm{H}_{k}\big)^{T-1-i} \widetilde{\nabla} {\ell}_j\big(\bm{\theta}_{k,i}\big)\Bigg \rangle\Bigg|\Bigg]
    \\
    \nonumber
    & \leq \eta \sum_{i=0}^{T-1} \mathbb{E}\Bigg[\Bigg|\Bigg\langle \widetilde{\nabla} {\mathcal{L}}_\text{V}(\bm{\theta}_{k,T}), \big(\textbf{\textup{I}} - \eta \bm{H}_{k}\big)^{T-1-i} \widetilde{\nabla} {\ell}_j\big(\bm{\theta}_{k,i}\big)\Bigg \rangle\Bigg|\Bigg]
    \\
    \nonumber
    & \leq \eta \sum_{i=0}^{T-1} \Big\|\big(\textbf{\textup{I}} - \eta \bm{H}_{k}\big)\Big\|_\text{op}^{T-1-i} \mathbb{E}\Bigg[\Big\|\widetilde{\nabla} {\mathcal{L}}_\text{V}(\bm{\theta}_{k,T})\Big\|_2 \Big\|\widetilde{\nabla} {\ell}_j\big(\bm{\theta}_{k,i}\big)\Big\|_2\Bigg]
    \\
    \label{eq:76-jan8-ii}
    & \leq \eta \sum_{i=0}^{T-1} \Big\|\big(\textbf{\textup{I}} - \eta \bm{H}_{k}\big)\Big\|_\text{op}^{T-1-i} \sqrt{\mathbb{E}\Bigg[\Big\|\widetilde{\nabla} {\mathcal{L}}_\text{V}(\bm{\theta}_{k,T})\Big\|_2^2\Bigg] \mathbb{E}\Bigg[\Big\|\widetilde{\nabla} {\ell}_j\big(\bm{\theta}_{k,i}\big)\Big\|_2^2\Bigg]},
    \\
    \label{eq:29-dec30-ii}
    & \leq \eta \sum_{i=0}^{T-1} (1 - \eta \widehat{\mu})^{T-1-i} \widetilde{G}_\text{V} \widetilde{G},
\end{flalign}
where \cref{eq:76-jan8-ii} follows from H\"{o}lder's inequality, while \cref{eq:29-dec30-ii} follows by using the fact that {$\widehat{\mu} \textbf{\textup{I}} \preceq \bm{H}_k \preceq \widehat{L} \textbf{\textup{I}}$ and $\big(\textbf{\textup{I}} - \eta \bm{H}_{k}\big)$ is PSD as $\eta \leq \frac{1}{\widehat{L}}$ with $\big\|\textbf{\textup{I}} - \eta \bm{H}_{k}\big\|_\text{op} \leq (1 - \eta \widehat{\mu})$, and $\widetilde{\nabla} {\mathcal{L}}_\text{V}(\cdot)$ and $\widetilde{\nabla} {\ell}_j(\cdot)$ have second moments bounded by $\widetilde{G}_\text{V}$ and $\widetilde{G}$, respectively (\Cref{fact-stoc}).\footnote{Note that in \cref{eq:29-dec30-ii} we have taken expectation w.r.t. the randomness in the stochastic gradients first, while conditioning on the randomness in $\bm{\theta}_{k,i}$ and $\bm{\theta}_{k,T}$. This is what allows us to use the fact that the stochastic gradients have bounded second moments.}} Simplifying \cref{eq:29-dec30-ii}, we get:
\begin{equation}
    \label{eq:30-dec30-ii}
    \mathbb{E}\Big[\|\widetilde{\bm{g}}_{k, T}\|_\infty\Big] \leq \mathbb{E}\Big[\max_{j \in [m]} \big|\widetilde{g}^{(j)}_{k,T}\big|\Big] \leq
    \eta \sum_{i=0}^{\infty} (1 - \eta \widehat{\mu})^{i} \widetilde{G}_\text{V} \widetilde{G} = 
    \frac{\widetilde{G} \widetilde{G}_\text{V}}{\widehat{\mu}}.
\end{equation}
It is easy to extend \Cref{lem1} (by first taking expectation w.r.t. the randomness in the current round and then taking expectation w.r.t. the previous rounds) to get the following result when $\mathbb{E}\Big[\|\widetilde{\bm{g}}_{k, T}\|_\infty\Big] \leq \widetilde{G}_\text{max}$:
\begin{equation*}
    \sum_{k=0}^{K-1} \mathbb{E}\Big[\big \langle \widetilde{\bm{g}}_{k,T}, \bm{w}_k - \bm{w}^{*} \big \rangle\Big] \leq \frac{\log m}{\alpha} + \frac{K \alpha \widetilde{G}_\textup{max}^2}{2}.
\end{equation*}
Plugging in $\widetilde{G}_\text{max} = \frac{\widetilde{G} \widetilde{G}_\text{V}}{\widehat{\mu}}$ from \cref{eq:30-dec30-ii} above, we get (similar to \cref{eq:30-jan2}):
\begin{equation}
    \label{eq:30-jan2-ii}
    \text{(I)} \leq \frac{\log m}{K \alpha} + \frac{\alpha \widetilde{G}^2 \widetilde{G}_\text{V}^2}{2 \widehat{\mu}^2}.
\end{equation}
As for (II), letting $\bm{v}_k = \bm{w}_k - \bm{w}^{*}$, we have:
\begin{equation}
    \label{eq:81-jan8}
    \mathbb{E}\Big[\Big \langle \nabla F(\bm{w}_k) - \widetilde{\bm{g}}_{k,T}, \bm{v}_k\Big \rangle\Big] = \mathbb{E}_{\{0,1,\ldots,k-1\}}
    \Bigg[\mathbb{E}_k\Bigg[\sum_{j=1}^m \Big(\nabla F(\bm{w}_k)^{(j)} - \widetilde{g}^{(j)}_{k,T}\Big) {v}_k^{(j)}\Bigg]\Bigg],
\end{equation}
where $\mathbb{E}_l[\cdot]$ denotes the expectation w.r.t. the randomness in the stochastic gradients of the $l^\text{th}$ round while conditioning on the randomness in the previous rounds (so note that $\bm{v}_l$ will be fixed in the conditional expectation). 
Recall that from \cref{eq:42-jan8}, we have:
\begin{equation}
    \nabla F(\bm{w}_k)^{(j)} = - \eta \sum_{i=0}^\infty \Bigg \langle \nabla \mathcal{L}_\text{V} \big(\bm{\theta}^{\ast}_k\big),  \big(\textbf{\textup{I}} - \eta \bm{H}_{k}^{\ast}\big)^i \nabla \ell_j\big(\bm{\theta}^{\ast}_k\big) \Bigg \rangle.
\end{equation}
Note that $\bm{\theta}^{\ast}_k$ only depends on $\bm{w}_k$ and is therefore fixed while taking the conditional expectation $\mathbb{E}_k[\cdot]$. 
\\
\\
Also, similar to \cref{eq:43-jan8}, we have:
\begin{equation}
\widetilde{g}^{(j)}_{k,T} = -\eta \sum_{i=0}^{T-1} \Bigg\langle \widetilde{\nabla} {\mathcal{L}}_\text{V}(\bm{\theta}_{k,T}), \big(\textbf{\textup{I}} - \eta \bm{H}_{k}\big)^{i} \widetilde{\nabla} {\ell}_j\big(\bm{\theta}_{k,T-1-i}\big)\Bigg \rangle.
\end{equation}
Plugging these two equations into \cref{eq:81-jan8}, we get:
\small
\begin{flalign}
    \nonumber
    & \mathbb{E}_k\Big[\Big \langle \nabla F(\bm{w}_k) - \widetilde{\bm{g}}_{k,T}, \bm{v}_k\Big \rangle\Big]
    \\
    \nonumber
    & = \eta \mathbb{E}_k\Bigg[\sum_{j=1}^m \sum_{i=0}^{T-1} \Bigg(\Bigg\langle \widetilde{\nabla} {\mathcal{L}}_\text{V}(\bm{\theta}_{k,T}), \big(\textbf{\textup{I}} - \eta \bm{H}_{k}\big)^{i} \widetilde{\nabla} {\ell}_j\big(\bm{\theta}_{k,T-1-i}\big)\Bigg \rangle - \Bigg \langle \nabla \mathcal{L}_\text{V} \big(\bm{\theta}^{\ast}_k\big),  \big(\textbf{\textup{I}} - \eta \bm{H}_{k}^{\ast}\big)^i \nabla \ell_j\big(\bm{\theta}^{\ast}_k\big) \Bigg \rangle\Bigg){v}_k^{(j)}\Bigg]
    \\
    & \quad \quad - \eta \mathbb{E}_k\Bigg[\sum_{j=1}^m \sum_{i=T}^\infty \Bigg \langle \nabla \mathcal{L}_\text{V} \big(\bm{\theta}^{\ast}_k\big),  \big(\textbf{\textup{I}} - \eta \bm{H}_{k}^{\ast}\big)^i \nabla \ell_j\big(\bm{\theta}^{\ast}_k\big) \Bigg \rangle {v}_k^{(j)} \Bigg].
\end{flalign}
\normalsize
Next, taking expectation w.r.t. the randomness in the stochastic gradient $\widetilde{\nabla} {\mathcal{L}}_\text{V}(\bm{\theta}_{k,T})$, which is independent of the randomness in $\{\bm{\theta}_{k,i}\}_{i=0}^{T}$ and the stochastic gradients $\{\widetilde{\nabla} {\ell}_j\big(\bm{\theta}_{k,i}\big)\}_{i=0}^{T-1}$, we get:
\small
\begin{flalign}
    \nonumber
    & \mathbb{E}_k\Big[\Big \langle \nabla F(\bm{w}_k) - \widetilde{\bm{g}}_{k,T}, \bm{v}_k\Big \rangle\Big]
    \\
    \nonumber
    & = {\eta \mathbb{E}_k\Bigg[\sum_{j=1}^m \sum_{i=0}^{T-1} \Bigg(\Bigg\langle \nabla {\mathcal{L}}_\text{V}(\bm{\theta}_{k,T}), \big(\textbf{\textup{I}} - \eta \bm{H}_{k}\big)^{i} \widetilde{\nabla} {\ell}_j\big(\bm{\theta}_{k,T-1-i}\big)\Bigg \rangle - \Bigg \langle \nabla \mathcal{L}_\text{V} \big(\bm{\theta}^{\ast}_k\big),  \big(\textbf{\textup{I}} - \eta \bm{H}_{k}^{\ast}\big)^i \nabla \ell_j\big(\bm{\theta}^{\ast}_k\big) \Bigg \rangle\Bigg){v}_k^{(j)}\Bigg]}
    \\
    \nonumber
    & \quad \quad {-\eta \mathbb{E}_k\Bigg[\sum_{j=1}^m \sum_{i=T}^\infty \Bigg \langle \nabla \mathcal{L}_\text{V} \big(\bm{\theta}^{\ast}_k\big),  \big(\textbf{\textup{I}} - \eta \bm{H}_{k}^{\ast}\big)^i \nabla \ell_j\big(\bm{\theta}^{\ast}_k\big) \Bigg \rangle {v}_k^{(j)} \Bigg]}
    \\
    \nonumber
    & = \underbrace{\eta \mathbb{E}_k\Bigg[\sum_{j=1}^m \sum_{i=0}^{T-1} \Bigg(\Bigg\langle \nabla {\mathcal{L}}_\text{V}(\bm{\theta}_{k,T}), \big(\textbf{\textup{I}} - \eta \bm{H}_{k}\big)^{i} \widetilde{\nabla} {\ell}_j\big(\bm{\theta}_{k,T-1-i}\big)\Bigg \rangle - \Bigg\langle \nabla {\mathcal{L}}_\text{V}(\bm{\theta}_{k}^{\ast}), \big(\textbf{\textup{I}} - \eta \bm{H}_{k}\big)^{i} \widetilde{\nabla} {\ell}_j\big(\bm{\theta}_{k,T-1-i}\big)\Bigg \rangle \Bigg){v}_k^{(j)}\Bigg]}_\text{(A)}
    \\
    \nonumber
    & \quad \quad + \underbrace{\eta \mathbb{E}_k\Bigg[\sum_{j=1}^m \sum_{i=0}^{T-1} \Bigg(\Bigg\langle \nabla {\mathcal{L}}_\text{V}(\bm{\theta}_{k}^{\ast}), \big(\textbf{\textup{I}} - \eta \bm{H}_{k}\big)^{i} \widetilde{\nabla} {\ell}_j\big(\bm{\theta}_{k,T-1-i}\big)\Bigg \rangle - \Bigg \langle \nabla \mathcal{L}_\text{V} \big(\bm{\theta}^{\ast}_k\big),  \big(\textbf{\textup{I}} - \eta \bm{H}_{k}^{\ast}\big)^i \nabla \ell_j\big(\bm{\theta}^{\ast}_k\big) \Bigg \rangle\Bigg){v}_k^{(j)}\Bigg]}_\text{(B)}
    \\
    \label{eq:85-jan10}
    & \quad \quad \underbrace{-\eta \mathbb{E}_k\Bigg[\sum_{j=1}^m \sum_{i=T}^\infty \Bigg \langle \nabla \mathcal{L}_\text{V} \big(\bm{\theta}^{\ast}_k\big),  \big(\textbf{\textup{I}} - \eta \bm{H}_{k}^{\ast}\big)^i \nabla \ell_j\big(\bm{\theta}^{\ast}_k\big) \Bigg \rangle {v}_k^{(j)} \Bigg]}_\text{(C)}.
\end{flalign}
\normalsize
Again, taking expectation w.r.t. the randomness in the stochastic gradient $\widetilde{\nabla} {\mathcal{L}}_\text{V}(\bm{\theta}_{k,T-1-i})$, which is conditionally independent of the randomness in $\bm{\theta}_{k,T-1-i}$, we get:
\begin{equation}
    \text{(B)} = \eta \mathbb{E}_k\Bigg[\sum_{j=1}^m \sum_{i=0}^{T-1} \Bigg\langle \nabla {\mathcal{L}}_\text{V}(\bm{\theta}_{k}^{\ast}), \big(\textbf{\textup{I}} - \eta \bm{H}_{k}\big)^{i} \nabla {\ell}_j\big(\bm{\theta}_{k,T-1-i}\big) - \big(\textbf{\textup{I}} - \eta \bm{H}_{k}^{\ast}\big)^i \nabla \ell_j\big(\bm{\theta}^{\ast}_k\big) \Bigg \rangle{v}_k^{(j)}\Bigg].
\end{equation}
Plugging this into \cref{eq:85-jan10}, we get:
\begin{flalign}
    \nonumber
    & \mathbb{E}_k\Big[\Big \langle \nabla F(\bm{w}_k) - \widetilde{\bm{g}}_{k,T}, \bm{v}_k\Big \rangle\Big]
    \\
    \nonumber 
    & = \underbrace{\eta \mathbb{E}_k\Bigg[\sum_{j=1}^m \sum_{i=0}^{T-1} \Bigg\langle \nabla {\mathcal{L}}_\text{V}(\bm{\theta}_{k,T}) - \nabla {\mathcal{L}}_\text{V}(\bm{\theta}_{k}^{\ast}), \big(\textbf{\textup{I}} - \eta \bm{H}_{k}\big)^{i} \widetilde{\nabla} {\ell}_j\big(\bm{\theta}_{k,T-1-i}\big)\Bigg \rangle {v}_k^{(j)}\Bigg]}_\text{(A)}
    \\
    \nonumber
    & \quad \quad + \underbrace{\eta \mathbb{E}_k\Bigg[\sum_{j=1}^m \sum_{i=0}^{T-1} \Bigg\langle \nabla {\mathcal{L}}_\text{V}(\bm{\theta}_{k}^{\ast}), \big(\textbf{\textup{I}} - \eta \bm{H}_{k}\big)^{i} \nabla {\ell}_j\big(\bm{\theta}_{k,T-1-i}\big) - \big(\textbf{\textup{I}} - \eta \bm{H}_{k}^{\ast}\big)^i \nabla \ell_j\big(\bm{\theta}^{\ast}_k\big) \Bigg \rangle {v}_k^{(j)}\Bigg]}_\text{(B)}
    \\
    \label{eq:86-jan10}
    & \quad \quad \underbrace{-\eta \mathbb{E}_k\Bigg[\sum_{j=1}^m \sum_{i=T}^\infty \Bigg \langle \nabla \mathcal{L}_\text{V} \big(\bm{\theta}^{\ast}_k\big),  \big(\textbf{\textup{I}} - \eta \bm{H}_{k}^{\ast}\big)^i \nabla \ell_j\big(\bm{\theta}^{\ast}_k\big) \Bigg \rangle {v}_k^{(j)} \Bigg]}_\text{(C)}.
\end{flalign}
Let us first bound (C). Note that:
\begin{equation}
    \text{(C)} \leq \eta \sum_{i=T}^\infty \mathbb{E}_k\Bigg[\max_{j \in [m]} \Bigg| \Bigg \langle \nabla \mathcal{L}_\text{V} \big(\bm{\theta}^{\ast}_k\big),  \big(\textbf{\textup{I}} - \eta \bm{H}_{k}^{\ast}\big)^i \nabla \ell_j\big(\bm{\theta}^{\ast}_k\big) \Bigg \rangle \Bigg| \underbrace{\sum_{j \in [m]} |{v}_k^{(j)}|}_{= \|\bm{v}_k\|_1} \Bigg]
\end{equation}
Recalling that $\bm{v}_k = \bm{w}_k - \bm{w}^{*}$ and $\bm{w}_k$ \& $\bm{w}^{*}$ lie on the simplex, note that $\|\bm{v}_k\|_1 \leq 2$. Using this above, we get:
\begin{equation}
    \label{eq:90-jan10}
    \text{(C)} \leq 2 \eta \sum_{i=T}^\infty \mathbb{E}_k\Bigg[\max_{j \in [m]} \Bigg| \Bigg \langle \nabla \mathcal{L}_\text{V} \big(\bm{\theta}^{\ast}_k\big),  \big(\textbf{\textup{I}} - \eta \bm{H}_{k}^{\ast}\big)^i \nabla \ell_j\big(\bm{\theta}^{\ast}_k\big) \Bigg \rangle \Bigg|\Bigg].
\end{equation}
Next, following the steps of \cref{eq:39-jan2} to bound the RHS of the above equation, we get:
\begin{equation}
    \label{eq:90-C}
    \text{(C)} \leq \frac{2 G_\text{V} G}{\mu} \big(1 - \eta \mu\big)^{T}.
\end{equation}
Similarly, using the fact that $\|\bm{v}_k\|_1 \leq 2$, we have:
\begin{flalign}
    \text{(A)} \leq 2 \eta \sum_{i=0}^{T-1} \max_{j \in [m]} \mathbb{E}_k\Bigg[\Bigg|\Bigg\langle \nabla {\mathcal{L}}_\text{V}(\bm{\theta}_{k,T}) - \nabla {\mathcal{L}}_\text{V}(\bm{\theta}_{k}^{\ast}), \big(\textbf{\textup{I}} - \eta \bm{H}_{k}\big)^{i} \widetilde{\nabla} {\ell}_j\big(\bm{\theta}_{k,T-1-i}\big)\Bigg \rangle\Bigg|\Bigg],
\end{flalign}
and 
\begin{equation}
    \text{(B)} \leq 2 \eta \sum_{i=0}^{T-1} \max_{j \in [m]}\mathbb{E}_k\Bigg[\Bigg|\Bigg\langle \nabla {\mathcal{L}}_\text{V}(\bm{\theta}_{k}^{\ast}), \big(\textbf{\textup{I}} - \eta \bm{H}_{k}\big)^{i} \nabla {\ell}_j\big(\bm{\theta}_{k,T-1-i}\big) - \big(\textbf{\textup{I}} - \eta \bm{H}_{k}^{\ast}\big)^i \nabla \ell_j\big(\bm{\theta}^{\ast}_k\big) \Bigg \rangle\Bigg|\Bigg].
\end{equation}
Using the second bound of \Cref{lem3-ii}, we have:
\begin{flalign}
    \nonumber
    \text{(A)} & \leq 2 \eta \Bigg(L_\text{V} \widetilde{G} \Big(1 - \frac{\eta \mu}{2}\Big)^{T}  \big\|\bm{\theta}_{k, 0} - \bm{\theta}^{\ast}_k\big\|_2 + \frac{\sqrt{\eta} \sigma L_\text{V} \widetilde{G}}{\sqrt{\mu}} \Bigg) {\sum_{i=0}^{T-1} \big(1 - \eta \widehat{\mu}\big)^i}
    \\
    \nonumber
    & \leq 2 \eta \Bigg(L_\text{V} \widetilde{G} \Big(1 - \frac{\eta \mu}{2}\Big)^{T}  \big\|\bm{\theta}_{k, 0} - \bm{\theta}^{\ast}_k\big\|_2 + \frac{\sqrt{\eta} \sigma L_\text{V} \widetilde{G}}{\sqrt{\mu}} \Bigg) \underbrace{\sum_{i=0}^{\infty} \big(1 - \eta \widehat{\mu}\big)^i}_{= \frac{1}{\eta \widehat{\mu}}}
    \\
    \label{eq:94-jan10-A}
    & \leq \frac{2 L_\text{V} \widetilde{G}}{\widehat{\mu}} \Big(1 - \frac{\eta \mu}{2}\Big)^{T}  \big\|\bm{\theta}_{k, 0} - \bm{\theta}^{\ast}_k\big\|_2 + \frac{2 \sqrt{\eta} \sigma L_\text{V} \widetilde{G}}{\sqrt{\mu} \widehat{\mu}}.
\end{flalign}
Next, using the first bound of \Cref{lem3-ii}, we have:
\small
\begin{flalign}
    \nonumber
    \text{(B)} & \leq 2 \eta^2 \delta G_\text{V} G \sum_{i=0}^{T-1} i \Big(1 - \eta \min\big(\mu, \widehat{\mu}\big)\Big)^{i-1} + 2 \eta L G_\text{V} T \Big(1 - \frac{\eta \mu}{2}\Big)^{T-1} \big\|\bm{\theta}_{k, 0} - \bm{\theta}^{\ast}_k\big\|_2 + \frac{2 \sqrt{\eta^3} \sigma L G_\text{V}}{\sqrt{\mu}} \sum_{i=0}^{T-1} \big(1 - \eta \mu\big)^i
    \\
    \nonumber
    & \leq 2 \eta^2 \delta G_\text{V} G \underbrace{\sum_{i=0}^{\infty} i \Big(1 - \eta \min\big(\mu, \widehat{\mu}\big)\Big)^{i-1}}_{= \frac{1}{\eta^2 \min(\mu^2, \widehat{\mu}^2)}} + 2 \eta L G_\text{V} T \Big(1 - \frac{\eta \mu}{2}\Big)^{T-1} \big\|\bm{\theta}_{k, 0} - \bm{\theta}^{\ast}_k\big\|_2 + \frac{2 \sqrt{\eta^3} \sigma L G_\text{V}}{\sqrt{\mu}} \underbrace{\sum_{i=0}^{\infty} \big(1 - \eta \mu\big)^i}_{= \frac{1}{\eta \mu}}
    \\
    \label{eq:94-jan10-B}
    & = \frac{2 \delta G_\text{V} G}{\min(\mu^2, \widehat{\mu}^2)} + 2 \eta L G_\text{V} T \Big(1 - \frac{\eta \mu}{2}\Big)^{T-1} \big\|\bm{\theta}_{k, 0} - \bm{\theta}^{\ast}_k\big\|_2 + \frac{2 \sqrt{\eta} \sigma L G_\text{V}}{\sqrt{\mu^3}}.
\end{flalign}
\normalsize
Note that to get to the last equation we have used the fact that $\eta \min\big(\mu, \widehat{\mu}\big) < 1$ (as explained after \cref{eq:40-jan2}) and $\sum_{r=0}^\infty r x^{r-1} = \frac{1}{(1-x)^2}$ for any $x \in (0,1)$. 
\\
\\
Now plugging in equations (\ref{eq:94-jan10-A}), (\ref{eq:94-jan10-B}) and (\ref{eq:90-C}) into \cref{eq:86-jan10} while recalling $\bm{v}_k = \bm{w}_k - \bm{w}^{*}$, we get:
\small
\begin{flalign}
    \nonumber
    & \mathbb{E}_k\Bigg[\Big \langle \nabla F(\bm{w}_k) - \widetilde{\bm{g}}_{k,T}, \bm{w}_k - \bm{w}^{*}\Big \rangle\Bigg] \leq
    \\
    & 2 \Bigg(L G_\text{V} \eta T + \frac{L_\text{V} \widetilde{G}}{\widehat{\mu}}\Bigg) \Big(1 - \frac{\eta \mu}{2}\Big)^{T-1} \big\|\bm{\theta}_{k, 0} - \bm{\theta}^{\ast}_k\big\|_2 + \frac{2 G_\text{V} G}{\mu} \big(1 - \eta \mu\big)^{T} + 2 \sqrt{\eta} \sigma \Bigg(\frac{L G_\text{V}}{\sqrt{\mu^3}} + \frac{L_\text{V} \widetilde{G}}{\sqrt{\mu} \widehat{\mu}}\Bigg) + \frac{2 \delta G_\text{V} G}{\min(\mu^2, \widehat{\mu}^2)}.
\end{flalign}
\normalsize
Thus:
\begin{flalign}
    \nonumber
    & \mathbb{E}\Bigg[\Big \langle \nabla F(\bm{w}_k) - \widetilde{\bm{g}}_{k,T}, \bm{w}_k - \bm{w}^{*}\Big \rangle\Bigg] = \mathbb{E}_{\{0,\ldots,k-1\}}\Bigg[\mathbb{E}_k\Bigg[\Big \langle \nabla F(\bm{w}_k) - \widetilde{\bm{g}}_{k,T}, \bm{w}_k - \bm{w}^{*}\Big \rangle\Bigg]\Bigg] \leq
    \\
    \nonumber
    & \quad \Bigg\{2 \Bigg(L G_\text{V} \eta T + \frac{L_\text{V} \widetilde{G}}{\widehat{\mu}}\Bigg) \Big(1 - \frac{\eta \mu}{2}\Big)^{T-1} \mathbb{E}_{\{0,\ldots,k-1\}}\Big[\big\|\bm{\theta}_{k, 0} - \bm{\theta}^{\ast}_k\big\|_2\Big] + \frac{2 G_\text{V} G}{\mu} \big(1 - \eta \mu\big)^{T} 
    \\
    & \quad \quad +  2 \sqrt{\eta} \sigma \Bigg(\frac{L G_\text{V}}{\sqrt{\mu^3}} + \frac{L_\text{V} \widetilde{G}}{\sqrt{\mu} \widehat{\mu}}\Bigg) 
    + \frac{2 \delta G_\text{V} G}{\min(\mu^2, \widehat{\mu}^2)}\Bigg\}.
\end{flalign}
Using \Cref{lem-bounded-dist-stoc}, we have $\mathbb{E}_{\{0,\ldots,k-1\}}\Big[\big\|\bm{\theta}_{k, 0} - \bm{\theta}^{\ast}_k\big\|_2\Big] \leq \overline{R} = 3 \max\big(\big\|\bm{\theta}_{0, 0} - \bm{\theta}^{\ast}_0\big\|_2, B\big)$ for all $k \geq 0$, when $\frac{\log 9}{\mu T} \leq \eta \leq \frac{\mu (\overline{R})^2}{9 \sigma^2}$. $\Big($Note that now our overall condition on $\eta$ becomes $\frac{\log 9}{\mu T} \leq \eta \leq \min\Big(\frac{1}{\widehat{L}}, \frac{\mu}{L^2}, \frac{\mu (\overline{R})^2}{9 \sigma^2}\Big)$.$\Big)$ Using this in the above equation, we get:
\begin{flalign}
    \nonumber
    & \mathbb{E}\Bigg[\Big \langle \nabla F(\bm{w}_k) - \widetilde{\bm{g}}_{k,T}, \bm{w}_k - \bm{w}^{*}\Big \rangle\Bigg] \leq
    \\
    \label{eq:96-jan11-ii}
    & 2 \overline{R} \Bigg(L G_\text{V} \eta T + \frac{L_\text{V} \widetilde{G}}{\widehat{\mu}}\Bigg) \Big(1 - \frac{\eta \mu}{2}\Big)^{T-1} + \frac{2 G_\text{V} G}{\mu} \big(1 - \eta \mu\big)^{T} + 2 \sqrt{\eta} \sigma \Bigg(\frac{L G_\text{V}}{\sqrt{\mu^3}} + \frac{L_\text{V} \widetilde{G}}{\sqrt{\mu} \widehat{\mu}}\Bigg) + \frac{2 \delta G_\text{V} G}{\min(\mu^2, \widehat{\mu}^2)}.
\end{flalign}
Thus:
\begin{flalign}
    \nonumber
    & \text{(II)} = \frac{1}{K} \sum_{k=0}^{K-1} \mathbb{E}\Big[\Big \langle \nabla F(\bm{w}_k) - \widetilde{\bm{g}}_{k,T}, \bm{w}_k - \bm{w}^{*}\Big \rangle\Big] \leq 
    \\
    \label{eq:96-jan11}
    & 2 \overline{R} \Bigg(L G_\text{V} \eta T + \frac{L_\text{V} \widetilde{G}}{\widehat{\mu}}\Bigg) \Big(1 - \frac{\eta \mu}{2}\Big)^{T-1} + \frac{2 G_\text{V} G}{\mu} \big(1 - \eta \mu\big)^{T} + 2 \sqrt{\eta} \sigma \Bigg(\frac{L G_\text{V}}{\sqrt{\mu^3}} + \frac{L_\text{V} \widetilde{G}}{\sqrt{\mu} \widehat{\mu}}\Bigg) + \frac{2 \delta G_\text{V} G}{\min(\mu^2, \widehat{\mu}^2)}.
\end{flalign}
Using \cref{eq:96-jan11} and \cref{eq:30-jan2-ii} in \cref{eq:27-jan2-ii} while using the fact that $1 - x \leq e^{-x}$ for all $x \geq 0$:
\small
\begin{multline}
    \label{eq:98-jan11}
    \frac{1}{K} \sum_{k=0}^{K-1} \Big(\mathbb{E}\Big[F(\bm{w}_k)\Big] - F(\bm{w}^{*})\Big) \leq \\
    \mathcal{O}\Bigg(\frac{\log m}{K \alpha} + \frac{\alpha \widetilde{G}^2 \widetilde{G}_\text{V}^2}{\widehat{\mu}^2} + \overline{R} \Bigg(L G_\text{V} \eta T + \frac{L_\text{V} \widetilde{G}}{\widehat{\mu}}\Bigg) e^{-\frac{\eta T \mu}{2}}  + \frac{G_\text{V} G}{\mu} e^{- \eta T \mu} + \sqrt{\eta} \sigma \Bigg(\frac{L G_\text{V}}{\sqrt{\mu^3}} + \frac{L_\text{V} \widetilde{G}}{\sqrt{\mu} \widehat{\mu}}\Bigg) + \frac{\delta G_\text{V} G}{\min(\mu^2, \widehat{\mu}^2)}\Bigg).
\end{multline}
\normalsize
It can be verified that the RHS of \cref{eq:98-jan11} is minimized (order-wise w.r.t. $T$ and $K$) by setting $\alpha = \frac{\widehat{\mu} \sqrt{\log m}}{\sqrt{K} \widetilde{G} \widetilde{G}_\text{V}}$ and $\eta = \frac{4 \log T}{\mu T}$. {Here we restrict $T$ such that it satisfies the constraint $\frac{4 \log T}{\mu T} \leq \min\Big(\frac{1}{\widehat{L}}, \frac{\mu}{L^2}, \frac{\mu (\overline{R})^2}{9 \sigma^2}\Big)$; this is satisfied for $T \geq \Big\lceil \mathcal{O}\Big(\max\Big(\frac{\widehat{L}}{\mu} \log \frac{\widehat{L}}{\mu}, \frac{L^2}{\mu^2} \log \frac{L^2}{\mu^2}, \frac{\sigma^2}{\mu^2 (\overline{R})^2}\Big)\Big)\Big\rceil$. The other constraint of $\frac{4 \log T}{\mu T} \geq \frac{\log 9}{\mu T}$ due to \Cref{lem-bounded-dist-stoc} is satisfied with $T \geq 2$}. With these choices, we get:
\begin{equation}
    \label{eq:122-jan17}
    \frac{1}{K} \sum_{k=0}^{K-1} \Big(\mathbb{E}\Big[F(\bm{w}_k)\Big] - F(\bm{w}^{*})\Big) \leq \mathcal{O}\Bigg({\frac{\widetilde{G} \widetilde{G}_\text{V} \sqrt{\log m}}{\widehat{\mu} \sqrt{K}} + \sigma \Bigg(\frac{L G_\text{V}}{\mu^2} + \frac{L_\text{V} \widetilde{G}}{{\mu} \widehat{\mu}}\Bigg) \frac{\sqrt{\log T}}{\sqrt{T}}} + {\frac{\delta G_\text{V} G}{\min(\mu^2, \widehat{\mu}^2)}}\Bigg).
\end{equation}
Letting $\overline{\bm{w}}_K := \frac{1}{K} \sum_{k=0}^{K-1} \bm{w}_k$, we have using Jensen's inequality,  $F(\overline{\bm{w}}_K) - F(\bm{w}^{\ast}) \leq \frac{1}{K} \sum_{k=0}^{K-1} \big(F(\bm{w}_k) - F(\bm{w}^{*})\big)$. Using this in \cref{eq:122-jan17} and letting $F^{*} = F(\bm{w}^{*})$, we get:
\begin{equation}
    \mathbb{E}\Big[F(\overline{\bm{w}}_K)\Big] - F^{*} \leq \mathcal{O}\Bigg(\underbrace{\frac{\widetilde{G} \widetilde{G}_\text{V} \sqrt{\log m}}{\widehat{\mu} \sqrt{K}} + \sigma \Bigg(\frac{L G_\text{V}}{\mu^2} + \frac{L_\text{V} \widetilde{G}}{{\mu} \widehat{\mu}}\Bigg) \frac{\sqrt{\log T}}{\sqrt{T}}}_\text{reducible error, $\mathcal{E}$} + \underbrace{\frac{\delta G_\text{V} G}{\min(\mu^2, \widehat{\mu}^2)}}_\text{irreducible error}\Bigg).
\end{equation}
Let us fix the total parameter update budget to $N$, i.e., $K T = N$. Under this constraint, we will determine the optimal $T$ that will minimize the RHS. In particular, note that the reducible error is:
\begin{equation}
    \mathcal{E}(T) = {\mathcal{O}}\Bigg(\Bigg({\frac{\widetilde{G} \widetilde{G}_\text{V} \sqrt{\log m}}{\widehat{\mu} \sqrt{N}}\Bigg) \sqrt{T} + \sigma \Bigg(\frac{L G_\text{V}}{\mu^2} + \frac{L_\text{V} \widetilde{G}}{{\mu} \widehat{\mu}}\Bigg) \frac{\sqrt{\log T}}{\sqrt{T}}}\Bigg).
\end{equation}
It can be verified that $\mathcal{E}(T)$ is minimized by choosing $T = \Theta(\sqrt{N \log N})$. In particular, choosing $T = \sqrt{N \log N}$, the reducible error is:
\begin{equation}
    \mathcal{E} = {\mathcal{O}}\Bigg(\Bigg({\frac{\widetilde{G} \widetilde{G}_\text{V} \sqrt{\log m}}{\widehat{\mu}} + \frac{\sigma L G_\text{V}}{\mu^2} + \frac{\sigma L_\text{V} \widetilde{G}}{{\mu} \widehat{\mu}}\Bigg) \frac{(\log N)^{1/4}}{N^{1/4}}}\Bigg).
\end{equation}
\end{proof}

\begin{fact}[\textbf{Bounded second moment}]
    \label{fact-stoc}
    Suppose \Cref{asmp-1}, \Cref{asmp-2}, and \Cref{asmp-stoc} hold. Then the second moments of $\widetilde{\nabla} {\ell}_j(\cdot)$ and $\widetilde{\nabla} {\mathcal{L}}_\text{V}(\cdot)$ are bounded by $\widetilde{G} = \sqrt{G^2 + \sigma^2}$ and $\widetilde{G}_\text{V} = \sqrt{G_\text{V}^2 + \sigma^2}$, respectively.
\end{fact}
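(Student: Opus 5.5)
The claim is that $\mathbb{E}\big[\|\widetilde{\nabla} \ell_j(\bm{\theta})\|_2^2\big] \le G^2 + \sigma^2$ for every fixed $\bm{\theta}$, and similarly $\mathbb{E}\big[\|\widetilde{\nabla} \mathcal{L}_\text{V}(\bm{\theta})\|_2^2\big] \le G_\text{V}^2 + \sigma^2$. I will use the usual bias--variance decomposition.

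Fix $\bm{\theta}$ and write $\widetilde{\nabla} \ell_j(\bm{\theta}) = \nabla \ell_j(\bm{\theta}) + \bm{\xi}$, where $\bm{\xi} := \widetilde{\nabla} \ell_j(\bm{\theta}) - \nabla \ell_j(\bm{\theta})$. Expanding the square gives
\begin{equation*}
\mathbb{E}_\zeta\big[\|\widetilde{\nabla} \ell_j(\bm{\theta})\|_2^2\big] = \|\nabla \ell_j(\bm{\theta})\|_2^2 + 2\big\langle \nabla \ell_j(\bm{\theta}), \mathbb{E}_\zeta[\bm{\xi}]\big\rangle + \mathbb{E}_\zeta\big[\|\bm{\xi}\|_2^2\big].
\end{equation*}
By the unbiasedness part of \Cref{asmp-stoc}, $\mathbb{E}_\zeta[\bm{\xi}] = \vec{0}$, so the cross term vanishes. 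The first term is at most $G^2$ because $\ell_j$ is $G$-Lipschitz (\Cref{asmp-1}). The last term is at most $\sigma^2$ by the bounded-variance part of \Cref{asmp-stoc}. Together these give the bound $G^2 + \sigma^2 = \widetilde{G}^2$.

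The identical argument, using \Cref{asmp-2} for $\|\nabla \mathcal{L}_\text{V}\|_2 \le G_\text{V}$ in place of \Cref{asmp-1}, gives $\mathbb{E}_\zeta\big[\|\widetilde{\nabla}\mathcal{L}_\text{V}(\bm{\theta})\|_2^2\big] \le G_\text{V}^2 + \sigma^2 = \widetilde{G}_\text{V}^2$.

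The one subtlety arises where this fact is applied, at random points such as $\bm{\theta}_{k,i}$. There the bound should be read conditionally: condition on the randomness that determines the evaluation point, apply the fixed-$\bm{\theta}$ bound, and then take the outer expectation. This is valid because the assumptions hold uniformly in $\bm{\theta}$ and the fresh noise is independent of the evaluation point. There is no genuine obstacle here.
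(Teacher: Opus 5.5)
Your proposal is correct and follows the paper's argument: expand $\|\widetilde{\nabla}\ell_j(\bm{\theta})\|_2^2$ around $\nabla\ell_j(\bm{\theta})$, kill the cross term by unbiasedness, bound the mean term by $G^2$ (resp.\ $G_\text{V}^2$) via Lipschitzness, and bound the variance term by $\sigma^2$. Your remark about applying the bound conditionally at random evaluation points matches the paper's own footnote on this usage.
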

\begin{proof}
    Note that under \Cref{asmp-stoc} and \Cref{asmp-1}, we have:
    $$\mathbb{E}_\zeta\big[\|\widetilde{\nabla} {\ell}_j(\bm{\theta})\|_2^2\big] = \mathbb{E}_\zeta\big[\|\widetilde{\nabla} {\ell}_j(\bm{\theta}) - \nabla {\ell}_j(\bm{\theta})\|_2^2\big] + \|\nabla {\ell}_j(\bm{\theta})\|_2^2 \leq \sigma^2 + G^2 = \widetilde{G}^2.$$
    Similarly, under \Cref{asmp-stoc} and \Cref{asmp-2}, we have:
    $$\mathbb{E}_\zeta\big[\|\widetilde{\nabla} {\mathcal{L}}_\text{V}(\bm{\theta})\|_2^2\big] = \mathbb{E}_\zeta\big[\|\widetilde{\nabla} {\mathcal{L}}_\text{V}(\bm{\theta}) - \nabla \mathcal{L}_\text{V}(\bm{\theta})\|_2^2\big] + \|\nabla \mathcal{L}_\text{V}(\bm{\theta})\|_2^2 \leq \sigma^2 + G_\text{V}^2 = \widetilde{G}_\text{V}^2.$$
    Thus, the second moments of $\widetilde{\nabla} {\ell}_j(\cdot)$ and $\widetilde{\nabla} {\mathcal{L}}_\text{V}(\cdot)$ are bounded by $\widetilde{G}$ and $\widetilde{G}_\text{V}$, respectively.
\end{proof}

\begin{lemma}
    \label{lem3-ii}
    Suppose the conditions of \Cref{thm:stochastic-case} hold and $\eta \leq \min\Big(\frac{1}{\widehat{L}}, \frac{\mu}{L^2}\Big)$. Then for each $j \in [m]$, we have:
    \begin{flalign*}
        & \mathbb{E}_k\Bigg[\Bigg|\Bigg\langle \nabla {\mathcal{L}}_\text{V}(\bm{\theta}_{k}^{\ast}), \big(\textbf{\textup{I}} - \eta \bm{H}_{k}\big)^{i} \nabla {\ell}_j\big(\bm{\theta}_{k,T-1-i}\big) - \big(\textbf{\textup{I}} - \eta \bm{H}_{k}^{\ast}\big)^i \nabla \ell_j\big(\bm{\theta}^{\ast}_k\big) \Bigg \rangle\Bigg|\Bigg]
        \\
        & \leq \eta \delta G_\text{V} G i \Big(1 - \eta \min\big(\mu, \widehat{\mu}\big)\Big)^{i-1} + L G_\text{V} \Big(1 - \frac{\eta \mu}{2}\Big)^{T-1} \big\|\bm{\theta}_{k, 0} - \bm{\theta}^{\ast}_k\big\|_2 +  \frac{\sqrt{\eta} \sigma L G_\text{V}}{\sqrt{\mu}} \big(1 - \eta \mu\big)^i,
    \end{flalign*}
    and
    \begin{flalign*}
        & \mathbb{E}_k\Bigg[\Bigg|\Bigg\langle \nabla {\mathcal{L}}_\text{V}(\bm{\theta}_{k,T}) - \nabla {\mathcal{L}}_\text{V}(\bm{\theta}_{k}^{\ast}), \big(\textbf{\textup{I}} - \eta \bm{H}_{k}\big)^{i} \widetilde{\nabla} {\ell}_j\big(\bm{\theta}_{k,T-1-i}\big)\Bigg \rangle\Bigg|\Bigg] \\ 
        & \leq
        L_\text{V} \widetilde{G} \big(1 - \eta \widehat{\mu}\big)^i \Big(1 - \frac{\eta \mu}{2}\Big)^{T}  \big\|\bm{\theta}_{k, 0} - \bm{\theta}^{\ast}_k\big\|_2 + \frac{\sqrt{\eta} \sigma L_\text{V} \widetilde{G}}{\sqrt{\mu}} \big(1 - \eta \widehat{\mu}\big)^i.
    \end{flalign*}
\end{lemma}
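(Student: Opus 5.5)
We prove the two bounds of \Cref{lem3-ii} separately, mirroring the three-term split in the proof of \Cref{lem3}. The new ingredient is a stochastic version of the linear convergence bound \cref{eq:80-jan15}. For SGD with step $\eta \le \mu/L^2$ on the $\mu$-strongly-convex, $L$-smooth $\mathcal{L}_\text{T}(\cdot,\bm{w}_k)$, with noise variance at most $\sigma^2$ (the weighted noise $\sum_j w_k^{(j)}(\widetilde{\nabla}\ell_j-\nabla\ell_j)$ has variance at most $\sigma^2$ by Jensen, since $\bm{w}_k \in \bm{\Delta}^m$), we first establish
\[
\mathbb{E}_k\big[\|\bm{\theta}_{k,t}-\bm{\theta}^*_k\|_2^2\big] \le \Big(1-\tfrac{\eta\mu}{2}\Big)^{2t}\|\bm{\theta}_{k,0}-\bm{\theta}^*_k\|_2^2 + \tfrac{\eta\sigma^2}{\mu}.
\]
The argument is the usual one. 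Expand $\|\bm{\theta}_{k,t+1}-\bm{\theta}^*_k\|^2$ and take the conditional expectation so the cross term with the noise vanishes. The deterministic part contracts by $(1-\eta\mu/2)^2$ under $\eta\le\mu/L^2$, exactly as in \cref{eq:52-jan1}, and an additive $\eta^2\sigma^2$ appears each step. Summing the geometric series, $\eta^2\sigma^2/(1-(1-\eta\mu/2)^2) \le \eta\sigma^2/\mu$. Jensen and $\sqrt{a+b}\le\sqrt a+\sqrt b$ then give
\[
\mathbb{E}_k\|\bm{\theta}_{k,t}-\bm{\theta}^*_k\|_2 \le \Big(1-\tfrac{\eta\mu}{2}\Big)^{t}\|\bm{\theta}_{k,0}-\bm{\theta}^*_k\|_2 + \tfrac{\sqrt{\eta}\sigma}{\sqrt{\mu}}.
\]

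\textbf{First bound.} Split the inner product as in \cref{eq:50-jan2}. We write $(\mathbf{I}-\eta\bm{H}_k)^i\nabla\ell_j(\bm{\theta}_{k,T-1-i})-(\mathbf{I}-\eta\bm{H}_k^*)^i\nabla\ell_j(\bm{\theta}^*_k)$ as the sum of two pieces:
\begin{itemize}
\item $\big[(\mathbf{I}-\eta\bm{H}_k)^i-(\mathbf{I}-\eta\bm{H}_k^*)^i\big]\nabla\ell_j(\bm{\theta}_{k,T-1-i})$, which \cref{eq:74-jan8} with $\|\nabla\mathcal{L}_\text{V}\|\le G_\text{V}$ and $\|\nabla\ell_j\|\le G$ bounds deterministically by $\eta\delta G_\text{V}G\,i(1-\eta\min(\mu,\widehat\mu))^{i-1}$;
\item $(\mathbf{I}-\eta\bm{H}_k^*)^i\big(\nabla\ell_j(\bm{\theta}_{k,T-1-i})-\nabla\ell_j(\bm{\theta}^*_k)\big)$, which is at most $G_\text{V}(1-\eta\mu)^iL\,\mathbb{E}_k\|\bm{\theta}_{k,T-1-i}-\bm{\theta}^*_k\|_2$.
\end{itemize}
Plugging the SGD bound into the second piece gives $LG_\text{V}(1-\eta\mu)^i(1-\eta\mu/2)^{T-1-i}\|\bm{\theta}_{k,0}-\bm{\theta}^*_k\|_2 + LG_\text{V}(1-\eta\mu)^i\sqrt{\eta}\sigma/\sqrt{\mu}$. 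The first term is at most $LG_\text{V}(1-\eta\mu/2)^{T-1}\|\bm{\theta}_{k,0}-\bm{\theta}^*_k\|_2$ because $(1-\eta\mu)^i\le(1-\eta\mu/2)^i$. These terms are exactly the first claimed bound. Note that the true gradient $\nabla\ell_j$ appears here, so no second-moment factor is needed.

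\textbf{Second bound.} This is the main obstacle, because $\bm{\theta}_{k,T}$ and $\widetilde{\nabla}\ell_j(\bm{\theta}_{k,T-1-i})$ are dependent, so the expectation does not factor. We handle it with Cauchy--Schwarz in expectation, as in \cref{eq:76-jan8-ii}:
\[
\mathbb{E}_k|\langle\cdot,\cdot\rangle| \le (1-\eta\widehat\mu)^i\sqrt{\mathbb{E}_k\|\nabla\mathcal{L}_\text{V}(\bm{\theta}_{k,T})-\nabla\mathcal{L}_\text{V}(\bm{\theta}^*_k)\|^2}\,\sqrt{\mathbb{E}_k\|\widetilde{\nabla}\ell_j(\bm{\theta}_{k,T-1-i})\|^2}.
\]
\Cref{fact-stoc}, applied by conditioning on $\bm{\theta}_{k,T-1-i}$, bounds the second factor by $\widetilde{G}$. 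By $L_\text{V}$-smoothness the first factor is at most $L_\text{V}\sqrt{\mathbb{E}_k\|\bm{\theta}_{k,T}-\bm{\theta}^*_k\|^2}$. The squared SGD bound at $t=T$, followed by $\sqrt{a+b}\le\sqrt a+\sqrt b$, gives $L_\text{V}\big((1-\eta\mu/2)^T\|\bm{\theta}_{k,0}-\bm{\theta}^*_k\|_2+\sqrt{\eta}\sigma/\sqrt{\mu}\big)$. Multiplying by $(1-\eta\widehat\mu)^i\widetilde{G}$ yields the second claimed bound.
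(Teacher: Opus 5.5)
Your proof follows the paper's argument step for step. For the first bound you use the same two-piece split: the operator-norm difference handled via \cref{eq:74-jan8}, plus the $L$-smoothness piece. For the second you use Cauchy--Schwarz in expectation with $L_\text{V}$-smoothness and \Cref{fact-stoc}. Both are fed by the SGD bound $\mathbb{E}_k\|\bm{\theta}_{k,t}-\bm{\theta}^*_k\|_2^2 \le (1-\frac{\eta\mu}{2})^{2t}\|\bm{\theta}_{k,0}-\bm{\theta}^*_k\|_2^2+\frac{\eta\sigma^2}{\mu}$, which the paper simply cites as standard.

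There is one small slip in your derivation of that SGD bound. The inequality $\eta^2\sigma^2/(1-(1-\eta\mu/2)^2)\le \eta\sigma^2/\mu$ is backwards, since $1-(1-\eta\mu/2)^2=\eta\mu(1-\eta\mu/4)<\eta\mu$. The fix is to use the contraction the deterministic step actually gives, namely $1-2\eta\mu+\eta^2L^2\le 1-\eta\mu$ under $\eta\le\mu/L^2$. The noise then accumulates as $\eta^2\sigma^2\sum_{s\ge 0}(1-\eta\mu)^s=\eta\sigma^2/\mu$. Apply the relaxation $(1-\eta\mu)^t\le(1-\eta\mu/2)^{2t}$ only to the initial-distance term.

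With that adjustment the proof is complete. Your Jensen remark bounding the variance of the weighted noise $\sum_j w_k^{(j)}\big(\widetilde{\nabla}\ell_j-\nabla\ell_j\big)$ by $\sigma^2$ is a correct detail that the paper glosses over.
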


\begin{proof}
Using the fact that $\mathcal{L}_\text{V}$ is $G_\text{V}$-Lipschitz, $\ell_j$ is $G$-Lipschitz and $L$-smooth, $(\textbf{\textup{I}} - \eta \bm{H}_{k}^{\ast})$ is PSD and $\big\|\textbf{\textup{I}} - \eta \bm{H}_{k}^{\ast}\big\|_\text{op} \leq 1 - \eta \mu$ (see the explanation after \cref{eq:38-jan2}), we have:
\small
\begin{flalign}
    \nonumber
    & \mathbb{E}_k\Bigg[\Bigg|\Bigg\langle \nabla {\mathcal{L}}_\text{V}(\bm{\theta}_{k}^{\ast}), \big(\textbf{\textup{I}} - \eta \bm{H}_{k}\big)^{i} \nabla {\ell}_j\big(\bm{\theta}_{k,T-1-i}\big) - \big(\textbf{\textup{I}} - \eta \bm{H}_{k}^{\ast}\big)^i \nabla \ell_j\big(\bm{\theta}^{\ast}_k\big) \Bigg \rangle\Bigg|\Bigg]
    \\
    \nonumber
    & \leq G_\text{V} \mathbb{E}_k\Bigg[\Big\|\big(\textbf{\textup{I}} - \eta \bm{H}_{k}\big)^{i} \nabla {\ell}_j\big(\bm{\theta}_{k,T-1-i}\big) - \big(\textbf{\textup{I}} - \eta \bm{H}_{k}^{\ast}\big)^i \nabla \ell_j\big(\bm{\theta}^{\ast}_k\big)\Big\|_2\Bigg]
    \\
    \nonumber
    & \leq G_\text{V} \mathbb{E}_k\Bigg[\Big\|\Big(\big(\textbf{\textup{I}} - \eta \bm{H}_{k}\big)^{i} - \big(\textbf{\textup{I}} - \eta \bm{H}_{k}^{\ast}\big)^i \Big) \nabla {\ell}_j\big(\bm{\theta}_{k,T-1-i}\big)\Big\|_2 + \Big\|\big(\textbf{\textup{I}} - \eta \bm{H}_{k}^{\ast}\big)^i \Big(\nabla {\ell}_j\big(\bm{\theta}_{k,T-1-i}\big) - \nabla \ell_j\big(\bm{\theta}^{\ast}_k\big)\Big)\Big\|_2\Bigg]
    \\
    \nonumber
    & \leq G_\text{V} \Bigg(G \Big\|\Big(\big(\textbf{\textup{I}} - \eta \bm{H}_{k}\big)^{i} - \big(\textbf{\textup{I}} - \eta \bm{H}_{k}^{\ast}\big)^i \Big)\Big\|_\text{op} + L \Big\|\big(\textbf{\textup{I}} - \eta \bm{H}_{k}^{\ast}\big)^i\Big\|_\text{op} \mathbb{E}_k\Big[\big\|\bm{\theta}_{k,T-1-i} - \bm{\theta}^{\ast}_k\big\|_2\Big]\Bigg)
    \\
    \nonumber
    & = G_\text{V} \Bigg(G \Big\|\Big(\big(\textbf{\textup{I}} - \eta \bm{H}_{k}\big)^{i} - \big(\textbf{\textup{I}} - \eta \bm{H}_{k}^{\ast}\big)^i \Big)\Big\|_\text{op} + L \Big\|\textbf{\textup{I}} - \eta \bm{H}_{k}^{\ast}\Big\|_\text{op}^i \mathbb{E}_k\Big[\big\|\bm{\theta}_{k,T-1-i} - \bm{\theta}^{\ast}_k\big\|_2\Big]\Bigg)
    \\
    \label{eq:93-jan10}
    & \leq G_\text{V} \Bigg(G \Big\|\Big(\big(\textbf{\textup{I}} - \eta \bm{H}_{k}\big)^{i} - \big(\textbf{\textup{I}} - \eta \bm{H}_{k}^{\ast}\big)^i \Big)\Big\|_\text{op} + L \big(1 - \eta \mu\big)^i \mathbb{E}_k\Big[\big\|\bm{\theta}_{k,T-1-i} - \bm{\theta}^{\ast}_k\big\|_2\Big]\Bigg).
\end{flalign}
\normalsize
Using standard guarantees of stochastic gradient descent on strongly-convex and smooth functions, we have:
\begin{equation}
    \label{eq:94-jan10}
    \mathbb{E}_k\Big[\big\|\bm{\theta}_{k,T-1-i} - \bm{\theta}^{\ast}_k\big\|_2^2\Big] \leq \Big(1 - \frac{\eta \mu}{2}\Big)^{2(T-1-i)}  \big\|\bm{\theta}_{k, 0} - \bm{\theta}^{\ast}_k\big\|_2^2 + \frac{\eta \sigma^2}{\mu},
\end{equation}
when $\eta \leq \frac{\mu}{L^2}$. Therefore,
\begin{equation}
    \label{eq:95-jan10}
    \mathbb{E}_k\Big[\big\|\bm{\theta}_{k,T-1-i} - \bm{\theta}^{\ast}_k\big\|_2\Big] \leq \sqrt{\mathbb{E}_k\Big[\big\|\bm{\theta}_{k,T-1-i} - \bm{\theta}^{\ast}_k\big\|_2^2\Big]} \leq \Big(1 - \frac{\eta \mu}{2}\Big)^{T-1-i}  \big\|\bm{\theta}_{k, 0} - \bm{\theta}^{\ast}_k\big\|_2 + \frac{\sqrt{\eta} \sigma}{\sqrt{\mu}}.
\end{equation}
Also, in \cref{eq:74-jan8} in the proof of \Cref{lem3}, we obtained: $$\Big\|\big(\textbf{\textup{I}} - \eta \bm{H}_{k}\big)^{i} - \big(\textbf{\textup{I}} - \eta \bm{H}_{k}^{\ast}\big)^i\Big\|_\text{op} \leq \eta \delta i \Big(1 - \eta \min\big(\mu, \widehat{\mu}\big)\Big)^{i-1}.$$
Using this and \cref{eq:95-jan10} in  \cref{eq:93-jan10}, we get:
\small
\begin{flalign}
    \nonumber
    & \mathbb{E}_k\Bigg[\Bigg|\Bigg\langle \nabla {\mathcal{L}}_\text{V}(\bm{\theta}_{k}^{\ast}), \big(\textbf{\textup{I}} - \eta \bm{H}_{k}\big)^{i} \nabla {\ell}_j\big(\bm{\theta}_{k,T-1-i}\big) - \big(\textbf{\textup{I}} - \eta \bm{H}_{k}^{\ast}\big)^i \nabla \ell_j\big(\bm{\theta}^{\ast}_k\big) \Bigg \rangle\Bigg|\Bigg]
    \\
    \nonumber
    & \leq G_\text{V} \Bigg(G \eta \delta i \Big(1 - \eta \min\big(\mu, \widehat{\mu}\big)\Big)^{i-1} + L \big(1 - \eta \mu\big)^i \Big(1 - \frac{\eta \mu}{2}\Big)^{T-1-i}  \big\|\bm{\theta}_{k, 0} - \bm{\theta}^{\ast}_k\big\|_2 + L \big(1 - \eta \mu\big)^i \Big(\frac{\sqrt{\eta} \sigma}{\sqrt{\mu}}\Big)\Bigg)
    \\
    & \leq \eta \delta G_\text{V} G i \Big(1 - \eta \min\big(\mu, \widehat{\mu}\big)\Big)^{i-1} + L G_\text{V} \Big(1 - \frac{\eta \mu}{2}\Big)^{T-1} \big\|\bm{\theta}_{k, 0} - \bm{\theta}^{\ast}_k\big\|_2 +  \frac{\sqrt{\eta} \sigma L G_\text{V}}{\sqrt{\mu}} \big(1 - \eta \mu\big)^i.
\end{flalign}
\normalsize
This finishes the proof of the first bound.
\\
\\
For the second bound, we begin by using H\"{o}lder's inequality to get:
\begin{flalign}
    \nonumber
    & \mathbb{E}_k\Bigg[\Bigg|\Bigg\langle \nabla {\mathcal{L}}_\text{V}(\bm{\theta}_{k,T}) - \nabla {\mathcal{L}}_\text{V}(\bm{\theta}_{k}^{\ast}), \big(\textbf{\textup{I}} - \eta \bm{H}_{k}\big)^{i} \widetilde{\nabla} {\ell}_j\big(\bm{\theta}_{k,T-1-i}\big)\Bigg \rangle\Bigg|\Bigg]
    \\
    \nonumber
    & \leq \mathbb{E}_k\Bigg[\Big\|\nabla {\mathcal{L}}_\text{V}(\bm{\theta}_{k,T}) - \nabla {\mathcal{L}}_\text{V}(\bm{\theta}_{k}^{\ast})\Big\|_2 \Big\| \big(\textbf{\textup{I}} - \eta \bm{H}_{k}\big)^{i} \widetilde{\nabla} {\ell}_j\big(\bm{\theta}_{k,T-1-i}\big)\Big\|_2\Bigg]
    \\
    \label{eq:97-jan10}
    & \leq \sqrt{\mathbb{E}_k\Bigg[\Big\|\nabla {\mathcal{L}}_\text{V}(\bm{\theta}_{k,T}) - \nabla {\mathcal{L}}_\text{V}(\bm{\theta}_{k}^{\ast})\Big\|_2^2\Bigg]} \sqrt{\mathbb{E}_k\Bigg[\Big\| \big(\textbf{\textup{I}} - \eta \bm{H}_{k}\big)^{i} \widetilde{\nabla} {\ell}_j\big(\bm{\theta}_{k,T-1-i}\big)\Big\|_2^2\Bigg]}
\end{flalign}
Next, using the fact that $\mathcal{L}_\text{V}$ is $L_\text{V}$-smooth, $(\textbf{\textup{I}} - \eta \bm{H}_{k})$ is PSD and $\big\|\textbf{\textup{I}} - \eta \bm{H}_{k}\big\|_\text{op} \leq 1 - \eta \widehat{\mu}$ (see the explanation after \cref{eq:29-dec30}), and the second moment of $\widetilde{\nabla} {\ell}_j(\cdot)$ is bounded by $\widetilde{G}$ (\Cref{fact-stoc}) in \cref{eq:97-jan10}, we get:
\begin{flalign}
    \nonumber
    & \mathbb{E}_k\Bigg[\Bigg|\Bigg\langle \nabla {\mathcal{L}}_\text{V}(\bm{\theta}_{k,T}) - \nabla {\mathcal{L}}_\text{V}(\bm{\theta}_{k}^{\ast}), \big(\textbf{\textup{I}} - \eta \bm{H}_{k}\big)^{i} \widetilde{\nabla} {\ell}_j\big(\bm{\theta}_{k,T-1-i}\big)\Bigg \rangle\Bigg|\Bigg]
    \\
    \nonumber
    & \leq L_\text{V} \sqrt{\mathbb{E}_k\Big[\big\|\bm{\theta}_{k,T} - \bm{\theta}_{k}^{\ast}\big\|_2^2\Big]} \sqrt{\Big\|\big(\textbf{\textup{I}} - \eta \bm{H}_{k}\big)^{2i}\Big\|_\text{op}\mathbb{E}_k\Big[\Big\|\widetilde{\nabla} {\ell}_j\big(\bm{\theta}_{k,T-1-i}\big)\Big\|_2^2\Big]}
    \\
    \label{eq:98-jan10}
    & \leq L_\text{V} \widetilde{G} \big(1 - \eta \widehat{\mu}\big)^i \sqrt{\mathbb{E}_k\Big[\big\|\bm{\theta}_{k,T} - \bm{\theta}_{k}^{\ast}\big\|_2^2\Big]}.
\end{flalign}
Just like \cref{eq:94-jan10}, we have:
\begin{equation}
    \label{eq:99-jan10}
    \mathbb{E}_k\Big[\big\|\bm{\theta}_{k,T} - \bm{\theta}^{\ast}_k\big\|_2^2\Big] \leq \Big(1 - \frac{\eta \mu}{2}\Big)^{2T}  \big\|\bm{\theta}_{k, 0} - \bm{\theta}^{\ast}_k\big\|_2^2 + \frac{\eta \sigma^2}{\mu}.
\end{equation}
Using this in \cref{eq:98-jan10} and simplifying a bit, we get:
\begin{flalign}
    \nonumber
    & \mathbb{E}_k\Bigg[\Bigg|\Bigg\langle \nabla {\mathcal{L}}_\text{V}(\bm{\theta}_{k,T}) - \nabla {\mathcal{L}}_\text{V}(\bm{\theta}_{k}^{\ast}), \big(\textbf{\textup{I}} - \eta \bm{H}_{k}\big)^{i} \widetilde{\nabla} {\ell}_j\big(\bm{\theta}_{k,T-1-i}\big)\Bigg \rangle\Bigg|\Bigg] 
    \\
    & \leq L_\text{V} \widetilde{G} \big(1 - \eta \widehat{\mu}\big)^i \Big(1 - \frac{\eta \mu}{2}\Big)^{T}  \big\|\bm{\theta}_{k, 0} - \bm{\theta}^{\ast}_k\big\|_2 + \frac{\sqrt{\eta} \sigma L_\text{V} \widetilde{G}}{\sqrt{\mu}} \big(1 - \eta \widehat{\mu}\big)^i.
\end{flalign}
This finishes the proof of the second bound.
\end{proof}

\begin{lemma}
    \label{lem-bounded-dist-stoc}
    Suppose the conditions of \Cref{thm:stochastic-case} hold and {let $\overline{R} = 3 \max\Big(\big\|\bm{\theta}_{0, 0} - \bm{\theta}^{\ast}_0\big\|_2, \Big(\frac{2L}{\mu} + 1\Big) D\Big)$ be as defined in \Cref{thm:stochastic-case}}. In addition, let $\eta$ be chosen so that $\frac{\log 9}{\mu T} \leq \eta \leq \frac{\mu (\overline{R})^2}{9 \sigma^2}$. Then for all $k \geq 0$, we have:
    \begin{equation*}
        \mathbb{E}_{\{0,\ldots,k-1\}}\Big[\big\|\bm{\theta}_{k, 0} - \bm{\theta}^{\ast}_k\big\|_2\Big] \leq \overline{R}.
    \end{equation*}
\end{lemma}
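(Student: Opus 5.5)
We will prove the claim by induction on $k$, mirroring the proof of \Cref{lem-bounded-dist} but working with first moments. Throughout, write $B := \big(\frac{2L}{\mu}+1\big)D$, so that $\overline{R} = 3\max\big(\|\bm{\theta}_{0,0}-\bm{\theta}^{\ast}_0\|_2, B\big)$.

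\textbf{Base case.} For $k=0$ the statement is deterministic. Indeed, $\|\bm{\theta}_{0,0}-\bm{\theta}^{\ast}_0\|_2 \le \overline{R}/3 \le \overline{R}$.

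\textbf{Key one-step estimate.} Fix $k$ and condition on the randomness of rounds $0,\ldots,k-1$. Then $\bm{w}_k$, $\bm{\theta}^{\ast}_k$ and $\bm{\theta}_{k,0}$ are all fixed. Since $\bm{\theta}_{k+1,0}=\bm{\theta}_{k,T}$, the triangle inequality gives
\[
\|\bm{\theta}_{k+1,0}-\bm{\theta}^{\ast}_{k+1}\|_2 \le \|\bm{\theta}_{k,T}-\bm{\theta}^{\ast}_k\|_2 + \|\bm{\theta}^{\ast}_k-\bm{\theta}^{\ast}_{k+1}\|_2 .
\]
By \Cref{max:dist}, the second term is at most $B \le \overline{R}/3$, and this holds deterministically because it is a supremum over all pairs of weights. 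For the first term, we take the conditional expectation $\mathbb{E}_k$. We apply Jensen's inequality together with the SGD bound \eqref{eq:99-jan10}, which requires $\eta\le \mu/L^2$, imposed in the theorem. This yields
\[
\mathbb{E}_k\big[\|\bm{\theta}_{k,T}-\bm{\theta}^{\ast}_k\|_2\big] \le \Big(1-\tfrac{\eta\mu}{2}\Big)^{T}\|\bm{\theta}_{k,0}-\bm{\theta}^{\ast}_k\|_2 + \sqrt{\eta/\mu}\,\sigma .
\]

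\textbf{Induction step.} We now take the outer expectation over rounds $0,\ldots,k-1$. The map $x\mapsto(1-\eta\mu/2)^T x$ is linear, so it commutes with this expectation. Combining with the induction hypothesis $\mathbb{E}_{\{0,\ldots,k-1\}}\|\bm{\theta}_{k,0}-\bm{\theta}^{\ast}_k\|_2\le\overline{R}$, we get
\[
\mathbb{E}_{\{0,\ldots,k\}}\big[\|\bm{\theta}_{k+1,0}-\bm{\theta}^{\ast}_{k+1}\|_2\big] \le e^{-\eta\mu T/2}\,\overline{R} + \sigma\sqrt{\eta/\mu} + \overline{R}/3 .
\]
It remains to show that each of the three terms is at most $\overline{R}/3$.

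\textbf{Using the step-size window.} Each of the two hypotheses on $\eta$ controls one of the remaining terms.
\begin{itemize}
\item The lower bound $\eta\ge \frac{\log 9}{\mu T}$ gives $e^{-\eta\mu T/2}\le e^{-(\log 9)/2} = 1/3$. Hence the first term is at most $\overline{R}/3$.
\item The upper bound $\eta\le \frac{\mu\overline{R}^2}{9\sigma^2}$ gives $\sigma\sqrt{\eta/\mu}\le \overline{R}/3$.
\end{itemize}
Summing the three contributions gives at most $\overline{R}$, which completes the induction.

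\textbf{Main subtlety.} The potential obstacle is justifying the conditioning structure. The bound \eqref{eq:99-jan10} is valid conditionally on the past, since within round $k$ the function $\mathcal{L}_\text{T}(\cdot,\bm{w}_k)$ is fixed and its minimizer is $\bm{\theta}^{\ast}_k$. The drift bound from \Cref{max:dist} is uniform over weights and so needs no expectation. The tower property then carries the first-moment bound across rounds. The argument works because we only ever propagate a linear recursion in the first moment.
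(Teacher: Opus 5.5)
Your proposal is correct and follows essentially the same induction as the paper. Both arguments use the triangle inequality through $\bm{\theta}^{\ast}_k$, the uniform drift bound $B \le \overline{R}/3$ from \Cref{max:dist}, and Jensen's inequality combined with the SGD second-moment bound \eqref{eq:99-jan10}; each step-size condition then bounds one of the remaining terms by $\overline{R}/3$. The only cosmetic difference is that you start the induction at the trivial case $k=0$, whereas the paper treats $k=1$ explicitly as its base case.
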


\begin{proof}
The proof idea is similar to \Cref{lem-bounded-dist} and we will prove this result by induction. Before starting the proof, recall the following important result from \Cref{max:dist}.
\begin{equation}
    \label{eq:136-jan25}
    \sup_{\bm{w}, \bm{w}'} \Big\|\textup{arg min}_{\bm{\theta}} \mathcal{L}_\textup{T}(\bm{\theta}, \bm{w}) - \textup{arg min}_{\bm{\theta}} \mathcal{L}_\textup{T}(\bm{\theta}, \bm{w}')\Big\|_2 \leq B :=  \Bigg(\frac{2L}{\mu} + 1\Bigg) D,
\end{equation}
where $\max_{i, j}\big\|\textup{arg min}_{\bm{\theta}} \ell_i(\bm{\theta}) - \textup{arg min}_{\bm{\theta}} \ell_j(\bm{\theta})\big\|_2 \leq D$. With this notation, note that 
$$\overline{R} := 3 \max\big(\big\|\bm{\theta}_{0, 0} - \bm{\theta}^{\ast}_0\big\|_2, B\big).$$
Let us first consider the base case of $k=1$. Note that $\big\|\bm{\theta}_{0, 0} - \bm{\theta}^{\ast}_0\big\|_2 \leq \frac{\overline{R}}{3}$ (as per the definition of $\overline{R}$). Now:
\begin{flalign}
    \nonumber
    \mathbb{E}_{0}\Big[\big\|\bm{\theta}_{1, 0} - \bm{\theta}^{\ast}_1\big\|_2\Big] &= \mathbb{E}_{0}\Big[\big\|\bm{\theta}_{1, 0} - \bm{\theta}^{\ast}_0 + \bm{\theta}^{\ast}_0 - \bm{\theta}^{\ast}_1\big\|_2\Big]
    \\
    \nonumber
    & \leq \mathbb{E}_{0}\Big[\big\|\bm{\theta}_{1, 0} - \bm{\theta}^{\ast}_0\big\|_2\Big] + \mathbb{E}_{0}\Big[\underbrace{\big\|\bm{\theta}^{\ast}_0 - \bm{\theta}^{\ast}_1\big\|_2}_{\leq B}\Big] \quad \text{(note that $\big\|\bm{\theta}^{\ast}_0 - \bm{\theta}^{\ast}_1\big\|_2 \leq B$ using eq. \eqref{eq:136-jan25})}
    \\
    \nonumber
    & =  \mathbb{E}_{0}\Big[\big\|\bm{\theta}_{0, T} - \bm{\theta}^{\ast}_0\big\|_2\Big] + B \quad \quad \quad \quad \quad \quad \quad \quad \text{(recall that $\bm{\theta}_{k+1, 0} = \bm{\theta}_{k, T}$ for all $k \geq 0$)}
    \\
    \nonumber
    & \leq  \sqrt{\mathbb{E}_{0}\Big[\big\|\bm{\theta}_{0, T} - \bm{\theta}^{\ast}_0\big\|_2^2\Big]} + B
    \\
    \label{eq:85-jan15-iii}
    & \leq \sqrt{\big\|\bm{\theta}_{0, 0} - \bm{\theta}^{\ast}_0\big\|_2^2 \Big(1 - \frac{\eta \mu}{2}\Big)^{2T} + \frac{\eta \sigma^2}{\mu}} + B,
\end{flalign}
where the last step follows from \cref{eq:99-jan10}. Thus:
\begin{equation}
    \label{eq:123-jan17}
    \mathbb{E}_{0}\Big[\big\|\bm{\theta}_{1, 0} - \bm{\theta}^{\ast}_1\big\|_2\Big] \leq \big\|\bm{\theta}_{0, 0} - \bm{\theta}^{\ast}_0\big\|_2 \Big(1 - \frac{\eta \mu}{2}\Big)^{T} + \frac{\sqrt{\eta} \sigma}{\sqrt{\mu}} + B.
\end{equation}
Note that $\big\|\bm{\theta}_{0, 0} - \bm{\theta}^{\ast}_0\big\|_2 \leq \frac{\overline{R}}{3}$ and $B \leq \frac{\overline{R}}{3}$. Also when $\eta \leq \frac{\mu (\overline{R})^2}{9 \sigma^2}$, we have $\frac{\sqrt{\eta} \sigma}{\sqrt{\mu}} \leq \frac{\overline{R}}{3}$. Using all of this above, we get:
\begin{equation}
    \mathbb{E}_{0}\Big[\big\|\bm{\theta}_{1, 0} - \bm{\theta}^{\ast}_1\big\|_2\Big] \leq {\overline{R}}.
\end{equation}
So the base case is true. Now assume our claim is true up to some $k > 1$, i.e., $\mathbb{E}_{\{0,\ldots,k-1\}}\Big[\big\|\bm{\theta}_{k, 0} - \bm{\theta}^{\ast}_k\big\|_2\Big] \leq \overline{R}$. We will show that the claim is also true for $k+1$. Following similar steps as the ones to get to \cref{eq:85-jan15-iii}, we obtain:
\begin{flalign}
    \nonumber
    \mathbb{E}_{\{0,\ldots,k\}}\Big[\big\|\bm{\theta}_{k+1, 0} - \bm{\theta}^{\ast}_{k+1}\big\|_2\Big] 
    & 
    \leq \mathbb{E}_{\{0,\ldots,k\}}\Big[\big\|\bm{\theta}_{k+1, 0} - \bm{\theta}^{\ast}_{k}\big\|_2\Big] + \mathbb{E}_{\{0,\ldots,k\}}\Big[\underbrace{\big\|\bm{\theta}^{\ast}_{k} - \bm{\theta}^{\ast}_{k+1}\big\|_2}_{\leq B \leq \frac{\overline{R}}{3}}\Big]
    \\
    \nonumber
    & \leq \mathbb{E}_{\{0,\ldots,k\}}\Big[\big\|\bm{\theta}_{k, T} - \bm{\theta}^{\ast}_{k}\big\|_2\Big] + \frac{\overline{R}}{3}
    \\
    \nonumber
    & \leq \mathbb{E}_{\{0,\ldots,k\}}\Bigg[{\big\|\bm{\theta}_{k, 0} - \bm{\theta}^{\ast}_k\big\|_2} \Big(1 - \frac{\eta \mu}{2}\Big)^{T} + \frac{\sqrt{\eta} \sigma}{\sqrt{\mu}}\Bigg] + \frac{\overline{R}}{3} \quad \text{(similar to eq. \eqref{eq:123-jan17})}
    \\
    \nonumber
    & \leq \mathbb{E}_{\{0,\ldots,k\}}\Big[{\big\|\bm{\theta}_{k, 0} - \bm{\theta}^{\ast}_k\big\|_2}\Big] \Big(1 - \frac{\eta \mu}{2}\Big)^{T} + \underbrace{\frac{\sqrt{\eta} \sigma}{\sqrt{\mu}}}_{\leq \frac{\overline{R}}{3}} +  \frac{\overline{R}}{3}
    \\
    \nonumber
    & = \underbrace{\mathbb{E}_{\{0,\ldots,k-1\}}\Big[{\big\|\bm{\theta}_{k, 0} - \bm{\theta}^{\ast}_k\big\|_2}\Big]}_{\leq \overline{R}} \Big(1 - \frac{\eta \mu}{2}\Big)^{T} + \frac{2 \overline{R}}{3},
\end{flalign}
where the last step follows because $\big\|\bm{\theta}_{k, 0} - \bm{\theta}^{\ast}_k\big\|_2$ does not depend on the randomness in the $k^\text{th}$ round. Next, using the fact that $\big(1 - \frac{\eta \mu}{2}\big)^{T} \leq e^{-\frac{\eta \mu T}{2}}$ above, we get:
\begin{equation}
    \mathbb{E}_{\{0,\ldots,k\}}\Big[\big\|\bm{\theta}_{k+1, 0} - \bm{\theta}^{\ast}_{k+1}\big\|_2\Big] \leq \overline{R} \Bigg(e^{-\frac{\eta \mu T}{2}} + \frac{2}{3}\Bigg).
\end{equation}
Note that for $\eta \geq \frac{\log 9}{\mu T}$, we have $e^{-\frac{\eta \mu T}{2}} \leq \frac{1}{3}$ and thus, $\mathbb{E}_{\{0,\ldots,k\}}\Big[\big\|\bm{\theta}_{k+1, 0} - \bm{\theta}^{\ast}_{k+1}\big\|_2\Big] \leq \overline{R}$. This completes the induction step and finishes the proof.
\end{proof}

\clearpage

\section{A Practically-Usable Hessian Approximation}
\label{app:approx}
Here we discuss an approximate Hessian $\bm{H}_k$ that can be used in practice in Algorithms \ref{alg:practical} and \ref{alg:stoc} (as well as related algorithms). We propose to use the isotropic approximation $\bm{H}_k = \gamma_k \textbf{I}$, where $\gamma_k > 0$ is appropriately chosen. Note that with this choice, the update rule of $\bm{u}_{k,t}^{(j)}$ becomes:
\begin{equation}
    \bm{u}_{k, t+1}^{(j)} = \big(1 - \eta \gamma_k\big) \bm{u}_{k,t}^{(j)}  + \nabla \ell_j\big(\bm{\theta}_{k,t}\big).
\end{equation}
Observe that this is essentially a \textit{momentum-like update} for each domain. Notably, this gives us an approximation for the hypergradient using a simple running average that can be efficiently computed without any matrix-vector products. 

The simplest version of the above scheme is using a constant $\gamma_k$, i.e., setting $\gamma_k = \gamma$, for all $k$.

\section{Remaining Details About Experiments in Section~\ref{sec:experiment}} \label{sec:appendix_experiments}
\textbf{Model Architecture.} We train a lightweight convolutional network (CNN) which consists of two $3\times 3$ convolution layers with ReLU activations and max pooling (channel width $c$ followed by $2c$), followed by a two-layer MLP head ($128$ hidden units) and a $10$-way linear classifier. 
\\
\\
We list the tuned hyperparameters for \Cref{alg:practical} in Table~\ref{tab:sweep-config}. 

\begin{table}[!htb]
\caption{Hyperparameters for \Cref{alg:practical}.}
\vspace{0.1 cm}
\centering
\small
\setlength{\tabcolsep}{8pt}
\renewcommand{\arraystretch}{1.15}
\begin{tabular}{ll}
\hline
\textbf{Category} & \textbf{Value} \\
\hline
\hline
Inner optimizer &
SGD step size $\eta = 0.05$ \\
Outer (weights) update &
Mirror descent step size $\alpha = 0.5$ \\
Hessian approximation parameter &
$\gamma = 0.01$ (recall $\bm{H}_k = \gamma \textbf{I}$) \\
\hline
Batch size &
$\texttt{batch\_size}=256$ \\
Train subset size &
$\texttt{train\_subset}=12000$ examples per domain \\
Validation subset size &
$\texttt{val\_subset}=3000$ examples \\
Channel width &
$\texttt{channel\_width}=32$ (small CNN backbone) \\
\hline
\end{tabular}
\label{tab:sweep-config}
\end{table}

\noindent We plot a magnified version of the plots in \Cref{fig:alphaRotatedLossAndW} as well as the corresponding validation accuracies in \Cref{fig:alphaRotatedLossAndW_full}.

\begin{figure*}[t]
\centering

\begin{subfigure}[t]{0.49\textwidth}
\centering
\includegraphics[width=\linewidth]{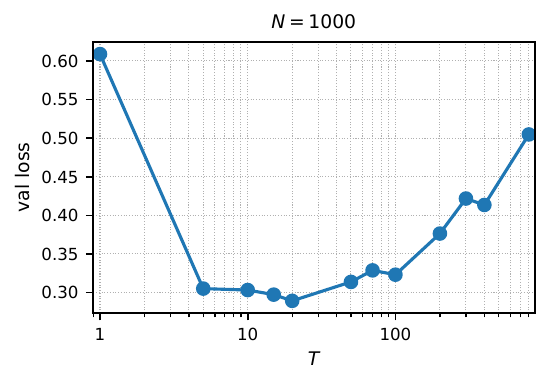}
\caption{Validation loss vs. $T$}
\label{fig:val_loss_1000_full}
\end{subfigure}\hfill
\begin{subfigure}[t]{0.49\textwidth}
\centering
\includegraphics[width=\linewidth]{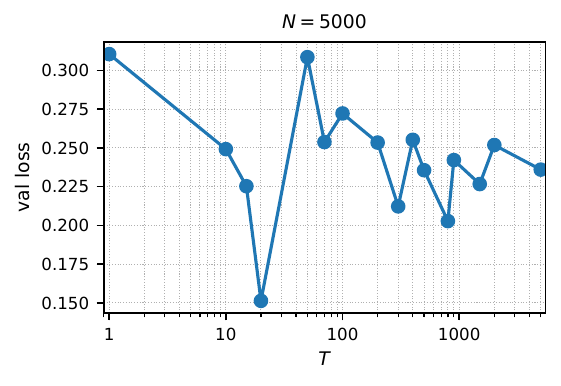}
\caption{Validation loss vs. $T$}
\label{fig:val_loss_5000_full}
\end{subfigure}

\vspace{0.6em}

\begin{subfigure}[t]{0.49\textwidth}
\centering
\includegraphics[width=\linewidth]{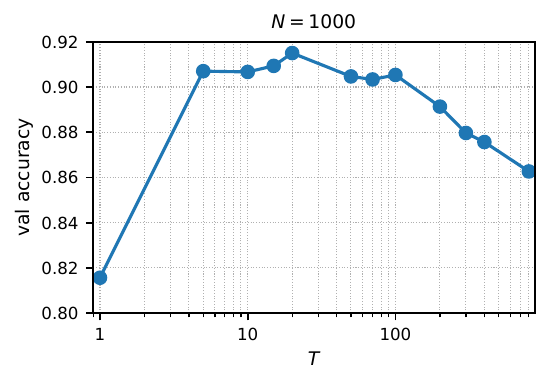}
\caption{Validation accuracy vs. $T$}
\label{fig:val_acc_1000_full}
\end{subfigure}\hfill
\begin{subfigure}[t]{0.49\textwidth}
\centering
\includegraphics[width=\linewidth]{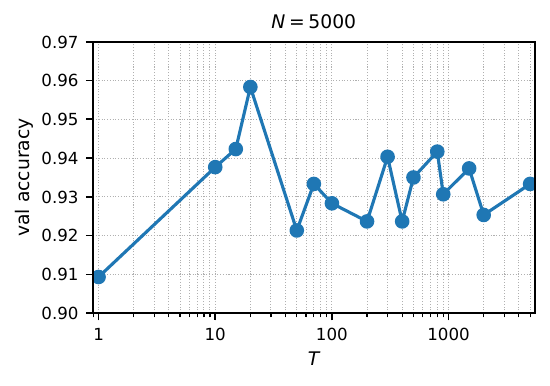}
\caption{Validation accuracy vs. $T$}
\label{fig:val_acc_5000_full}
\end{subfigure}

\vspace{0.6em}

\begin{subfigure}[t]{0.49\textwidth}
\centering
\includegraphics[width=\linewidth]{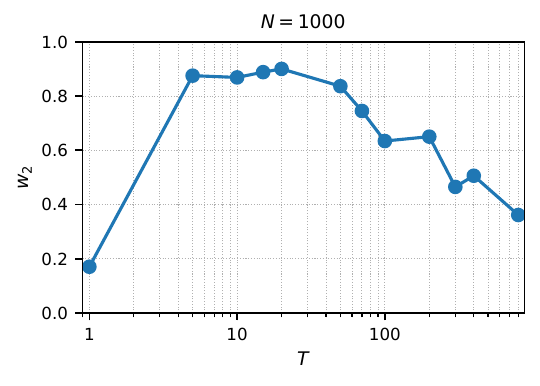}
\caption{$w_2$ vs. $T$}
\label{fig:w_2_1000}
\end{subfigure}\hfill
\begin{subfigure}[t]{0.49\textwidth}
\centering
\includegraphics[width=\linewidth]{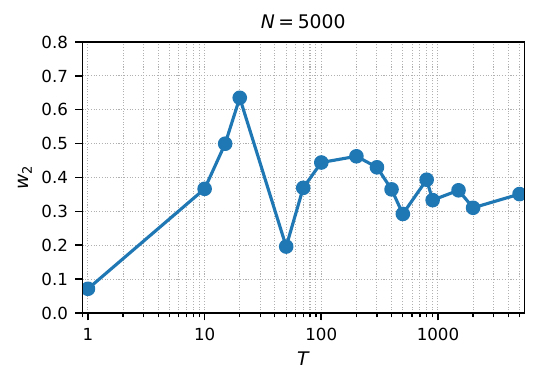}
\caption{$w_2$ vs. $T$}
\label{fig:w_2_5000}
\end{subfigure}

\caption{%
Validation loss, validation accuracy, and the weight of the second domain (most aligned with validation data) as a function of the horizon $T$, for $N = 1000$ and $5000$.
}
\label{fig:alphaRotatedLossAndW_full}
\end{figure*}
\clearpage

\end{document}